\documentclass{article}

\usepackage[utf8]{inputenc} %
\usepackage[T1]{fontenc}    %
\usepackage[colorlinks=true, allcolors=blue]{hyperref}       %
\usepackage{url}            %
\usepackage{booktabs}       %
\usepackage{amsfonts}       %
\usepackage{nicefrac}       %
\usepackage{microtype}      %
\usepackage{xcolor}         %
\usepackage{geometry}
\usepackage{amsmath, amsthm, amsfonts, amssymb}
\usepackage{appendix}
\usepackage{parskip}

\PassOptionsToPackage{hypertexnames=false}{hyperref}  %
\usepackage{parskip}

\usepackage{microtype}
\usepackage{hhline}

\makeatletter
\newcommand{\neutralize}[1]{\expandafter\let\csname c@#1\endcsname\count@}
\makeatother

\usepackage{algorithm}

\usepackage{natbib}
\bibliographystyle{plainnat}
\bibpunct{(}{)}{;}{a}{,}{,}

\usepackage{amsthm}
\usepackage{mathtools}
\usepackage{amsmath}
\usepackage{bbm}
\usepackage{amsfonts}
\usepackage{amssymb}

\usepackage{xpatch}

\usepackage{thmtools}
\usepackage{thm-restate}
\declaretheorem[name=Theorem,parent=section]{theorem}
\declaretheorem[name=Lemma,parent=section]{lemma}
\declaretheorem[name=Assumption, parent=section]{assumption}
\declaretheorem[name=Condition, parent=section]{condition}
\declaretheorem[qed=$\triangleleft$,name=Example,style=definition, parent=section]{example}

\declaretheorem[name=Proposition, parent=section]{proposition}

\makeatletter
  \renewenvironment{proof}[1][Proof]%
  {%
   \par\noindent{\bfseries\upshape {#1.}\ }%
  }%
  {\qed\newline}
  \makeatother

\theoremstyle{definition}  %

\newtheorem{corollary}{Corollary}[section]

\theoremstyle{plain}
\newtheorem{definition}{Definition}[section]

\xpatchcmd{\proof}{\itshape}{\normalfont\proofnameformat}{}{}
\newcommand{\proofnameformat}{\bfseries}

\usepackage[nameinlink,capitalize]{cleveref}

\newcommand{\pfref}[1]{Proof of \cref{#1}}
\newcommand{\savehyperref}[2]{\texorpdfstring{\hyperref[#1]{#2}}{#2}}

\Crefformat{figure}{#2Figure #1#3}
\Crefname{assumption}{Assumption}{Assumptions}
\Crefformat{assumption}{#2Assumption #1#3}

\usepackage{crossreftools}
\pdfstringdefDisableCommands{%
    \let\Cref\crtCref
    \let\cref\crtcref
}

\usepackage{prettyref}
\newcommand{\pref}[1]{\prettyref{#1}}
\newrefformat{eq}{\savehyperref{#1}{\textup{(\ref*{#1})}}}
\newrefformat{eqn}{\savehyperref{#1}{Equation~\ref*{#1}}}
\newrefformat{lem}{\savehyperref{#1}{Lemma~\ref*{#1}}}
\newrefformat{def}{\savehyperref{#1}{Definition~\ref*{#1}}}
\newrefformat{line}{\savehyperref{#1}{line~\ref*{#1}}}
\newrefformat{thm}{\savehyperref{#1}{Theorem~\ref*{#1}}}
\newrefformat{corr}{\savehyperref{#1}{Corollary~\ref*{#1}}}
\newrefformat{cor}{\savehyperref{#1}{Corollary~\ref*{#1}}}
\newrefformat{sec}{\savehyperref{#1}{Section~\ref*{#1}}}
\newrefformat{app}{\savehyperref{#1}{Appendix~\ref*{#1}}}
\newrefformat{ass}{\savehyperref{#1}{Assumption~\ref*{#1}}}
\newrefformat{ex}{\savehyperref{#1}{Example~\ref*{#1}}}
\newrefformat{fig}{\savehyperref{#1}{Figure~\ref*{#1}}}
\newrefformat{alg}{\savehyperref{#1}{Algorithm~\ref*{#1}}}
\newrefformat{rem}{\savehyperref{#1}{Remark~\ref*{#1}}}
\newrefformat{conj}{\savehyperref{#1}{Conjecture~\ref*{#1}}}
\newrefformat{prop}{\savehyperref{#1}{Proposition~\ref*{#1}}}
\newrefformat{proto}{\savehyperref{#1}{Protocol~\ref*{#1}}}
\newrefformat{prob}{\savehyperref{#1}{Problem~\ref*{#1}}}
\newrefformat{claim}{\savehyperref{#1}{Claim~\ref*{#1}}}

\usepackage{xparse}

\ExplSyntaxOn
\DeclareDocumentCommand{\XDeclarePairedDelimiter}{mm}
 {
  \__egreg_delimiter_clear_keys: %
  \keys_set:nn { egreg/delimiters } { #2 }
  \use:x %
   {
    \exp_not:n {\NewDocumentCommand{#1}{sO{}m} }
     {
      \exp_not:n { \IfBooleanTF{##1} }
       {
        \exp_not:N \egreg_paired_delimiter_expand:nnnn
         { \exp_not:V \l_egreg_delimiter_left_tl }
         { \exp_not:V \l_egreg_delimiter_right_tl }
         { \exp_not:n { ##3 } }
         { \exp_not:V \l_egreg_delimiter_subscript_tl }
       }
       {
        \exp_not:N \egreg_paired_delimiter_fixed:nnnnn 
         { \exp_not:n { ##2 } }
         { \exp_not:V \l_egreg_delimiter_left_tl }
         { \exp_not:V \l_egreg_delimiter_right_tl }
         { \exp_not:n { ##3 } }
         { \exp_not:V \l_egreg_delimiter_subscript_tl }
       }
     }
   }
 }

\keys_define:nn { egreg/delimiters }
 {
  left      .tl_set:N = \l_egreg_delimiter_left_tl,
  right     .tl_set:N = \l_egreg_delimiter_right_tl,
  subscript .tl_set:N = \l_egreg_delimiter_subscript_tl,
 }

\cs_new_protected:Npn \__egreg_delimiter_clear_keys:
 {
  \keys_set:nn { egreg/delimiters } { left=.,right=.,subscript={} }
 }

\cs_new_protected:Npn \egreg_paired_delimiter_expand:nnnn #1 #2 #3 #4
 {%
  \mathopen{}
  \mathclose\c_group_begin_token
   \left#1
   #3
   \group_insert_after:N \c_group_end_token
   \right#2
   \tl_if_empty:nF {#4} { \c_math_subscript_token {#4} }
 }
\cs_new_protected:Npn \egreg_paired_delimiter_fixed:nnnnn #1 #2 #3 #4 #5
 {
  \mathopen{#1#2}#4\mathclose{#1#3}
  \tl_if_empty:nF {#5} { \c_math_subscript_token {#5} }
 }
\ExplSyntaxOff

\XDeclarePairedDelimiter{\supnorm}{
  left=\lVert,
  right=\rVert,
  subscript=\infty
  }

\usepackage{framed}
\usepackage{comment}
\usepackage[noend]{algpseudocode}
\usepackage{verbatim}
\usepackage{transparent}
\usepackage{xargs}
\usepackage{enumitem}
\usepackage{xspace}
\usepackage{tikz-cd}
\usepackage{lipsum}
\usepackage{adjustbox}

\usepackage{prettyref}
\renewcommand{\pref}[1]{\prettyref{#1}}
\newrefformat{eq}{\savehyperref{#1}{Eq.~\textup{(\ref*{#1})}}}
\newrefformat{eqn}{\savehyperref{#1}{Equation~\ref*{#1}}}
\newrefformat{lem}{\savehyperref{#1}{Lemma~\ref*{#1}}}
\newrefformat{lemma}{\savehyperref{#1}{Lemma~\ref*{#1}}}
\newrefformat{def}{\savehyperref{#1}{Definition~\ref*{#1}}}
\newrefformat{line}{\savehyperref{#1}{Line~\ref*{#1}}}
\newrefformat{thm}{\savehyperref{#1}{Theorem~\ref*{#1}}}
\newrefformat{corr}{\savehyperref{#1}{Corollary~\ref*{#1}}}
\newrefformat{cor}{\savehyperref{#1}{Corollary~\ref*{#1}}}
\newrefformat{sec}{\savehyperref{#1}{Section~\ref*{#1}}}
\newrefformat{subsec}{\savehyperref{#1}{Section~\ref*{#1}}}
\newrefformat{app}{\savehyperref{#1}{Appendix~\ref*{#1}}}
\newrefformat{assum}{\savehyperref{#1}{Assumption~\ref*{#1}}}
\newrefformat{ex}{\savehyperref{#1}{Example~\ref*{#1}}}
\newrefformat{fig}{\savehyperref{#1}{Figure~\ref*{#1}}}
\newrefformat{alg}{\savehyperref{#1}{Algorithm~\ref*{#1}}}
\newrefformat{rem}{\savehyperref{#1}{Remark~\ref*{#1}}}
\newrefformat{conj}{\savehyperref{#1}{Conjecture~\ref*{#1}}}
\newrefformat{prop}{\savehyperref{#1}{Proposition~\ref*{#1}}}
\newrefformat{proto}{\savehyperref{#1}{Protocol~\ref*{#1}}}
\newrefformat{prob}{\savehyperref{#1}{Problem~\ref*{#1}}}
\newrefformat{claim}{\savehyperref{#1}{Claim~\ref*{#1}}}
\newrefformat{que}{\savehyperref{#1}{Question~\ref*{#1}}}
\newrefformat{op}{\savehyperref{#1}{Open Problem~\ref*{#1}}}
\newrefformat{fn}{\savehyperref{#1}{Footnote~\ref*{#1}}}
\newrefformat{tab}{\savehyperref{#1}{Table~\ref*{#1}}}
\newrefformat{fig}{\savehyperref{#1}{Figure~\ref*{#1}}}
\newrefformat{proc}{\savehyperref{#1}{Procedure~\ref*{#1}}}
\newrefformat{property}{\savehyperref{#1}{Property~\ref*{#1}}}

\newcommand{\cA}{\mathcal{A}}

\newcommand{\cC}{\mathcal{C}}
\newcommand{\cD}{\mathcal{D}}
\newcommand{\cE}{\mathcal{E}}

\newcommand{\cJ}{\mathcal{J}}

\newcommand{\cM}{\mathcal{M}}

\newcommand{\cS}{\mathcal{S}}
\newcommand{\cT}{\mathcal{T}}

\newcommand{\cX}{\mathcal{X}}

\newcommand{\bE}{\mathbb{E}}

\newcommand{\bP}{\mathbb{P}}
\newcommand{\bQ}{\mathbb{Q}}
\newcommand{\bR}{\mathbb{R}}

\newcommand{\En}{\mathbb{E}}

\newcommand{\vd}{\mathbf{d}}

\newcommand{\wt}[1]{\widetilde{#1}}
\newcommand{\wh}[1]{\widehat{#1}}
\newcommand{\wb}[1]{\widebar{#1}}

\newcommand{\veps}{\varepsilon}
\DeclareMathOperator*{\argmax}{arg\,max}

\newcommand{\ldef}{\vcentcolon=}

\newcommand{\Dhel}[2]{D_{\mathrm{H}}\prn*{#1,#2}}
\newcommand{\Dhels}[2]{D^{2}_{\mathrm{H}}\prn*{#1,#2}}

\newcommand{\poly}{\mathrm{poly}}
\newcommand{\Reg}{\mathrm{Reg}}

\DeclarePairedDelimiter{\abs}{\lvert}{\rvert} %
\DeclarePairedDelimiter{\brk}{[}{]}

\DeclarePairedDelimiter{\prn}{(}{)}

\DeclarePairedDelimiter{\inner}{\langle}{\rangle}
\DeclarePairedDelimiter{\set}{\{}{\}}

\def\medskip{\vskip 10 pt}
\def\bigskip{\vskip 15 pt}

\newcommand{\multiline}[1]{\parbox[t]{\dimexpr\linewidth-\algorithmicindent}{#1}}
\newcommand{\revindent}[1][1]{\hspace{#1in}&\hspace{-#1in}}

\def\texitem#1{\par\vspace{5pt}
\noindent\hangindent 20pt
\hbox to 20pt {\hss #1 ~}\ignorespaces}

\newcommand{\tto}{\Rightarrow}

\newcommand{\Mhat}{\wh{M}}
\newcommandx{\Mhatm}[1][1=m]{\wh{M}_{#1}}

\newcommand{\reg}{\mathrm{reg}}
\newcommand{\reghat}{\wh{\reg}}
\newcommand{\idx}[1]{_{{#1}}}
\newcommand{\ind}[1]{^{{#1}}}
\newcommand{\ocm}{d} %
\newcommand{\PiRNS}{\Pi_{\mathrm{RNS}}}

\newcommandx{\ocmm}[2][1=m,2=h]{\ocm\idx{#1}\ind{#2}}
\newcommandx{\ocmhat}[2][1=m,2=h]{\wh{\ocm}\idx{#1}\ind{#2}}
\newcommandx{\ocmtil}[2][1=m,2=h]{\wt{\ocm}\idx{#1}\ind{#2}}
\newcommandx{\gammam}[1][1=m]{\gamma\idx{#1}}
\newcommandx{\etam}[1][1=m]{\eta\idx{#1}}
\newcommandx{\lambdam}[3][1=m,2=h,3=c]{\lambda\idx{#1,#3}\ind{#2}}
\newcommandx{\reghatm}[1][1=m]{\reghat\idx{#1}}
\newcommandx{\metapol}[2][1=m,2=h]{p\idx{#1}\ind{#2}}
\newcommandx{\polCov}[2][1=m,2=h]{\Pi\idx{#1}\ind{#2}}
\newcommandx{\Phatm}[2][1=m,2=h]{\Phat\idx{#1}\ind{#2}}
\newcommandx{\zetam}[1][1=m]{\zeta\idx{#1}}

\newcommandx{\Ttil}[2][1=m,2=h]{\wt{\cT}\idx{#1}\ind{#2}}

\newcommandx{\pol}[5][1=m,2=c,3=h,4=s,5=a]{\pi\idx{#1,#2}\ind{#3,#4,#5}}
\newcommandx{\xim}[1][1=m]{\xi\idx{#1}}

\newcommand{\Vhat}{\wh{V}}
\newcommandx{\Vhatm}[2][1=m,2=1]{\Vhat\idx{#1}\ind{#2}}
\newcommandx{\Vstar}[1][1=1]{V_\star\ind{#1}}
\newcommandx{\Vm}[2][1=M,2=1]{V_{#1}\ind{#2}}
\newcommandx{\pistar}[1][1=c]{\pi_{\star,#1}}
\newcommand{\Phat}{\wh{P}}
\newcommand{\Rhat}{\wh{R}}

\tikzcdset{multi/.style 2 args={phantom, "#1"{name=#2, inner sep=1ex}}}

\newcommandx{\Rhatm}[2][1=m,2=h]{\Rhat_{#1}^{#2}}

\newcommand{\Mbar}{\wb{M}}

\newcommandx{\Dcmdp}[6][1=c,2=h,3=s,4=a]{D_{#2,#3,#4;#1}\prn*{#5,#6}}
\newcommandx{\Dcmdps}[6][1=c,2=h,3=s,4=a]{D_{#2,#3,#4;#1}^2\prn*{#5,#6}}

\newcommandx{\pim}[2][1=m,2=c]{\pi_{#1,#2}}
\newcommandx{\pimhat}[2][1=m,2=c]{\wh{\pi}\idx{#1,#2}}

\newcommandx{\muhat}{\wh{\mu}}
\newcommandx{\muhatm}[2][1=m,2=h]{\muhat\idx{#1}\ind{#2}}

\newcommandx{\thetahat}{\wh{\theta}}
\newcommandx{\thetahatm}[2][1=m,2=h]{\thetahat\idx{#1}\ind{#2}}

\newcommandx{\Gammam}[2][1=m,2=h]{\Gamma\idx{#1}\ind{#2}}
\newcommandx{\Sigmam}[2][1=m,2=h]{\Sigma\idx{#1}\ind{#2}}

\newcommand{\mainalgFull}{Layerwise pOLicy cover Inverse gaP weighting with trusted OccuPancy measures\xspace}
\newcommand{\mainalg}{LOLIPOP\xspace}

\newcommandx{\mask}[2][2=\veps]{\brk*{#1}_{#2}}

\newcommandx{\Stil}[2][1=m,2=h]{\wt{\cS}\idx{#1}\ind{#2}}
\newcommandx{\feat}{\phi}
\newcommandx{\featm}[2][1=m,2=h]{\phi\idx{#1}\ind{#2}}
\newcommandx{\chim}[2][1=m,2=h]{\chi\idx{#1}\ind{#2}}

\newcommandx{\chitil}[2][1=m,2=h]{\wt{\chi}\idx{#1}\ind{#2}}
\newcommandx{\ellm}[2][1=m,2=h]{\ell\idx{#1}\ind{#2}}
\newcommandx{\ellmsa}[4][1=m,2=h,3=s,4=a]{\ell\idx{#1,#3,#4}\ind{#2}}
\newcommandx{\Lambdatil}[2][1=m,2=h]{\wt{\Lambda}\idx{#1}\ind{#2}}
\newcommandx{\Lambdatilsa}[4][1=m,2=h,3=s,4=a]{\wt{\Lambda}\idx{#1,#3,#4}\ind{#2}}
\newcommandx{\Lambdam}[2][1=m,2=h]{\Lambda\idx{#1}\ind{#2}}
\newcommandx{\Lambdamsa}[4][1=m,2=h,3=s,4=a]{\Lambda\idx{#1,#3,#4}\ind{#2}}
\newcommandx{\Lambdahat}[2][1=m,2=h]{\wh{\Lambda}\idx{#1}\ind{#2}}
\newcommandx{\chihat}[2][1=m,2=h]{\wt{\chi}\idx{#1}\ind{#2}}

\newcommandx{\mustar}[1][1=h]{\mu\idx{\star}\ind{#1}}
\newcommandx{\muh}{\mustar}
\newcommand{\alg}{\mathsf{Alg}}

\input{style_macro/widebar}
\usepackage{color-edits}
 \addauthor{jq}{yellow!60!black}

\let\oldparagraph\paragraph
\renewcommand{\paragraph}[1]{\oldparagraph{#1.}}

\title{Offline Oracle-Efficient Learning for Contextual MDPs \looseness=-1\\via Layerwise Exploration-Exploitation Tradeoff}

\author{%
  Jian Qian\\
{\small\texttt{jianqian@mit.edu}}
\and
Haichen Hu\\
{\small\texttt{huhc@mit.edu}}
\and
David Simchi-Levi\\
{\small\texttt{dslevi@mit.edu}}
}
\date{}

\begin{document}

\maketitle

\begin{abstract}
    
Motivated by the recent discovery of a statistical and computational reduction from contextual bandits to offline regression \citep{simchi2020bypassing}, we address the general (stochastic) Contextual Markov Decision Process (CMDP) problem with horizon $H$ (as known as CMDP with $H$ layers). In this paper, we introduce a reduction from CMDPs to offline density estimation under the realizability assumption, i.e., a model class $\cM$ containing the true underlying CMDP is provided in advance. We develop an efficient, statistically near-optimal algorithm requiring only $O(H \log T)$ calls to an offline density estimation algorithm (or oracle) across all $T$ rounds of interaction. This number can be further reduced to $O(H \log \log T)$ if $T$ is known in advance. Our results mark the first efficient and near-optimal reduction from CMDPs to offline density estimation without imposing any structural assumptions on the model class. A notable feature of our algorithm is the design of a layerwise exploration-exploitation tradeoff tailored to address the layerwise structure of CMDPs. Additionally, our algorithm is versatile and applicable to pure exploration tasks in reward-free reinforcement learning.

\end{abstract}

\section{Introduction}
\label{sec:intro}

Markov Decision Processes (MDPs) model the long-term interaction between a learner and the environment and are used in diverse areas such as inventory management, recommendation systems, advertising, and healthcare \citep{puterman2014markov,sutton2018reinforcement}. The Contextual MDP (CMDP) extends MDPs by incorporating external factors, known as \emph{contexts}, such as gender, age, location, or device in customer interactions, or lab data and medical history in healthcare \citep{hallak2015contextual,modi2018markov}.
In an $H$-layer CMDP, the learner receives an instantaneous reward at each step over $H$ steps and aims to maximize the cumulative reward (return). For $T$ rounds of interaction, the learner's performance is measured by \emph{regret}, which is the difference between the total return obtained and that of an optimal policy.

In the special case of contextual bandits (one-layer CMDPs), a decade of research has led to algorithms with optimal regret bounds and efficient implementations with access to an offline regression algorithm (also termed as an \emph{offline regression oracle})~\citep{dudik2011efficient,agarwal2012contextual,agarwal2014taming,foster2018practical,foster2020beyond,simchi2020bypassing,xu2020upper}.
Most notably, \citet{simchi2020bypassing} demonstrates an offline-oracle-based algorithm FALCON that achieves optimal regret for general (stochastic) contextual bandits with access to an offline regression oracle (e.g., the Empirical Risk Minimization (ERM) oracle). Moreover, the algorithm is efficient given the output of the offline oracle (also referred to as offline oracle-efficient) and requires
only $O(\log \log T)$ calls to the oracle across all $T$ rounds if $T$ is known. 
These properties are highly desirable in practice since they reduce the computational problem of contextual bandits to the classical problem of offline regression with little overhead.  
However, to the best of our knowledge, no algorithm with these properties is available in the literature for general (stochastic) CMDPs. So, in this paper, we study the following question: 
\textit{Is there an offline-oracle-efficient 
algorithm 
that achieves the optimal regret 
for general (stochastic) CMDPs with only $O(H\log\log T)$ number of oracle calls?}

Several works have provided partial results for this question.
\citet{foster2021statistical} provides a general reduction from interactive decision making to online density estimation and has CMDP as an application. The proposed \textrm{E2D} algorithm achieves optimal regret but is online-oracle-efficient (as opposed to offline-oracle-efficient) since it requires access to an online density estimation algorithm.
\citet{foster2024online} provides a further reduction from online density estimation to offline density estimation, with the caveat that the reduction itself is inefficient. 
A separate thread of optimism-based algorithms for CMDPs extending the UCCB algorithm for contextual bandits \citep{xu2020upper} is studied by \citet{levy2023optimism,deng2024sample} with either assumption on the reachability of the CMDP or the representation structure of the CMDP (see \cref{app:related-works} for more details).
Last but not least, the algorithms proposed by \citet{foster2021statistical,foster2024online,levy2023optimism,deng2024sample} all require $O(T)$ number of oracle calls to the online/offline oracle respectively.

\begin{table}[!tp]
\caption{Algorithms' performance with general finite model class $\cM$ and i.i.d. contexts. The optimal rate here refers to $\wt{O}(\poly(H,S,A)\sqrt{T\log |\cM|})$.
All algorithms assume realizability, so it is omitted from the table. The reachability assumption and the varying representation assumption are very stringent for tabular CMDP, for details we refer to \cref{app:related-works}.}
\centering
\begin{tabular}{cccc}
\hline
Algorithm & Regret rate & Computational complexity                                                                             & Assumption             \\ \hline
E2D \cite{foster2021statistical}      & Optimal     & $O(T)$ calls to an online oracle                                                                     & No                     \\
RM-UCDD \cite{levy2023optimism}   & Suboptimal  & $O(T)$ calls to an offline oracle                                                                    & Reachability           \\
CMDP-VR \cite{deng2024sample}  & Optimal     & $O(T)$ calls to an offline oracle                                                                    & Varying Rep. \\
LOLIPOP (\textbf{this work})  & Optimal     & \begin{tabular}[c]{@{}c@{}}$O(\log T)$ or $O(\log\log T)$ \\ calls to an offline oracle\end{tabular} & No                     \\ \hline
\end{tabular}
\end{table}

In this work, we present an affirmative answer to the question by introducing the algorithm of \mainalg (\cref{alg:mainalg}). For $S$ number of states, $A$ number of actions, and a given model class $\cM$ where the underlying true model lies, the algorithm achieves the regret guarantee of 
$\wt{O}(\poly(H, S, A)\sqrt{T\log |\cM|})$.
This regret guarantee is minimax optimal up to $\poly(H, S, A)$ factor \citep{levy2023optimism}. The \mainalg algorithm assumes access to a Maximum Likelihood Estimation (MLE) oracle and is offline-oracle efficient. The results can be generalized to general offline density estimation oracles.
The most notable technical features are: (1) It generalizes the FALCON algorithm by \citet{simchi2020bypassing} to adapt to the multi-layer 
structure of a CMDP. 
More specifically, the FALCON algorithm is divided into $O(\log\log T)$ epochs, each corresponding to an oracle call, a fixed randomized policy. 
However, it is known for the MDPs that the learner has to switch its randomized policy at least $\wt{\Omega}(H)$ times to achieve sublinear regret \cite{zhang2022near}. 
Indeed, we further divide each epoch into $H$ segments, each with an oracle call, a new randomized policy for layerwise exploration-exploitation tradeoff. 
(2) In each segment, the exploration-exploitation tradeoff is done through Inverse Gap Weighting (IGW) on estimated regret for a set of explorative policies. 
The idea of running IGW on such a policy cover is proposed by \citet{foster2021statistical}. However, their policy cover is designed for $H$-layer exploration-exploration tradeoff and only works with strong online estimation oracles. In contrast, we refine the estimation of the occupancy measure layerwise by introducing the \emph{trusted occupancy measures}. This refinement enables our algorithm to work with offline oracles.
(3) Many other policy cover-based methods \citep{du2019provably,mhammedi2023representation,mhammedi2024efficient,amortila2024scalable} are developed for exploration tasks. Most notably, \citet{mhammedi2023representation} clips the occupancy measures on states with low reachability. Our approach takes a step forward to clip all transitions with low reachability to compute the trusted occupancy measures. \looseness=-1

Besides all the above novelties, the \mainalg algorithm is versatile and applicable to the pure exploration task of reward-free reinforcement learning for CMDPs. Concretely, it obtains near-optimal sample complexity of $O\prn*{  H^7S^4A^3\log (|\cM|/\delta)/\veps^2  }$ with only $O(H)$ number of oracle calls. Both the sample complexity bound and computational efficiency result for reward-free learning for stochastic CMDPs are new to the best of our knowledge.

\section{Preliminaries}
\label{sec:prelim}

\subsection{Problem Setup}
\label{sec:problem-setup}
\newcommand{\Mstar}{M_\star}
\newcommand{\OffDE}{\mathrm{OffDE}_\cM}
\newcommandx{\Pstar}[1][1=h]{P_\star\ind{#1}}
\newcommandx{\Rstar}[1][1=h]{R_\star\ind{#1}}
\newcommandx{\Qstar}[1][1=h]{Q_\star\ind{#1}}
\newcommandx{\Enmpic}[3][1=M,2=\pi,3=c]{\En^{#1,#2,#3}}
\newcommandx{\Pmpic}[3][1=M,2=\pi,3=c]{\bP^{#1,#2,#3}}
\newcommandx{\EnmpicR}[4][1=M,2=\pi,3=c,4=R]{\En^{#1,#2,#3,#4}}
\newcommandx{\PmpicR}[4][1=M,2=\pi,3=c,4=R]{\bP^{#1,#2,#3,#4}}
\newcommandx{\Pm}[1][1=M]{P_{#1}}
\newcommandx{\Rm}[1][1=M]{R_{#1}}

A Contextual Markovian Decision Process (CMDP) is defined by the tuple $(\mathcal{C}, M =\lbrace M(c)\rbrace_{c\in \cC},\mathcal{S},\mathcal{A},s^1)$, where $\cC$ is the contextual space, $\mathcal{S}$ is the state space, $\mathcal{A}$ is the action space and $s^1\in\cS$ is a fixed starting state independent of the context. We focus on tabular CMDPs which assumes $S=|\mathcal{S}|,A=|\mathcal{A}|< \infty$.
For any context $c\in\mathcal{C}$, $M(c)= \lbrace \Pm^{h}(c), \Rm^h(c)\rbrace_{h=1}^{H}$ consists of $H$-layers of probability transition kernel $\set{ \Pm^{h}(c)}_{h=1}^H$ and reward distributions $\set{ \Rm^{h}(c)}_{h=1}^H$, where $\Pm^h(c)$ and $\Rm^h(c)$ are specified by $\Pm^{h}(\cdot \mid{}s,a;c)\in\Delta(\cS)$ and $\Rm^h(s,a;c)\in\Delta([0,1])$ for all $h\in [H]$ and $s,a\in \cS\times\cA$.
For simplicity, we also denote $M =  \lbrace \Pm^{h}, \Rm^h\rbrace_{h=1}^{H}$, where $\Pm^{h} = \set{\Pm^{h}(c)}_{c\in \cC}$ and $\Rm^{h} = \set{\Rm^{h}(c)}_{c\in \cC}$.
Let $\PiRNS$ denote the set of all randomized, non-stationary policies, where any $\pi= (\pi\ind{1},\dots,\pi\ind{H}) \in \PiRNS$ has $\pi^h:\cS\to \Delta(\cA)$ for any $h\in [H]$.  
We use $T$ to denote the total number of rounds and $H$ to denote the horizon (the total number of layers). Let $\Mstar = \lbrace\Pstar, \Rstar\rbrace_{h\in [H]}$ be the underlying true CMDP the learner interacts with.
The interactive protocal proceeds in $T$ rounds, where for each round $t$, the $t$-th trajectory is generated as:
\vspace{-5pt}
\begin{enumerate}[label=$\bullet$,leftmargin=5mm]
    \setlength{\itemsep}{0pt}
    \setlength{\parsep}{0pt}
    \setlength{\parskip}{0pt}
    \item A context $c_t$ is draw i.i.d. from an unknown distribution $\cD$ and $s_t^1 = s^1$.
    \item The learner chooses the policy $\pi_t$ based on the context.
    \item For $h=1,\dots,H$:
    \begin{enumerate}[label=$\bullet$,leftmargin=5mm]
        \setlength{\itemsep}{0pt}
        \setlength{\parsep}{0pt}
        \setlength{\parskip}{0pt}
        \item The action is drawn from the randomized policy $a\idx{t}\ind{h}\sim \pi_t\ind{h}(s\idx{t}\ind{h})$.
        \item The reward and the next state is drawn respectively from the reward distribution and the transition kernel, i.e., $r\idx{t}\ind{h}\sim \Rstar(s\idx{t}\ind{h},a\idx{t}\ind{h};c_t)$ and $s\idx{t}^{h+1}\sim \Pstar(\cdot\mid{}s\idx{t}\ind{h},a\idx{t}\ind{h};c\idx{t})$.
    \end{enumerate}
\end{enumerate}
\vspace{-5pt}
Without lose of generality, throughout the paper, we assume that
the total reward $0\le\sum_{h=1}^{H}r^h\le 1$ almost surely.
For any model $M$, context $c$ and policy $\pi$, we use $M(\pi,c)$ to denote the distribution of the trajectory $c_1, \pi_1, s_1\ind{1}, a_1\ind{1}, r_1\ind{1}, \dots, s_1\ind{H}, a_1\ind{H}, r_1\ind{H}$ given $\Mstar = M$, $c_1=c$, and $\pi_1=\pi$. Also denote the probability and the expectation under $M(\pi,c)$ to be $\Pmpic\prn{\cdot}$ and $\Enmpic\brk{\cdot}$ respectively.
Given any policy $\pi$, state $s$ and action $a$, we define the action value function $\Qstar(s,a;\pi,c)$ at layer $h$ and the value function $\Vstar[h](s;\pi,c)$ at layer $h$ under context $c$ and policy $\pi$ as
\begin{align*}
    \textstyle \Qstar(s,a;\pi,c)=\sum\nolimits_{j=h}^{H}\Enmpic[\Mstar] \brk{r_1^{j} \mid{} s_1\ind{h},a_1\ind{h} = s,a } 
    \text{~~and~~}\Vstar[h](s;\pi,c)= \max_{a\in \cA} \Qstar(s,a;\pi,c).
\end{align*}
We denote the optimal policy under context $c$ as $\pistar$ and abbreviate its value function as $\Vstar[h](\cdot;c)$. 
For $h=1$, we further simply the notation by denoting $\Vstar(c) = \Vstar(s^1;c)$ and $\Vstar(\pi,c) = \Vstar(s^1;\pi,c)$.
The regret of policy $\pi$ under context $c$ and the total regret\footnote{The regret we defined here is conventionally known as the pseudo-regret in the literature. The conventional regret defined as $\sum\nolimits_{t=1}^{T}\reg(\pi_t,c_t)$ can be bounded by the pseudo-regret up to an additional $O(\sqrt{T\log(1/\delta)})$ term with a standard concentration argument, which we omit here for simplicity.} of the learner are defined as 
\begin{align*}
        \textstyle  \reg(\pi,c)=\Vstar(c)-\Vstar(\pi,c) \quad\text{and}\quad \Reg(T)\ldef \sum\nolimits_{t=1}^{T} \En_{t}[\reg(\pi_t,c_t)], 
\end{align*}
where $\En_t[\cdot]$ is the conditional expectation given the interaction up to round $t$.

\begin{assumption}[Realizability]
The learner is given a model class $\cM$ where the true underlying model $\Mstar$ lies, that is, $\Mstar\in \cM$.
\end{assumption}

\subsection{Offline Density Estimation Oracles}

For any model class $\cM$, a general offline density estimation oracle associated with $\cM$, denoted by $\OffDE$, is defined as an algorithm that generates a predictor $\Mhat$ based on the input data and $\cM$. 
In this paper, we measure the performance of the predictor in terms of the squared Hellinger distance, which is defined for any two distributions $\bP$ and $\bQ$ 
for any common dominating measure $\nu$\footnote{The value is independent of the choice of $\nu$.} by
$$
\textstyle
\Dhels{\bP}{\bQ} \ldef \frac{1}{2} \int \prn*{ \sqrt{\frac{d \bP}{d\nu}} - \sqrt{\frac{d \bQ}{d \nu}} }^2 d \nu,
$$
For our purpose, we are interested in the following statistical guarantee.
\begin{definition}[Offline density estimation oracle]
    \label{def:offline-DE-oracle}
Let $p$ be a map from a context to a distribution on the set of policies $\PiRNS$, that is, for any $c\in \cC$, $p(c)\in \Delta(\PiRNS)$.  Given $n$ training trajectories $(c_i,\pi_i, s_i\ind{1}, a_i\ind{1}, r_i\ind{1}, \dots, s_i\ind{H}, a_i\ind{H}, r_i\ind{H})$ i.i.d. drawn according to $ c_i \sim \cD$, $\pi_i\sim p(c_i)$ and $s_i\ind{1}, a_i\ind{1}, r_i\ind{1}, \dots, s_i\ind{H}, a_i\ind{H}, r_i\ind{H}$ be the trajectory sampled according to $\Mstar(\pi_i,c_i)$. The offline density estimation oracle $\OffDE$ returns a predictor $\Mhat$. For any $\delta\in (0,1/2)$, with probability at least $1-\delta$, we have
$$
    \textstyle 
\En_{c\sim \cD,\pi\sim p(c)}\brk*{\Dhels{\Mhat(\pi,c)}{\Mstar(\pi,c)} } \leq \cE_{\cM,\delta}(n).
$$
\end{definition}
\newcommand{\MLE}{\mathsf{MLE}_{\cM}}
The Maximum Likelihood Estimation estimator $\MLE$ is an example of an offline density estimation oracle that achieves $\cE_{\cM,\delta}(n) \lesssim \log (|\cM|/\delta)/n$ (see more details in \cref{app:technical}).

\paragraph{Addtional Notation}
For any integer $n$, we use $[n]$ to denote the set $\set{1,\dots,n}$. For any set $\cX$, we use $\Delta(\cX)$ to denote the set of all distributions on the set $\cX$. We define $O(\cdot)$, $\Omega(\cdot)$, $\Theta(\cdot)$, $\wt{O}(\cdot)$, $\wt{\Omega}(\cdot)$, $\wt{\Theta}(\cdot)$ following standard non-asymptotic big-oh notation. We use the binary relation $x \lesssim y$ to indicate that $x\leq O(y)$. $\mathbbm{1}(\cE)$ is an indicator function of event $\cE$.

\section{Related Works}
\label{app:related-works}
\paragraph{Contextual bandits and contextual MDPs} 
The SquareCB \citep{foster2020beyond} obtains optimal regret for contextual bandits with access to an online regression oracle. This is extended to the CMDPs by \citet{foster2021statistical} with the E2D algorithm. However, the algorithm requires $O(T)$ called to an online density estimation oracle. Compared to our algorithm, it necessitates significantly more calls to an oracle that is harder to implement for a general model class $\cM$.
After the FALCON algorithm \citep{simchi2020bypassing} establishes the reduction from contextual bandits to offline regression, \citet{xu2024upper} proposed the UCCB algorithm, which is less oracle-efficient in terms of oracle calls but adopts the prevalent "optimism in the face of uncertainty" principle and is thus easier to generalize. More specifically, the UCCB algorithm is extended to CMDPs by \citet{levy2023optimism,deng2024sample} with assumptions on the model class. The RM-UCDD algorithm proposed by \citet{levy2023optimism} requires the model class to have a minimum reachability $p_{\min}$ to all states under any policy and the regret guarantees scale with $O(\poly(H,S,A)\cdot (1/p_{\min}) \sqrt{T\log |\cM|})$. This assumption precludes model classes with small reachability, which frequently happens in practice \citep{agarwal2020flambe}. The CMDP-VR algorithm proposed by \citet{deng2024sample} assumes a varying representation assumption on the model class instead. The assumption asserts that any model $M = \set{P^h,R^h}_{h\in [H]} \in \cM$ satisfies for any $c,h,s,a,s'$, $P^h(s'|s,a;c) = \inner{\phi^h(s,a;c), \mu^h(s')}$ for the known feature vector $\phi^h(s,a;c)\in \bR^d$ and an unknown vector $\mu^h(s')\in \bR^d$ which does not depend on the context $c$. This assumption is stringent because canonically, the feature vector for CMDPs will be chosen to be the unit vector indexed by $s,a$ in $\bR^{SA}$, i.e., $\phi^h(s,a;c) = e_{s,a} \in \bR^{SA}$. Then, the requirement of $\mu^h(s')$ not depending on the context forces $P^h$ to not depend on the context as well. This reduces the CMDP to an MDP. While it is possible to complicate the feature vector to not reduce to an MDP, this would result in a higher dimension in the feature vector space. The increase of the feature dimension will be reflected in the regret bounds obtained $\wt{O}(\poly(H,d)\sqrt{T\log |\cM|} )$. Another significant disadvantage compared with our algorithm is that the RM-UCDD and CMDP-VR algorithm requires $O(T)$ number of oracle calls.\looseness=-1

\paragraph{Reward-free reinforcement learning} Reward-free reinforcement learning aims to efficiently explore the environment without relying on observed rewards. By doing so, it aims to enable the computation of a nearly optimal policy for any given reward function, utilizing only the trajectory data collected during exploration and without needing further interaction with the environment. This framework is proposed by \cite{jin2020reward} and has been extensively studied for MDPs with various assumptions \citep{zhang2020nearly,agarwal2020flambe,wang2020reward,zhang2022efficient,chen2022statistical,miryoosefi2022simple,wagenmaker2022regret,hu2022towards,cheng2023improved,li2023optimal,mhammedi2024efficient,amortila2024scalable}. However, to the best of our knowledge, we are the first to study the reward-free reinforcement learning setting for stochastic CMDPs. We provide a near-optimal sample complexity upper bound and a matching lower bound up to a $\poly(H,S,A)$ factor with only $O(H)$ number of oracle calls. Nevertheless, the upper bound is obtained by adjusting the exploration-exploitation, highlighting the flexibility of our algorithm.

\section{Main Results and Algorithm}
\label{sec:algorithm}
\newcommand{\Ber}{\mathrm{Ber}}
\newcommand{\pihat}{\wh{\pi}}
In this section, we present our main results and introduce the algorithm of \mainalg (\cref{alg:mainalg}). First, we give an overview of the algorithm. Then, we discuss the theoretical guarantees obtained by this algorithm. Finally, we introduce the algorithm's different components with corresponding guarantees. All proofs are deferred to \cref{app:algorithm}.

\subsection{Main Results}

\paragraph{Overview of \cref{alg:mainalg}} The algorithm proceeds with epochs. The total number of $T$ rounds is divided into $N$ epochs. For an epoch schedule $0=\tau_0<\tau_1<\dots<\tau_N =T/H$ to be specified later, the $m$-th epoch will last $H(\tau_m-\tau_{m-1})$ rounds. Furthermore, each epoch is evenly divided into $H$ segments, each consisting of $\tau_m - \tau_{m-1}$ rounds. During the $h$-th segment in $m$-th epoch, a kernel $\metapol:\cC\to \Delta(\Pi)$ will be specified to determine the policy. More specifically, upon receiving the context $c_t$, a policy $\pi_t$ will be sampled from $\metapol(c_t)$ and executed. After collecting the trajectories $\set{c_t, \pi_t}\cup\set{s_t\ind{j}, a_t\ind{j}, r_t\ind{j}}_{j\in [H]}$ in the $h$-th segment of the $m$-th epoch for 
$\tau_{m-1}H+(\tau_{m}-\tau_{m-1})(h-1)+1\leq t\leq \tau_{m-1}H+(\tau_{m}-\tau_{m-1})h$,
the offline density estimation oracle $\OffDE$ is called with these trajectories as input. Denote the output  $\Mhat\idx{m}\ind{h}$, we will only be interested in the $h$-th layer of this output, which we denote by $\lbrace\widehat{P}_{m}^{h},\widehat{R}_{m}^h\rbrace$. Then the collections of estimators $ \Mhat\idx{m} = \lbrace\widehat{P}_{m}^{h},\widehat{R}_{m}^{h}\rbrace_{h=1}^{H}$ will be used for the next epoch.

Throughout this paper, we will adopt the following convention for the free variables $m,\pi,c,h,s,a$. They will be used to denote an epoch index in $[N]$, a policy in $\PiRNS$,  a context in $\cC$,  a layer index in $[H]$,  a state in $\cS$, and an action in $\cA$ respectively.

Before we dive into the details of the algorithm, we highlight first the theoretical guarantees obtained.
\begin{theorem}
        \label{thm:loglogT}
If $T$ is known, then by choosing the epoch schedule $\tau_m = 2(T/H)^{1-2^{-m}}$ for $m\geq 1$ and the offline density estimation oracle $\OffDE = \MLE$, 
the outputs $\set{\pi_t}_{t\in [T]}$ of \cref{alg:mainalg}
    satisfies that with probability at least $1-\delta$, 
    \begin{align*}
        \Reg(T)\lesssim\sqrt{ H^7S^4A^3 T\cdot \log(|\cM|\log \log T/\delta)\log\log T}
    \end{align*}
with only $O(H\log\log T)$ number of oracle calls to the $\MLE$ oracle for $\delta\in (0,1/2)$.
\end{theorem}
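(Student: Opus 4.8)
The plan is to peel the regret epoch by epoch, bound the regret of each of the $H$ segments within an epoch via an Inverse Gap Weighting (IGW) guarantee built on top of the \emph{trusted occupancy measures}, and then collapse the per-epoch bounds using the arithmetic of the doubling schedule. I would start with the bookkeeping: solving $\tau_N=T/H$ with $\tau_m=2(T/H)^{1-2^{-m}}$ gives $(T/H)^{2^{-N}}=2$, hence $N=\log_2\log_2(T/H)=O(\log\log T)$. Since each epoch issues one oracle call per layer, the total is $NH=O(H\log\log T)$ calls, settling the computational claim.

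For the statistical claim I would reduce to a single per-segment regret bound. Fix the $h$-th segment of epoch $m$, where policies are drawn by IGW with learning rate $\gamma_m$ from a regret estimate computed through the previous epoch's model $\Mhat\idx{m-1}$. The IGW argument (in the spirit of SquareCB/FALCON, but over policies rather than actions) bounds the expected instantaneous regret by an exploration term of order $\poly(H,S,A)/\gamma_m$ plus an exploitation term of order $\gamma_m$ times the gap between the true regret $\reg(\pi_t,c)$ and its model-based estimate. Controlling this gap is the decisive step: through a performance-difference decomposition, the discrepancy between $\reg(\pi_t,c)$ and its estimate is written as a layerwise sum of transition and reward errors of $\Mhat\idx{m-1}$, each weighted by an occupancy measure and each dominated by the squared Hellinger error $\Dhels{\Mhat\idx{m-1}(\pi,c)}{\Mstar(\pi,c)}$ guaranteed by \cref{def:offline-DE-oracle}.

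The hard part lies precisely in this layerwise control, because in a CMDP the value function couples all $H$ kernels and the exploratory occupancy measure may place vanishing mass on some transitions, which would blow up the occupancy-weighted Hellinger error. This is where the trusted occupancy measures enter: I would clip every transition whose estimated reachability falls below a threshold, so the performance-difference sum accumulates estimation error only on transitions the sampling policy covers adequately, while the total clipped mass contributes a separate, controllable bias. Tuning the reachability threshold trades the coverage factor in the exploitation term against the clipping bias, and propagating this optimization through the $H$ layers and the $SA$-sized policy cover is what produces the explicit $H^7S^4A^3$ prefactor; a subsidiary obstacle is ensuring the trusted measures, themselves estimated from data, are accurate with high probability.

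Finally I would assemble and optimize. Summing the per-round bound over the $\tau_m-\tau_{m-1}$ rounds and $H$ layers of the segment and inserting the MLE rate $\cE_{\cM,\delta'}(n)\lesssim\log(|\cM|/\delta')/n$ with $n=\tau_{m-1}-\tau_{m-2}$, the epoch-$m$ regret is a $\poly(H,S,A)$ multiple of
\[
(\tau_m-\tau_{m-1})\left(\frac{1}{\gamma_m}+\gamma_m\cdot\frac{\log(|\cM|/\delta')}{\tau_{m-1}-\tau_{m-2}}\right).
\]
Balancing with $\gamma_m\asymp\sqrt{(\tau_{m-1}-\tau_{m-2})/\log(|\cM|/\delta')}$ and using the schedule identity $\tau_m-\tau_{m-1}\lesssim\sqrt{(T/H)(\tau_{m-1}-\tau_{m-2})}$ makes each epoch contribute $\lesssim\sqrt{(T/H)\log(|\cM|/\delta')}$. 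A union bound over the $NH=O(H\log\log T)$ oracle calls sets $\delta'=\delta/(NH)$, turning $\log(|\cM|/\delta')$ into $\log(|\cM|\log\log T/\delta)$, and summing over the $N$ epochs and $H$ layers delivers the stated bound up to the $\log\log T$-type factors carried by the epoch count.
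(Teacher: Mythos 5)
Your schedule arithmetic, oracle-call count, choice of $\gamma_m \asymp 1/\sqrt{\cE_m}$, and the use of trusted occupancy measures for coverage all match the paper (the identity $\tau_m - \tau_{m-1} \lesssim \sqrt{(T/H)(\tau_{m-1}-\tau_{m-2})}$ is exactly how the paper collapses \cref{tabular main thm} into the stated bound). But there is a genuine gap at the heart of your per-segment argument. You bound the \emph{true} instantaneous regret by an exploration term $\poly(H,S,A)/\gamma_m$ plus $\gamma_m$ times "the gap between the true regret and its model-based estimate," and claim this gap is dominated by the squared Hellinger error of $\Mhat_{m-1}$ guaranteed by \cref{def:offline-DE-oracle}. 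That oracle guarantee, however, holds only in expectation under the kernel that generated the training data --- the epoch-$(m-1)$ distribution $\metapol[m-1]$ --- whereas the policies you play in epoch $m$ are drawn from $\metapol$, and the coverage machinery (\cref{lem:coverage-bound,lem:bound_P_and_d,lem:dropped-state-bound}) only relates occupancies \emph{within} a single epoch, i.e., it certifies the accuracy of the freshly estimated $\Mhat_m$ under arbitrary policies, not the accuracy of $\Mhat_{m-1}$ under the new IGW distribution. So the per-round DEC/SquareCB-style decomposition you posit is not available in this offline-oracle setting; the distribution mismatch across epochs is precisely the obstacle that distinguishes this problem from the online-oracle analysis of E2D.

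The paper closes this loop differently, and your sketch is missing that ingredient: IGW is only used to bound the \emph{estimated} regret $\En_{\pi\sim \metapol(c)}[\reghatm[m-1](\pi,c)] \lesssim SA/\eta_m$ (\cref{lem:igw-regret-bound}), and the transfer from estimated to true regret is done by a FALCON-style two-sided epoch induction (\cref{lem:yunzong}): the per-epoch guarantee (\cref{lem:per-epoch-guarantee}) has the self-referential form $\En\abs{\Vhatm(\pi_c,c)-\Vstar(\pi_c,c)} \le \tfrac{1}{20}\En[\reghatm[m-1](\pi_c,c)] + \veps_m$, because the clipping bias and the change-of-measure through the IGW denominators $SA+\etam\reghatm[m-1](\pi,c)$ unavoidably produce error terms proportional to the evaluated policy's estimated regret. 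These cannot simply be absorbed; one needs the induction $\reg \le \tfrac{10}{9}\reghatm + \delta_m$ and $\reghatm \le \tfrac{9}{8}\reg + \delta_m$ with $\delta_m = \tfrac{1}{9}\delta_{m-1} + \tfrac{20}{9}\veps_m$, whose geometric decay is what keeps earlier epochs' errors from accumulating. Without this recursion (or an equivalent mechanism), your step equating the regret-estimation gap with the oracle's Hellinger guarantee fails, and the proof does not go through; with it, the rest of your assembly (balancing, union bound with $\delta' = \delta/(NH)$, summation over the doubling schedule) is sound and coincides with the paper's.
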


\begin{theorem}
        \label{thm:logT}
If $T$ is not known, then by choosing the epoch schedule $\tau_m = 2^m$ for $m\geq 1$ and the offline density estimation oracle $\OffDE = \MLE$,
the outputs $\set{\pi_t}_{t\in [T]}$ of \cref{alg:mainalg}
    satisfies that with probability at least $1-\delta$, 
    \begin{align*}
        \Reg(T)\lesssim\sqrt{ H^7S^4A^3 T\cdot \log(|\cM| \log T/\delta)}
    \end{align*}
with $O(H\log T)$ number of oracle calls to the $\MLE$ oracle for $\delta\in (0,1/2)$.
\end{theorem}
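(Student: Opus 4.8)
The plan is to obtain \cref{thm:logT} from the same per-epoch regret bound that drives \cref{thm:loglogT}, the only difference being that the doubling schedule $\tau_m=2^m$ makes the per-epoch regrets grow geometrically, so their sum is governed by the last epoch alone and no extra $\sqrt{\log\log T}$-type factor appears. I would first fix the global high-probability event. The run consists of $N=O(\log(T/H))$ epochs, and each of the $H$ segments of an epoch triggers exactly one call to $\MLE$; applying \cref{def:offline-DE-oracle} with the $\MLE$ rate $\cE_{\cM,\delta'}(n)\lesssim\log(|\cM|/\delta')/n$ at confidence level $\delta'=\delta/(HN)$ and taking a union bound over all $HN$ calls shows that, with probability at least $1-\delta$, for every epoch $m$ and segment $h$ the segment-$h$ oracle output $\wh{M}\idx{m-1}\ind{h}$ satisfies
\[
\En_{c\sim\cD,\,\pi\sim\metapol[m-1](c)}\brk*{\Dhels{\wh{M}\idx{m-1}\ind{h}(\pi,c)}{\Mstar(\pi,c)}}\;\lesssim\;\cE_{m-1}\;\ldef\;\frac{\log(|\cM|\log T/\delta)}{\tau_{m-1}-\tau_{m-2}},
\]
where I used that a segment of epoch $m-1$ contains $\tau_{m-1}-\tau_{m-2}$ rounds and that the union-bound overhead $\log(HN)=O(\log\log T)$ is absorbed into the logarithm.

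On this event I would invoke the per-epoch regret bound, which is the technical heart of the algorithm and the step I expect to be the main obstacle. It asserts that running the layerwise inverse-gap-weighting scheme on the trusted occupancy measures with rate $\gammam$ during epoch $m$ gives
\[
\Reg_m\;\lesssim\;\frac{\poly(H,S,A)\,H(\tau_m-\tau_{m-1})}{\gammam}\;+\;\gammam\,H(\tau_m-\tau_{m-1})\,\cE_{m-1},
\]
where $\Reg_m$ is the (pseudo-)regret accrued in epoch $m$. Balancing the two terms with $\gammam\asymp\sqrt{\poly(H,S,A)/\cE_{m-1}}$ yields $\Reg_m\lesssim H(\tau_m-\tau_{m-1})\sqrt{\poly(H,S,A)\,\cE_{m-1}}$. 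Proving this is where the layerwise structure and the trusted occupancy measures are essential: the value-function regret must be telescoped across the $H$ layers and each layer's contribution converted, via the IGW argument of \citet{simchi2020bypassing,foster2021statistical}, from a squared Hellinger error of $\wh{M}\idx{m-1}\ind{h}$ into regret. Two difficulties arise. First, a naive occupancy-weighted conversion is unbounded on transitions of low reachability, which is exactly why the trusted occupancy measures clip such transitions, holding the conversion constant at $\poly(H,S,A)$ while incurring only a lower-order clipping bias that must be tracked. Second, the oracle controls the error of $\wh{M}\idx{m-1}\ind{h}$ under the \emph{previous} epoch's kernel $\metapol[m-1]$, whereas $\Reg_m$ is incurred under the \emph{current} kernel $\metapol$; reconciling this change of measure layer by layer is precisely what the trusted occupancy construction enables, and verifying that the accumulated biases remain lower order is what pins down the explicit $H^7S^4A^3$ factor.

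Finally I would substitute the schedule and sum. With $\tau_m=2^m$ we have $\tau_m-\tau_{m-1}=2^{m-1}$ and $\cE_{m-1}\lesssim\log(|\cM|\log T/\delta)/2^{m-2}$, hence
\[
\Reg_m\;\lesssim\;H\,2^{m-1}\sqrt{\poly(H,S,A)\,\frac{\log(|\cM|\log T/\delta)}{2^{m-2}}}\;\lesssim\;\sqrt{H^{7}S^{4}A^{3}\,\log(|\cM|\log T/\delta)}\cdot 2^{m/2}.
\]
Since the $2^{m/2}$ form a geometric sequence, $\sum_{m=1}^{N}2^{m/2}\lesssim 2^{N/2}=\sqrt{\tau_N}=\sqrt{T/H}$, so
\[
\Reg(T)=\sum_{m=1}^{N}\Reg_m\;\lesssim\;\sqrt{H^{7}S^{4}A^{3}\,T\,\log(|\cM|\log T/\delta)},
\]
which is the claimed bound; the crucial point, and the reason the doubling schedule avoids the $\sqrt{\log\log T}$ overhead of \cref{thm:loglogT}, is that these per-epoch regrets are dominated by the final term rather than being near-uniform across the $N$ epochs. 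The oracle count is immediate: one call per segment, $H$ segments per epoch, and $N=O(\log T)$ epochs give $O(H\log T)$ calls to $\MLE$.
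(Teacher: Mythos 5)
Your proposal is correct and follows essentially the same route as the paper: \cref{thm:logT} is likewise proved by taking the epoch-wise guarantee as a black box (the paper's \cref{tabular main thm}, matching your asserted per-epoch bound after balancing $\gamma_m$), plugging in the MLE rate $\cE_m \lesssim \log(|\cM|N/\delta)/(\tau_{m-1}-\tau_{m-2})$ with the union-bound overhead absorbed into the logarithm, summing the geometric series $\sum_m (\tau_m-\tau_{m-1})/\sqrt{\tau_{m-1}-\tau_{m-2}} \lesssim 2^{N/2} = \sqrt{T/H}$, and counting one oracle call per segment over $N=O(\log T)$ epochs. The only blemish is a harmless bookkeeping slip in the $H$-exponent of your intermediate display: with the paper's constants (per-epoch contribution $(\tau_m-\tau_{m-1})\sqrt{H^8S^4A^3\,\cE_m}$, i.e.\ your unspecified $\poly(H,S,A)$ equal to $H^6S^4A^3$) each epoch contributes $\sqrt{H^8S^4A^3\log(|\cM|\log T/\delta)}\cdot 2^{m/2}$, which after multiplying by $2^{N/2}=\sqrt{T/H}$ yields the claimed $\sqrt{H^7S^4A^3\,T\log(|\cM|\log T/\delta)}$, whereas your stated $\sqrt{H^7S^4A^3\log(|\cM|\log T/\delta)}\cdot 2^{m/2}$ per epoch would actually sum to $\sqrt{H^6S^4A^3\,T\log(|\cM|\log T/\delta)}$.
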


The theorems above show that \cref{alg:mainalg} with both epoch schedules achieve near-optimal statistical complexity that a matches the lower bound of $\Omega(\sqrt{HSAT\log |\cM|/\log A})$ proven by \citet{levy2023optimism} up to a $\poly(H,S,A)$ factor.

\paragraph{Computational efficiency} Consider the epoch schedule $\tau_m = 2^{m}$ for $m \in \mathbb{N}$ as discussed in \cref{thm:logT}. For any unknown $T$, our algorithm operates over $O(\log T)$ epochs, making one oracle call per epoch. Thus, the computational complexity is $O(\log T)$ oracle calls over $T$ rounds, with an additional per-round cost of $O(\poly(H, S, A, \log T))$. This offers potential advantages over existing algorithms that achieve near-optimal rates without assumptions beyond realizability.
The E2D algorithm \citep{foster2021statistical}, for instance, requires $O(T)$ calls to an online density estimation oracle, involving significantly more calls to a more complex oracle for a general model class $\mathcal{M}$. On the other hand, the Version Space Averaging + E2D algorithm \citep{foster2024online} requires $O(T)$ calls to an offline density estimation oracle and incurs a computational cost scaling with $O(|\mathcal{M}|)$ per round. Compared to our algorithm, this results in far more oracle calls and considerably higher computational costs per round.\looseness=-1

If the total number of rounds $T$ is known to the learner, we can further reduce the computational cost of \mainalg. For any $T \in \mathbb{N}$, consider the epoch schedule $\tau_m = 2(T/H)^{1-2^{-m}}$ as in \cref{thm:loglogT}, similar to \citet{simchi2020bypassing}. In this scenario, \mainalg will run in $O(\log \log T)$ epochs, making only $O(\log \log T)$ oracle calls over $T$ rounds while still maintaining a slightly worse regret guarantee. \looseness=-1

\begin{algorithm}[t]
\caption{\mainalgFull (\mainalg)}
\label{alg:mainalg}
\begin{algorithmic}[1]
    \Require epoch schedule $0=\tau_{-1}=\tau_0<\tau_1<\dots<\tau_N =T/H$, confidence parameter $\delta\in (0,1/2)$, 
    model class $\cM$, offline oracle $\OffDE$. 
\State Initialize: $\Mhat_0 = \lbrace\widehat{P}^h_{0}, \widehat{R}_0\rbrace_{h=1}^{H}$, where $\widehat{P}^h_{0}$ is any transtion kernel and $\widehat{R}_0$ is constantly $0$.
\For{epoch $m=1,2,\cdots,N$}
\State \multiline{Let $\cE_m = \cE_{\cM,\nicefrac{\delta}{2N^2}}(\tau_{m-1}-\tau_{m-2})$, 
$\gammam =\sqrt{\nicefrac{H^6S^4A^3}{\cE_m}}$ and  
$\etam=\nicefrac{\gammam}{720eH^5S^3A^2}$.}
\For{segment $h=1,\dots,H$}
\For{round $t=\tau_{m-1}H+(\tau_{m}-\tau_{m-1})(j-1)+1,\cdots,\tau_{m-1}H+(\tau_{m}-\tau_{m-1})h$}
\State Observe context $c_t\in\mathcal{C}$ from the environment. 
\For{$s,a\in \cS\times \cA$} 
\State \multiline{Compute 
\begin{align*}
    \pi^{h,s,a}_{m,c_t}=\argmax_{\pi}\frac{\widetilde{d}^{h}_{m}(s,a;\pi,c_t)}{SA+\etam\cdot \reghatm[m-1] (\pi,c_t)},
\end{align*}
where $\ocmtil$ is the trusted occupancy measure defined as in \cref{trusted-occu}. \label{line:policy-covering}}
\EndFor
\State  Let the policy cover $\Pi^{h}_{m}(c_t)=
\set{ \widehat{\pi}_{m-1,c_t}} \cup \lbrace \pi^{h,s,a}_{m,c_t}\rbrace_{s,a\in\mathcal{S}\times\mathcal{A}}$. \label{line:optimal-policy} 
\State \multiline{Define $\metapol(c_t)$ to be the Inverse Gap Weighting distribution on the policy cover $\Pi^{h}_{m,c_t}$ 
\begin{align}
    p_{m}^{h}(c_t,\pi)=\frac{1}{\lambda_{m,c_t}^{h}+\etam\cdot\reghatm[m-1](\pi,c_t)},\quad \forall \pi\in\polCov(c_t), \label{def:igw}
\end{align}
where $\lambda_{m,c_t}^{h}$ is the constant that normalize the distribution.} \label{line:igw}
\State \multiline{Sample and execute $\pi_t\sim p_{m}^{h}(c_t)$ and observe the trajectory $c_t,\pi_t,\set{s_t^j,a_t^j,r_t^j}_{j\in [H]}$.}
\EndFor
\State \multiline{Run $\OffDE$ with the input trajectories $\set{c_t,\pi_t,\set{s_t^j,a_t^j,r_t^j}_{j\in [H]}}_{t:m(t) = m}$ 
and obtain the $h$-th layer estimator $\widehat{P}^{h}_{m}$ and $\widehat{R}_{m}^{h}$. \label{line:model-estimation}}
\EndFor
\EndFor
\end{algorithmic}
\end{algorithm}

\subsection{Detailed Construction and Guarantees of Each Component}
In this section, we explain in detail our construction and the guarantees of each component along the dependence graph (\pref{fig:process}). We first introduce how the estimators $\set{\Phatm[m-1],\Rhatm[m-1]}_{h\in [H]}$ from the previous epoch are used in the new epoch. Then we proceed to introduce how to construct $\metapol(c_t)$ given $\polCov[m][h](c_t)$ 
. Next, we introduce how are $\Phatm,\Rhatm$ obtained given $\metapol$. Subsequently, we present the definition of the set of trusted transitions $\Ttil$ and trusted occupancy measure $\ocmtil$. Finally, we present how the policy cover $\polCov[m][h](c_t)$ is constructed.

During epoch $m$, we will be using the estimators $\lbrace\widehat{P}_{m-1}^{h},\widehat{R}_{m-1}^{h}\rbrace_{h=1}^{H}$ from the previous epoch for regret estimation. More specifically, for $\pi,c,h,s$, we define the value functions with respect to the model $\lbrace\widehat{P}_{m-1}^{h}(c),\widehat{R}_{m-1}^{h}(c)\rbrace_{h=1}^{H}$ as $\Vhatm[m-1][h](s;\pi,c)$. 
The optimal value function is denoted by $\Vhatm[m-1](s;c) = \max_\pi \Vhatm[m-1](s;\pi,c)$.  For $h=1$, we further simply the notation by denoting $\Vhatm[m-1](c) = \Vhatm[m-1](s^1;c)$ and $\Vhatm[m-1](\pi,c) = \Vhatm[m-1](s^1;\pi,c)$.
Also denote the optimal policy under context $c$ by $\widehat{\pi}_{m-1,c} = \argmax_{\pi} \widehat{V}^{1}_{m-1}(\pi,c)$. 
Thus, the regret is estimated to be
\begin{align*}
    \textstyle     \reghatm[m-1](\pi,c)=\widehat{V}^{1}_{m-1}(c)-\widehat{V}^{1}_{m-1}(\pi,c).
\end{align*}

\begin{figure}[t]
    \centering
\adjustbox{scale=.88, center}{
\begin{tikzcd}
	{{\left\{ \widehat{P}_{m-1}^h, \widehat{R}_{m-1}^h \right\}_{h\in [H]}}} & {\Pi_m^1, p_m^1} && {\Pi_m^2, p_m^2} && {......} \\
	& {\widehat{P}_m^1, \widehat{R}_m^1} & {\widetilde{\mathcal{T}}_m^1,\widetilde{d}_m^2} & {\widehat{P}_m^2, \widehat{R}_m^2} & {\widetilde{\mathcal{T}}_m^2,\widetilde{d}_m^3} & {\widehat{P}_m^{H}, \widehat{R}_m^{H}}
	\arrow[from=1-1, to=1-2]
 	\arrow[bend left = 10, from=1-1, to=1-4]
	\arrow[bend left = 13, from=1-1, to=1-6]
	\arrow[from=1-2, to=2-2]
	\arrow[from=1-4, to=2-4]
	\arrow[from=1-6, to=2-6]
	\arrow[from=2-2, to=2-3]
	\arrow[from=2-3, to=1-4]
	\arrow[from=2-4, to=2-5]
	\arrow[from=2-5, to=1-6]
\end{tikzcd}
}
\caption{The dependence graph of the construction. The estimation $\Mhat_{m-1}=\set{ \widehat{P}_{m-1}^h, \widehat{R}_{m-1}^h }_{h\in [H]}$ from the previous round provides the optimal policy $\pihat_{m-1}$ (\pref{line:optimal-policy}) and the regret estimation $\reghatm[m-1]$ (\pref{line:policy-covering}) for the construction of $\Pi_{m}^h, p_m^h$. The estimation $\widehat{P}_m^h, \widehat{R}_m^h$ is generated by calling the oracle $\OffDE$ on the trajectories collected with policy kernel $\metapol$ (\pref{line:model-estimation}). The trusted transitions and trusted occupancy measures $\widetilde{\mathcal{T}}_m^h,\widetilde{d}_m^{h+1}$ are computed from $\widetilde{d}_m^{h}, \widehat{P}_m^h$ (\cref{def:trusted-transtion,def:trusted-occupancy-measure}). The policy cover $\Pi_{m}^h$ is the union of $\pimhat[m-1][\cdot]$ and the policies $\set{\pi^{h,s,a}_{m,\cdot}}_{s,a\in \cS\times\cA}$ calculated in \pref{line:policy-covering} which requires $\widetilde{\mathcal{T}}_m^{h-1},\widetilde{d}_m^{h}$. The policy kernel $p_m^h$ is the inverse gap weighting on $\Pi_{m}^h$ (\pref{line:igw}).}
\label{fig:process}
\end{figure}

At round $t$, let $m(t)$ and $h(t)$ be the epoch in which the segment round $t$ lies. 
We note that during each epoch $m$ and segment $h$, all of the notions $\polCov[m][h](c_t),\widehat{P}^{h}_{m}(c_t)$, $\widehat{R}^{h}_{m}(c_t)$, $\Ttil(c_t)$, $\ocmtil(\cdot,\cdot;\cdot,c_t)$, $p_m^h(c_t)$, and $\pi^{h,s,a}_{m,c_t}$ will not depend on the specific time step $t$, but only the context $c_t$.
Thus, we will use $\polCov[m][h](c)$ to denote the policy cover if $c_t=c$ when $m(t),h(t) = m,h$. Similar conventions regarding the context $c$ apply to the notations $\widehat{P}^{h}_{m},\widehat{R}^{h}_{m},\Ttil, \ocmtil , p_m^h, \pi^{h,s,a}_{m,\cdot}$.
Under any context $c$, the policy cover $\polCov$ will include $\pihat_{m-1,c}$ and has no more than $SA+1$ policies. These two properties together guarantee that the Inverse Gap Weighting \citep{foster2020beyond} randomized policy $\metapol(c)$ (\pref{line:igw}) satisfies the following guarantee on the estimated regret.

\begin{lemma}
\label{lem:igw-regret-bound}
    For any $m$, $h$, $c$, the definition of the randomized policy $\metapol(c)$ is well defined, i.e., there exist $\lambda_{m,c}^{h}\in [0,2SA]$ such that $\sum_{\pi\in \Pi^{h}_{m}(c)}p_{m,c}^{h}(\pi)=1$. Furthermore, we have the estimated regret is bounded by $ \En_{\pi\sim \metapol(c)}\brk*{\reghatm[m-1](\pi,c)} \lesssim  \sqrt{H^{4}S^{4}A^{3}\cdot \cE_m  }.$
\end{lemma}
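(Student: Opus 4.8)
The plan is to treat $p_{m,c}^{h}$ as a standard Inverse Gap Weighting distribution over the finite policy cover $\polCov(c)$ and to establish the two claims in turn. For well-definedness, I would study the normalization map
\[
f(\lambda) \ldef \sum_{\pi\in\polCov(c)} \frac{1}{\lambda+\etam\cdot\reghatm[m-1](\pi,c)},\qquad \lambda>0.
\]
Because $\widehat{V}^{1}_{m-1}(c)=\max_{\pi}\widehat{V}^{1}_{m-1}(\pi,c)$, every estimated regret $\reghatm[m-1](\pi,c)$ is nonnegative, so each summand is positive, continuous, and strictly decreasing in $\lambda$ on $(0,\infty)$; hence so is $f$. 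The policy $\pimhat[m-1][c]$ lies in $\polCov(c)$ and has zero estimated regret, so its summand equals $1/\lambda$ and forces $f(\lambda)\to\infty$ as $\lambda\to0^{+}$, while $f(\lambda)\to0$ as $\lambda\to\infty$. By the intermediate value theorem there is a unique $\lambda_{m,c}^{h}$ with $f(\lambda_{m,c}^{h})=1$, which is exactly the normalization condition making $\metapol(c)$ a valid distribution.

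To pin down the range of $\lambda_{m,c}^{h}$, I would exploit the same two features. The zero-regret term contributes $1/\lambda_{m,c}^{h}$ to the sum $f(\lambda_{m,c}^{h})=1$, so $1/\lambda_{m,c}^{h}\le 1$ and therefore $\lambda_{m,c}^{h}\ge 1\ge 0$. For the upper bound, bounding every summand from below by $0$ is useless, so instead I bound each summand from above by $1/\lambda_{m,c}^{h}$, giving $1=f(\lambda_{m,c}^{h})\le \abs{\polCov(c)}/\lambda_{m,c}^{h}$; since the cover $\{\pimhat[m-1][c]\}\cup\{\pi^{h,s,a}_{m,c}\}_{s,a}$ has at most $SA+1$ elements, this yields $\lambda_{m,c}^{h}\le SA+1\le 2SA$ (using $SA\ge 1$).

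For the estimated-regret bound I would expand
\[
\En_{\pi\sim \metapol(c)}\brk*{\reghatm[m-1](\pi,c)}=\sum_{\pi\in\polCov(c)}\frac{\reghatm[m-1](\pi,c)}{\lambda_{m,c}^{h}+\etam\cdot\reghatm[m-1](\pi,c)}
\]
and apply the elementary IGW inequality $\tfrac{x}{\lambda+\etam x}\le 1/\etam$, valid for all $x\ge 0$ since $\lambda_{m,c}^{h}\ge 0$, while noting that the $\pimhat[m-1][c]$ term vanishes. As at most $SA$ policies in the cover carry nonzero estimated regret, this gives $\En_{\pi\sim\metapol(c)}\brk*{\reghatm[m-1](\pi,c)}\le SA/\etam$. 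Substituting $\etam=\gammam/(720eH^5S^3A^2)$ and $\gammam=\sqrt{H^6S^4A^3/\cE_m}$ then produces $SA/\etam=720e\,H^2S^2A^{3/2}\sqrt{\cE_m}=720e\,\sqrt{H^4S^4A^3\cE_m}\lesssim\sqrt{H^4S^4A^3\cE_m}$, as claimed.

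Every step here is elementary, so there is no genuine obstacle; the only real care is the constant bookkeeping so that the $\poly(H,S,A)$ factors from $\etam$ and $\gammam$ collapse exactly to $\sqrt{H^4S^4A^3\cE_m}$, and the verification that the cover size is at most $SA+1$, which is what simultaneously controls both the upper end of the range of $\lambda_{m,c}^{h}$ and the number of nonzero terms in the regret sum. I would flag that this lemma deliberately only uses the crude $SA/\etam$ bound on the \emph{estimated} regret; the finer trade-off against statistical error happens elsewhere when $\reghatm[m-1]$ is related back to the true regret.
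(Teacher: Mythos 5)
Your proposal is correct and follows essentially the same route as the paper's proof: the same normalization function analyzed via monotonicity (the paper bounds $u(\lambda)$ between $1/\lambda$ and $(SA+1)/\lambda$ rather than invoking the intermediate value theorem with limits, but this is the identical idea), the same elementary IGW bound $x/(\lambda+\etam x)\le 1/\etam$, and the same substitution of $\etam$ and $\gammam$ to collapse the constants to $\sqrt{H^4S^4A^3\cE_m}$. Your version is if anything marginally sharper (you exploit that the zero-regret term vanishes to get $SA/\etam$ in place of the paper's $2SA/\etam$, and you pin down $\lambda_{m,c}^{h}\ge 1$), which changes nothing up to the $\lesssim$.
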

The choice of $\lambdam$ here is for compactness of presentation. It can be chosen to be $2SA$ for suboptimal arms and collect the probability remained to the optimal arm \citep{simchi2020bypassing}, which is computationally efficient. The computation for the policy $\pi^{h(t),s,a}_{m(t),c_t}$ for any $t,s,a$ can be computed in $\poly(H, S, A,\log T)$ time by formulating it as a linear fractional programming problem. We defer the details to \cref{app:compute}. 

Since $\metapol$ maps $\cC$ to randomized policies, it is thus a policy kernel. This means the trajectories generated in epoch $m$ and segment $h$ follow an i.i.d. distribution as described in the definition of \cref{def:offline-DE-oracle}. By applying the guarantee from \cref{def:offline-DE-oracle}, we have the following guarantee on $\widehat{P}^{h}_{m},\widehat{R}^{h}_{m}$. 
\looseness=-1
\begin{lemma}
    \label{lem:offDE-bound}
    For any $m$, $h$, and $c\sim \cD,\pi\sim \metapol(c)$, we have with probability at least $1-\frac{\delta}{2N^2} $, that 
    \begin{align*}
        \bE_{c,\pi}
            \left[\Enmpic[\Mstar] \left[ 
            \Dhels{\widehat{P}^{h}_{m}(s_1^h,a_1^h;c)}{\Pstar(s_1^h,a_1^h;c)}+\Dhels{\widehat{R}_{m}^{h}(s_1^h,a_1^h;c)}{\Rstar(s_1^h,a_1^h;c)}
        \right]\right]
        \le  \cE_m
    \end{align*}
    for $\delta\in (0,1/2)$.
\end{lemma}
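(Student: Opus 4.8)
The plan is to derive the per-layer bound from the generic oracle guarantee of \cref{def:offline-DE-oracle} by a data-processing step followed by a layerwise change of measure.

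First I would verify that the oracle hypotheses hold for the call made in segment $h$ of epoch $m$. By \pref{line:model-estimation} the oracle is fed the trajectories gathered while sampling $\pi_t\sim\metapol(c_t)$ with $c_t\sim\cD$; because $\metapol$ is an honest policy kernel these tuples are i.i.d.\ in the sense of \cref{def:offline-DE-oracle}, and their number equals the segment length $\tau_m-\tau_{m-1}$. Hence, writing $\Mhat$ for the full model returned (its $h$-th layer being $\Phatm,\Rhatm$), \cref{def:offline-DE-oracle} gives, with probability at least $1-\delta/(2N^2)$,
\[
\bE_{c,\pi}\brk*{\Dhels{\Mhat(\pi,c)}{\Mstar(\pi,c)}}\le\cE_{\cM,\delta/(2N^2)}(\tau_m-\tau_{m-1})\le\cE_m,
\]
where the last step uses that $\cE_{\cM,\delta}(\cdot)$ decreases in the sample size and $\tau_m-\tau_{m-1}\ge\tau_{m-1}-\tau_{m-2}$.

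The substance is to pass from this trajectory-level quantity to the single-layer quantity in the statement. Fix $c$ and $\pi$. Since the map from a trajectory to $(s^h,a^h,r^h,s^{h+1})$ is a deterministic function, the data-processing inequality for squared Hellinger distance gives $\Dhels{\Mstar(\pi,c)}{\Mhat(\pi,c)}\ge\Dhels{\bP_\star}{\bQ}$, where $\bP_\star,\bQ$ are the laws of $(s^h,a^h,r^h,s^{h+1})$ under $\Mstar,\Mhat$. Both laws carry the same policy factor $\pi^h(a\mid s)$, so the Bhattacharyya coefficient factorizes and yields the exact decomposition
\[
\Dhels{\bP_\star}{\bQ}=\Dhels{\mu}{\nu}+\sum_{s,a}\sqrt{\mu(s,a)\nu(s,a)}\,\Dhels{(\Rstar,\Pstar)(s,a;c)}{(\Rhatm,\Phatm)(s,a;c)},
\]
where $\mu,\nu$ are the layer-$h$ occupancies over $(s^h,a^h)$ induced by $\pi$ under $\Mstar,\Mhat$, and $\Dhels{\mu}{\nu}=1-\sum_{s,a}\sqrt{\mu\nu}$.

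The crux is the change of measure from the geometric-mean weight $\sqrt{\mu\nu}$ produced by this decomposition to the true occupancy $\mu$ demanded by $\Enmpic[\Mstar]$ in the statement. Writing $D_{s,a}\in[0,1]$ for the conditional squared Hellinger at $(s,a)$, the elementary inequality $(\sqrt{\mu}-\sqrt{\nu})^2\ge\mu-2\sqrt{\mu\nu}$, summed over the coordinates with $\mu>2\sqrt{\mu\nu}$ and combined with $\sum_{s,a}(\sqrt{\mu}-\sqrt{\nu})^2=2\Dhels{\mu}{\nu}$, shows $\sum_{s,a}\mu(s,a)D_{s,a}\le 2\,\Dhels{\bP_\star}{\bQ}$; that is, the occupancy-discrepancy term $\Dhels{\mu}{\nu}$ already sitting in the decomposition is exactly what pays for the reweighting. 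Finally the reward--transition product kernel splits at the cost of another factor (via $\Dhels{\bP\otimes\bP'}{\bQ\otimes\bQ'}\ge\tfrac12(\Dhels{\bP}{\bQ}+\Dhels{\bP'}{\bQ'})$, or by data-processing onto $(s^h,a^h,r^h)$ and $(s^h,a^h,s^{h+1})$ separately), giving
\[
\Enmpic[\Mstar]\brk*{\Dhels{\Phatm(s_1^h,a_1^h;c)}{\Pstar(s_1^h,a_1^h;c)}+\Dhels{\Rhatm(s_1^h,a_1^h;c)}{\Rstar(s_1^h,a_1^h;c)}}\le c_0\,\Dhels{\Mstar(\pi,c)}{\Mhat(\pi,c)}
\]
for a universal constant $c_0$. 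Taking $\bE_{c,\pi}$ and combining with the oracle bound proves the claim, the constant $c_0$ being absorbed into $\cE_m$ (the slack $\tau_m-\tau_{m-1}\ge\tau_{m-1}-\tau_{m-2}$ in its definition leaves room) and the ambient $\poly(H,S,A)$ factors. I expect the reweighting inequality to be the one genuinely delicate step; the rest is routine manipulation of the Hellinger chain rule and data processing.
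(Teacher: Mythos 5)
Your proof is correct, and it supplies substantially more than the paper does: the paper's entire proof of \cref{lem:offDE-bound} reads ``By definition of \cref{def:offline-DE-oracle},'' treating the lemma as an immediate consequence of the oracle guarantee. What you prove --- and what that one-liner silently elides --- is the conversion from the trajectory-level bound $\En_{c,\pi}\brk*{\Dhels{\Mhat(\pi,c)}{\Mstar(\pi,c)}}\le \cE_{\cM,\delta/(2N^2)}(\tau_m-\tau_{m-1})$ to the layer-$h$ conditional divergences weighted by the \emph{true} occupancy measure. Your chain --- data processing onto $(s^h,a^h,r^h,s^{h+1})$, the exact Bhattacharyya chain rule with geometric-mean weights $\sqrt{\mu\nu}$, the change of measure via $\mu\le 2\sqrt{\mu\nu}+(\sqrt{\mu}-\sqrt{\nu})^2$, and the tensor splitting of the product kernel $R\otimes P$ --- is the standard Hellinger change-of-measure route, in the spirit of the auxiliary lemmas the paper imports from \citet{foster2021statistical,foster2022complexity} elsewhere, and every step checks. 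Your caveat about the constant is also accurate and worth stressing: when the layer-$h$ marginals under $\Mstar$ and $\Mhat_m$ coincide, the expected conditional squared Hellinger already equals the trajectory squared Hellinger, and with unequal marginals it can strictly exceed it, so the literal constant $1$ in the lemma is not attainable by any such argument; your route gives $c_0\le 4$ (a factor $2$ from the change of measure and a factor $2$ from splitting $P$ and $R$). Note that under the doubling schedule the monotonicity slack $\tau_m-\tau_{m-1}\ge\tau_{m-1}-\tau_{m-2}$ only buys a factor $2$, so the cleanest repair is to absorb $c_0$ into the definition of $\cE_m$ --- harmless downstream, since $\cE_m$ only enters through $\lesssim$-bounds and the generous explicit constants in $\gamma_m,\eta_m$, and this is the same license the paper's own one-line proof implicitly takes.
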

If the offline density estimation oracle is chosen to be the Maximum Likelihood Estimation oracle $\MLE$, we will obtain $\cE_m \lesssim \log(T\cM/\delta)/(\tau_{m-1}-\tau_{m-2})$.

The most involved part of our construction concerns the idea of \emph{trusted transitions} and \emph{trusted occupancy measures}. This construction eliminates the parts of transitions that are too scarcely visited. The purpose will be clear in the guarantees (\cref{lem:dropped-state-bound,lem:bound_P_and_d}) subsequent to the definitions.
\begin{definition}
\label{trusted-occu}
    For any $m,\pi,c,h,s,a$, we define iteratively the \textbf{trusted occupancy measures} $\widetilde{d}^h_{m}(s;\pi,c)$, $\widetilde{d}^{h}_{m}(s,a;\pi,c)$ and the set of \textbf{trusted transitions} $\widetilde{\mathcal{T}}_{m}^{h}(c)$ at layer $h$ as the following:
    \begin{align}
        \label{def:trusted-occupancy-measure}
        \begin{split}
            \widetilde{d}^1_{m}(s;\pi,c)&=\mathbbm{1}(s=s^1), \quad\quad \widetilde{d}^1_{m}(s,a;\pi,c)=\mathbbm{1}(s=s^1)\pi^1(s,a),\\
            \widetilde{d}^{h}_{m}(s;\pi,c)&\ldef\sum\nolimits_{s',a',s\in\widetilde{\mathcal{T}}_m^{h-1}(c)}\widetilde{d}^{h-1}_{m}(s',a';\pi,c)\widehat{P}^{h-1}_{m}(s|s',a';c),\\
            \widetilde{d}^{h}_{m}(s,a;\pi,c)&\ldef\widetilde{d}^{h}_{m}(s;\pi,c)\pi^{h}(s,a).
        \end{split}
    \end{align}
    For any $m,h,c$, the set of trusted transitions 
    are defined as the set of transitions 
    \begin{align}
        \label{def:trusted-transtion}
        \widetilde{\mathcal{T}}_m^h(c)\triangleq\left\lbrace(s,a,s')\Big|\max_{\pi}\frac{\widetilde{d}^h_{m}(s,a;\pi,c)}{SA+\etam\cdot \reghat_{m-1}(\pi,c)}\cdot\widehat{P}^{h}_{m}(s'|s,a;c)\ge \frac{1}{\zetam}\right\rbrace,
    \end{align}
    where $\zetam=\frac{\gammam}{8eH(H+1)^2}$. 
    Notice that to define $\widetilde{d}^{h}_{m}(s;\pi,c)$ and $\widetilde{d}^{h}_{m}(s,a;\pi,c)$, we \textbf{only} need $\set{\widetilde{\mathcal{T}}_m^{j}(c), \widehat{P}^{j}_{m}(c)}_{j\in [h-1]}$. Thus, the two definitions are iteratively well-defined.
    Meanwhile, we also define the \textbf{observable occupancy measures} as the occupancy measures of the true model going through only the trusted transitions, i.e.,
    \begin{align*}
    \begin{split}
        d^1_{m}(s;\pi,c)&=\mathbbm{1}(s=s^1), \quad\quad d^1_{m}(s,a;\pi,c)=\mathbbm{1}(s=s^1)\pi^1(s,a),\\
        d^{h}_{m}(s;\pi,c)&\ldef\sum\nolimits_{s',a',s\in\widetilde{\mathcal{T}}_m^{h-1}(c)}d^{h-1}_{m}(s',a';\pi,c)\Pstar[h-1](s|s',a';c),\\
        d^{h}_{m}(s,a;\pi,c)&\ldef d^{h}_{m}(s;\pi,c)\pi_{h}(s,a).
    \end{split}
    \end{align*} 
\end{definition}

The computation of the set of trusted transitions need not enumerate all policies. The trusted transition set can be computed in $\poly(H, S, A,\log T)$ time by formulating it as a linear fractional programming problem. We defer the details to \cref{app:compute}.

Define the estimated occupancy measures $\widehat{d}_{m}^{h}(s;\pi,c) \ldef \Enmpic[\Mhat_m] \brk{\mathbbm{1}(s_1^h= s)}$ and $\widehat{d}_{m}^{h}(s,a;\pi,c) \ldef \Enmpic[\Mhat_m] \brk{\mathbbm{1}(s_1^h,a_1^h= s,a)}$. The trusted occupancy measure, though it eliminates rarely visited transitions, remains a valid estimate for all policies because the divergence between the estimated occupancy measure and itself is bounded. Specifically, we have the following lemma.
\begin{lemma}
\label{lem:dropped-state-bound}
    For all $m, \pi, h,s,a$, under any context $c$, we have
\begin{align*}
       \textstyle  \widehat{d}_{m}^{h}(s,a;\pi,c) - \widetilde{d}_{m}^{h}(s,a;\pi,c)\leq  32e \sqrt{H^4S^2A\cdot \cE_m}  + \widehat{\mathrm{reg}}_{m-1}(\pi;c)/(90HSA).
\end{align*}
\end{lemma}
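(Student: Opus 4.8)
The plan is to control the total mass that the trusted occupancy measure discards relative to the full estimated occupancy measure $\widehat{d}_m^h$, and to show that this discarded mass accumulates only \emph{additively} across layers. First I would reduce the claim at the $(s,a)$ level to one at the $s$ level: since $\widehat{d}_m^h(s,a;\pi,c)=\widehat{d}_m^h(s;\pi,c)\pi^h(s,a)$ and $\widetilde{d}_m^h(s,a;\pi,c)=\widetilde{d}_m^h(s;\pi,c)\pi^h(s,a)$ with $\pi^h(s,a)\le 1$, it suffices to bound $\Delta^h:=\sum_{s}\big(\widehat{d}_m^h(s;\pi,c)-\widetilde{d}_m^h(s;\pi,c)\big)$, the total discarded mass at layer $h$, which dominates $\widehat{d}_m^h(s,a;\pi,c)-\widetilde{d}_m^h(s,a;\pi,c)$ for every $(s,a)$.

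The key step is a telescoping identity for $\Delta^h$. Writing the full recursion $\widehat{d}_m^h(s)=\sum_{s',a'}\widehat{d}_m^{h-1}(s',a')\widehat{P}_m^{h-1}(s\mid s',a')$ and the trusted recursion (which restricts the sum to triples in $\widetilde{\mathcal{T}}_m^{h-1}(c)$), I would split the full propagation into its trusted and untrusted parts and substitute $\widehat{d}_m^{h-1}=\widetilde{d}_m^{h-1}+(\widehat{d}_m^{h-1}-\widetilde{d}_m^{h-1})$. Summing over $s$ and using $\sum_s\widehat{P}_m^{h-1}(s\mid s',a')=1$, the propagation of the already-discarded mass through trusted transitions recombines with the untrusted propagation of the already-discarded mass to give exactly $\Delta^{h-1}$, leaving only the newly discarded mass $D_{h-1}:=\sum_{(s',a',s)\notin\widetilde{\mathcal{T}}_m^{h-1}(c)}\widetilde{d}_m^{h-1}(s',a';\pi,c)\widehat{P}_m^{h-1}(s\mid s',a';c)$. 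Since $\Delta^1=0$ (both occupancies equal $\mathbbm{1}(s=s^1)$ at $h=1$), this yields $\Delta^h=\sum_{j=1}^{h-1}D_j$; the crucial point is that the discarded mass is additive over layers rather than compounding geometrically.

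It then remains to bound each $D_j$. For any untrusted triple $(s',a',s)\notin\widetilde{\mathcal{T}}_m^{j}(c)$, the defining inequality of $\widetilde{\mathcal{T}}_m^{j}(c)$ applied to the specific policy $\pi$ gives $\widetilde{d}_m^{j}(s',a';\pi,c)\widehat{P}_m^{j}(s\mid s',a';c)<(SA+\eta_m\,\widehat{\mathrm{reg}}_{m-1}(\pi,c))/\zeta_m$. Bounding the number of triples by $S^2A$ gives $D_j<S^2A\,(SA+\eta_m\,\widehat{\mathrm{reg}}_{m-1}(\pi,c))/\zeta_m$, and summing the at most $H$ terms gives $\Delta^h\le H S^2A\,(SA+\eta_m\,\widehat{\mathrm{reg}}_{m-1}(\pi,c))/\zeta_m$. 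Finally I would substitute $\gamma_m=\sqrt{H^6S^4A^3/\mathcal{E}_m}$, $\zeta_m=\gamma_m/(8eH(H+1)^2)$, and $\eta_m=\gamma_m/(720eH^5S^3A^2)$: the $SA$-term yields an estimation-error contribution bounded by $32e\sqrt{H^4S^2A\,\mathcal{E}_m}$, while the $\eta_m\,\widehat{\mathrm{reg}}_{m-1}$-term yields the regret contribution $\widehat{\mathrm{reg}}_{m-1}(\pi,c)/(90HSA)$, the $(H+1)^2$ factor in $\zeta_m$ and the constant $720$ in $\eta_m$ being tuned precisely so the two terms emerge with the stated constants.

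I expect the main obstacle to be establishing the additive telescoping cleanly — in particular, verifying that when one propagates the \emph{full} estimated occupancy and subtracts the trusted recursion, the already-discarded mass is neither double-counted nor amplified, so that only the layer-$j$ newly-discarded mass $D_j$ survives. The delicate bookkeeping is keeping track of which occupancy ($\widehat{d}$ versus $\widetilde{d}$) multiplies the kernel in the trusted versus untrusted parts of the split; once the telescoping is in place, everything reduces to the one-line trusted-transition threshold bound together with the parameter substitutions.
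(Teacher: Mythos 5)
Your proof is correct and takes essentially the same route as the paper: your telescoping identity $\Delta^h=\sum_{j=1}^{h-1}D_j$ is exactly the paper's first-exit decomposition of $\widehat{d}^{h}_{m}-\widetilde{d}^{h}_{m}$ (the paper keeps the continuation kernel $\widehat{P}^{j+1:h}_{m}(s\mid s^{j+1};\pi,c)\pi^h(s,a)$ explicit and bounds it by $1$, which is what your summation over states accomplishes), and your per-layer bound $D_j\le S^2A\,(SA+\eta_m\,\widehat{\mathrm{reg}}_{m-1}(\pi,c))/\zeta_m$ via the trusted-transition threshold, followed by substituting $\gamma_m,\zeta_m,\eta_m$, is identical to the paper's computation. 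The only caveat is cosmetic: the stated constants $32e$ and $1/(90HSA)$ come out under the appendix's parameter choice $\eta_m=\gamma_m/(720e^3H^5S^3A^2)$ rather than the algorithm box's $\eta_m=\gamma_m/(720eH^5S^3A^2)$ that you quoted (an inconsistency internal to the paper, not a gap in your argument).
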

The next guarantee is the key to our analysis and is the most non-trivial guarantee of our construction. The following lemma states that if, for a context $c$, the Hellinger divergence between $\Phat$ and $\Pstar$ at layer $h$ is small for all $h\in [H]$, 
then the trusted occupancy measure is upper bounded by a scaling of the observable occupancy measure.
\begin{lemma}
    \label{lem:bound_P_and_d}
    For any $m$ and $c$,
     assume for all $h\in[H]$,
    $$
        \textstyle \bE_{\pi\sim  \metapol(c)}\Enmpic[\Mstar]\left[
        \Dhels{\widehat{P}^h_{m}(s_1^h,a_1^h;c)}{\Pstar(s_1^h,a_1^h;c)}
        \right]\le H/\gamma_m.$$
Then for the same $m,c$ and all $\pi,h,s,a$, we have
\begin{align*}
       \textstyle  \widetilde{d}^{h}_{m}(s,a;\pi,c)\le \prn*{1+1/H}^{2(h-1)}d^h_{m}(s,a;\pi,c).
\end{align*}
\end{lemma}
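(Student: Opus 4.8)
The plan is to prove the bound by induction on the layer $h$, in fact establishing the slightly stronger pointwise statement for the \emph{state}-occupancy measures, namely $\widetilde{d}^{h}_{m}(s;\pi,c)\le(1+1/H)^{2(h-1)}d^{h}_{m}(s;\pi,c)$; multiplying both sides by $\pi^h(s,a)$ then recovers the stated state-action version. The base case $h=1$ is immediate since both sides equal $\indic{s=s^1}$. Throughout I will also use the elementary monotonicity fact that the observable occupancy is dominated pointwise by the true (unclipped) occupancy $d^{\Mstar,h}_m(s,a;\pi,c):=\Enmpic[\Mstar]\brk{\indic{s_1^h,a_1^h=s,a}}$, i.e. $d^{h}_m\le d^{\Mstar,h}_m$, which itself follows by a one-line induction because clipping to $\widetilde{\mathcal{T}}$ only discards nonnegative probability mass.

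The heart of the argument is a \emph{pointwise} transition-error bound valid on trusted transitions: for every $(s',a',s)\in\widetilde{\mathcal{T}}^{h-1}_m(c)$,
\[
\Dhels{\widehat{P}^{h-1}_m(s',a';c)}{P_\star^{h-1}(s',a';c)}\le\frac{\mathrm e}{4(H+1)^2}\,\widehat{P}^{h-1}_m(s\mid s',a';c).
\]
To prove this, let $\pi^{s',a'}:=\pi^{h-1,s',a'}_{m,c}$ be the policy attaining the maximum in the definition of $\widetilde{\mathcal{T}}^{h-1}_m$. That definition together with $\lambda^{h-1}_{m,c}\le 2SA$ (\cref{lem:igw-regret-bound}) yields the reachability lower bound $p^{h-1}_m(c,\pi^{s',a'})\cdot\widetilde{d}^{h-1}_m(s',a';\pi^{s',a'},c)\ge 1/(2\zeta_m\widehat{P}^{h-1}_m(s\mid s',a';c))$; crucially I retain the factor $\widehat{P}^{h-1}_m(s\mid s',a';c)$ rather than bounding it by one. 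The induction hypothesis at layer $h-1$ (used in reverse) together with the monotonicity fact converts $\widetilde{d}^{h-1}_m$ into the true occupancy at a cost of only $(1+1/H)^{2(h-2)}\le\mathrm e^2$, giving $p^{h-1}_m(c,\pi^{s',a'})\cdot d^{\Mstar,h-1}_m(s',a';\pi^{s',a'},c)\ge 1/(2\mathrm e^2\zeta_m\widehat{P}^{h-1}_m(s\mid s',a';c))$. Finally I lower bound the hypothesis $\bE_{\pi\sim p^{h-1}_m(c)}\Enmpic[\Mstar]\brk{\Dhels{\widehat{P}^{h-1}_m(s_1^{h-1},a_1^{h-1};c)}{P_\star^{h-1}(s_1^{h-1},a_1^{h-1};c)}}\le H/\gamma_m$ by keeping only the single summand corresponding to the policy $\pi^{s',a'}$ and the state-action pair $(s',a')$; rearranging and substituting $\zeta_m=\gamma_m/(8\mathrm e H(H+1)^2)$ produces the displayed bound.

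With this in hand, the pointwise transition comparison follows from the coordinatewise inequality $\sqrt{\widehat{P}^{h-1}_m(s\mid s',a';c)}\le\sqrt{P_\star^{h-1}(s\mid s',a';c)}+\sqrt{2\Dhels{\widehat{P}^{h-1}_m(s',a';c)}{P_\star^{h-1}(s',a';c)}}$ at the target state $s$. Since the squared-Hellinger term is at most $\tfrac{\mathrm e}{4(H+1)^2}\widehat{P}^{h-1}_m(s\mid s',a';c)$, the error reduces to $\kappa\sqrt{\widehat{P}^{h-1}_m(s\mid s',a';c)}$ with $\kappa=O(1/H)<1$, whence $\widehat{P}^{h-1}_m(s\mid s',a';c)\le(1+1/H)^2 P_\star^{h-1}(s\mid s',a';c)$ for the tuned constants. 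Substituting this and the induction hypothesis into the recursion $\widetilde{d}^h_m(s;\pi,c)=\sum_{(s',a',s)\in\widetilde{\mathcal{T}}^{h-1}_m(c)}\widetilde{d}^{h-1}_m(s',a';\pi,c)\,\widehat{P}^{h-1}_m(s\mid s',a';c)$, and recognizing the resulting sum as $d^h_m(s;\pi,c)$, closes the induction with exactly the factor $(1+1/H)^{2(h-1)}$.

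The main obstacle is the key pointwise error bound. The naive route of bounding the \emph{additive} Hellinger error against the observable occupancy fails, because on a trusted transition $\widehat{P}(s\mid s',a')$ may itself be tiny, so the error need not be dominated \emph{multiplicatively} by $P_\star$. The resolution—and the very reason the clipping threshold in $\widetilde{\mathcal{T}}$ carries the factor $\widehat{P}(s\mid s',a')$—is to keep this factor throughout, which forces the per-transition squared-Hellinger error to scale with $\widehat{P}(s\mid s',a')$ and thereby makes the clean comparison $\widehat{P}\le(1+1/H)^2 P_\star$ possible. Keeping the competing constants in balance (the $8\mathrm e$ inside $\zeta_m$ against the $\mathrm e^2$ blow-up incurred by the inductive conversion of $\widetilde{d}$ into the true occupancy) is precisely what pins down the per-layer factor $(1+1/H)^2$.
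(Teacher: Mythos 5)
Your proof follows essentially the same route as the paper's: the same alternating induction (occupancy comparison at layer $h-1$ yields a multiplicative transition comparison on the trusted set $\widetilde{\mathcal{T}}^{h-1}_m(c)$, which then closes the occupancy comparison at layer $h$), powered by the same three ingredients the paper uses --- the factor $\widehat{P}^{h-1}_m(s\mid s',a';c)$ retained in the clipping threshold, the argmax policy $\pi^{h-1,s',a'}_{m,c}$ belonging to the cover together with the IGW probability lower bound $p^{h-1}_m(c,\pi^{h-1,s',a'}_{m,c})\geq 1/(2SA+\eta_m\reghatm[m-1](\pi^{h-1,s',a'}_{m,c},c))$, and the assumed population Hellinger bound $H/\gamma_m$. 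Your intermediate estimate $\Dhels{\widehat{P}^{h-1}_m(s',a';c)}{P_\star^{h-1}(s',a';c)}\le \frac{e}{4(H+1)^2}\,\widehat{P}^{h-1}_m(s\mid s',a';c)$ on trusted transitions checks out, constants included; the paper reaches the equivalent conclusion by contradiction with the trusted-set definition rather than directly, but the underlying computation is identical.

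The genuine flaw is in your final conversion step. From $\lvert\sqrt{\smash[b]{\widehat{P}}}-\sqrt{P_\star}\rvert\le\sqrt{2\Dhels{\widehat{P}}{P_\star}}$ you obtain $\sqrt{\smash[b]{\widehat{P}}}\le\sqrt{P_\star}+\kappa\sqrt{\smash[b]{\widehat{P}}}$ with $\kappa=\sqrt{e/2}/(H+1)$, hence $\widehat{P}\le(1-\kappa)^{-2}P_\star$; but $(1-\kappa)^{-2}\le(1+1/H)^2$ holds if and only if $\kappa\le 1/(H+1)$, i.e.\ if and only if $\sqrt{e/2}\le 1$, which is false since $e>2$. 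So your per-layer factor strictly exceeds $(1+1/H)^2$ for every $H$, and the lemma with its stated constant is not established (you do get a factor $(1+c/H)^{2(h-1)}$ with $c\approx 1.2$, which still compounds to $O(1)$ over $H$ layers, so the downstream analysis would survive, but the claim as written fails). The repair is exactly the step the paper takes in place of the crude square-root bound: by \cref{lem:hellinger-with-variance} applied to the indicator of the target state $s$ (plus data processing), followed by AM--GM with weights $1/(4H)$ and $H$, one obtains the additive form
\begin{align*}
\widehat{P}^{h-1}_m(s\mid s',a';c)\le \prn*{1+\tfrac{1}{H}}P_\star^{h-1}(s\mid s',a';c)+(H+1)\,\Dhels{\widehat{P}^{h-1}_m(s',a';c)}{P_\star^{h-1}(s',a';c)};
\end{align*}
substituting your own multiplicative bound on the Hellinger term gives $\widehat{P}\,\prn*{1-\tfrac{e}{4(H+1)}}\le(1+1/H)P_\star$, and since $e\le 4$ this yields exactly $\widehat{P}\le(1+1/H)^2P_\star$, after which your induction closes as written.
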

Since the trusted occupancy measure is upper bounded up to scaling by the observable occupancy measure, then the state-action pairs with large trusted occupancy measures are guaranteed to be visited often in the true dynamics as well. 
This enables more accurate planning and is thus crucial to our analysis. \looseness=-1

Finally, we state the coverage guarantee achieved by the construction of $\polCov[m][h]$ and $\metapol$. Concretely, we upper bound the trusted occupancy measure $\ocmtil(\cdot,\cdot;\pi,\cdot)$ of any policy $\pi$ by the trusted occupancy measure induced by policy kernel $\metapol$.

\begin{lemma}
    \label{lem:coverage-bound}
    For any $m,\pi,c,h,s,a$, 
    we have
    \begin{align}
        \begin{split}
            \revindent[.7]\ocmtil (s,a;\pi,c)\cdot D_{\mathrm{H}}\prn{\Pstar(s,a;c),\Phatm(s,a;c)}
            \\
            &\leq \frac{\gammam}{e^2H}\cdot\metapol(c,\pol) \ocmtil(s,a;\pol,c)\cdot D_{\mathrm{H}}^2\prn{\Pstar(s,a;c),\Phatm(s,a;c)} + \cE_m', \label{ineq:coverage}
        \end{split}
    \end{align}
    where $\cE_m' =    \prn*{2e^2\sqrt{ \frac{\cE_m}{H^4S^2A}}  + \frac{1}{720 H^4S^3A^2}\reghatm[m-1](\pi,c) }\ocmtil (s,a;\pi,c)$  ,and $\metapol(c,\pol)$ is the probability of $\metapol(c)$ on $\pol$. The guarantee \cref{ineq:coverage} also holds replacing $D_{\mathrm{H}}\prn{\Pstar(s,a;c),\Phatm(s,a;c)}$ with $D_{\mathrm{H}}\prn{\Rstar(s,a;c),\Rhatm(s,a;c)}$ on both sides of the inequality.
\end{lemma}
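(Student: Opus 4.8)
The plan is to prove the statement pointwise: fix $m,c,h,s,a$, write $\pi^\star=\pol$ for the maximizer computed in \cref{line:policy-covering}, abbreviate $D=D_{\mathrm{H}}(\Pstar(s,a;c),\Phatm(s,a;c))$, and observe that since the argument below never uses the specific form of $D$, the reward inequality follows verbatim after replacing $D$ by $D_{\mathrm{H}}(\Rstar(s,a;c),\Rhatm(s,a;c))$ (the trusted occupancy measures $\ocmtil$ are identical in both cases). If $\ocmtil(s,a;\pi,c)=0$ the inequality is immediate, so I assume it is positive. The three ingredients I would combine are: (i) the optimality of $\pi^\star$, (ii) a comparison between the maximized ratio and the IGW weight $\metapol(c,\pi^\star)$, and (iii) Young's inequality to convert the single power of $D$ on the left into the squared Hellinger $D^2$ on the right.

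First I would use that $\pi^\star$ maximizes $\pi\mapsto \ocmtil(s,a;\pi,c)/(SA+\etam\reghatm[m-1](\pi,c))$ over $\PiRNS$. Denoting this maximum by $\Phi$, evaluation at $\pi$ gives $\ocmtil(s,a;\pi,c)\le \Phi\,(SA+\etam\reghatm[m-1](\pi,c))$, while at $\pi^\star$ it gives $\Phi=\ocmtil(s,a;\pi^\star,c)/(SA+\etam\reghatm[m-1](\pi^\star,c))$. Next, since $\pi^\star\in\polCov(c)$, \cref{lem:igw-regret-bound} supplies the normalizer $\lambda_{m,c}^h\in[0,2SA]$ with $\metapol(c,\pi^\star)=1/(\lambda_{m,c}^h+\etam\reghatm[m-1](\pi^\star,c))$. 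Bounding $\lambda_{m,c}^h\le 2SA$ yields $\lambda_{m,c}^h+\etam\reghatm[m-1](\pi^\star,c)\le 2(SA+\etam\reghatm[m-1](\pi^\star,c))$, hence $\Phi\le 2\,\metapol(c,\pi^\star)\,\ocmtil(s,a;\pi^\star,c)$. Chaining these gives the key comparison
\begin{equation*}
  \ocmtil(s,a;\pi,c)\le 2\,\metapol(c,\pi^\star)\,\ocmtil(s,a;\pi^\star,c)\,\bigl(SA+\etam\reghatm[m-1](\pi,c)\bigr).
\end{equation*}

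To finish I would apply Young's inequality $uv\le \theta u^2+v^2/(4\theta)$ with $u=D$, $v=\ocmtil(s,a;\pi,c)$, and weight $\theta=\tfrac{\gammam}{e^2H}\,\metapol(c,\pi^\star)\,\ocmtil(s,a;\pi^\star,c)$, chosen so that $\theta u^2$ is exactly the main term of \cref{ineq:coverage}. The residual $\ocmtil(s,a;\pi,c)^2/(4\theta)$ equals $\tfrac{e^2H}{4\gammam}\cdot\tfrac{\ocmtil(s,a;\pi,c)}{\metapol(c,\pi^\star)\ocmtil(s,a;\pi^\star,c)}\cdot\ocmtil(s,a;\pi,c)$, and feeding in the comparison above bounds it by $\tfrac{e^2H}{2\gammam}(SA+\etam\reghatm[m-1](\pi,c))\,\ocmtil(s,a;\pi,c)$. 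Substituting $\gammam=\sqrt{H^6S^4A^3/\cE_m}$ turns the $SA$-part into a constant multiple of $\sqrt{\cE_m/(H^4S^2A)}\,\ocmtil(s,a;\pi,c)$, and additionally substituting $\etam=\gammam/(720eH^5S^3A^2)$ turns the $\reghatm[m-1]$-part into a constant multiple of $\reghatm[m-1](\pi,c)\,\ocmtil(s,a;\pi,c)/(H^4S^3A^2)$, which reproduce the two terms of $\cE_m'$.

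The main obstacle, and essentially the only non-mechanical part, is the constant accounting in this last step: the Young weight is forced by the prescribed main-term coefficient $\gammam/(e^2H)$, so the residual coefficient is determined, and one must check that after inserting the exact definitions of $\gammam$ and $\etam$ it collapses into $\cE_m'$ rather than something larger. This is precisely where the numerical constants baked into $\gammam$ and $\etam$ are consumed, and it is the step I would carry out most carefully; the edge cases ($\ocmtil(s,a;\pi^\star,c)=0$, which forces $\Phi=0$ and hence a vanishing left-hand side) and the reward version require no new ideas.
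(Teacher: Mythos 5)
Your proposal is correct in substance and is essentially the paper's own proof, lightly reorganized: the paper likewise combines AM--GM (with exactly your weight $\theta=\frac{\gammam}{e^2H}\metapol(c,\pol)\ocmtil(s,a;\pol,c)$, applied there in a form that is a factor of $4$ looser), the optimality of $\pol$ from \pref{line:policy-covering}, and the IGW normalizer bound $\lambda_{m,c}^{h}\le 2SA$, which gives $1/(SA+\etam\reghatm[m-1](\pol,c))\le 2\,\metapol(c,\pol)$; you simply chain ingredients (i) and (ii) into a single comparison before applying Young's inequality, whereas the paper applies AM--GM first and then bounds the two resulting terms separately. Your handling of the degenerate cases and of the reward version matches the paper's as well.

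One caveat on the constant check you explicitly deferred: with the value $\etam=\gammam/(720eH^5S^3A^2)$ that you quote from the algorithm box, your residual's regret coefficient evaluates to $e^2H\etam/(2\gammam)=e/(1440\,H^4S^3A^2)$, which exceeds the target $\frac{1}{720 H^4S^3A^2}$ in $\cE_m'$ by a factor of $e/2$, so the lemma's exact constant is not reproduced. This traces to an inconsistency in the paper rather than to your method: all appendix proofs (including the paper's proof of this very lemma, and that of \cref{lem:dropped-state-bound}) instead take $\etam=\gammam/(720e^3H^5S^3A^2)$, under which your coefficient becomes $1/(1440e\,H^4S^3A^2)\le \frac{1}{720H^4S^3A^2}$ and the argument closes with room to spare --- indeed your Young step is a factor of $4$ tighter than the paper's computation, which needs the $e^3$ value even more (its coefficient $2e^2H\etam/\gammam$ would be $e/(360H^4S^3A^2)$ under the algorithm-box constant).
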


\section{Regret Analysis}
\label{sec:regret-analysis}
In this section, we prodive a proof sketch of the regret analysis. Detailed proofs are deferred to \cref{app:regret-analysis}. 
We first aggregate the component-wise guarantees (\cref{lem:igw-regret-bound,lem:coverage-bound,lem:offDE-bound,lem:dropped-state-bound,lem:bound_P_and_d}) from \cref{sec:algorithm} to present the following epoch-wise guarantee.
\begin{lemma}
    \label{lem:per-epoch-guarantee}
    For any $m$, any policies $\set{\pi_c}_{c\in \cC}$, and $\delta\in(0,1/2)$, with probability at least $1-\delta/M$,
    \begin{align*}
        \En_{c\sim \cD}\brk*{\abs*{\widehat{V}_{m}^{1}(\pi_c,c)-\Vstar(\pi_c,c)}} &\le \frac{1}{20} \En_{c\sim \cD} \brk*{\reghatm[m-1](\pi_c,c)} + 77e\sqrt{ H^6S^4A^3 \cdot \cE_m}. 
    \end{align*}
\end{lemma}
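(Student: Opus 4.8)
The plan is to bound the value gap $\widehat{V}_m^1(\pi_c,c)-\Vstar(\pi_c,c)$ through a simulation (value-difference) decomposition, reduce it to occupancy-weighted Hellinger errors, and then invoke the coverage guarantee of \cref{lem:coverage-bound} to convert these \emph{off-policy} errors into the \emph{on-policy} estimation error that \cref{lem:offDE-bound} controls by $\cE_m$.

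First I would expand the gap under the \emph{estimated} dynamics $\Mhat_m$, pivoting on the true value-to-go $V_\star^{h+1}(\cdot;\pi_c,c)$, so that the layer-$h$ weights are exactly the estimated occupancies $\widehat{d}_m^h$:
\[
\widehat{V}_m^1(\pi_c,c)-\Vstar(\pi_c,c)=\sum_{h=1}^H\sum_{s,a}\widehat{d}_m^h(s,a;\pi_c,c)\Big[(\widehat{R}_m^h-\Rstar)(s,a;c)+\big\langle(\widehat{P}_m^h-\Pstar)(\cdot\mid s,a;c),V_\star^{h+1}(\cdot;\pi_c,c)\big\rangle\Big].
\]
Since the total reward lies in $[0,1]$ we have $\|V_\star^{h+1}(\cdot;\pi_c,c)\|_\infty\le 1$, so bounding total variation by Hellinger ($D_{\mathrm{TV}}\le\sqrt2\,D_{\mathrm{H}}$) gives
\[
\abs*{\widehat{V}_m^1(\pi_c,c)-\Vstar(\pi_c,c)}\lesssim\sum_{h,s,a}\widehat{d}_m^h(s,a;\pi_c,c)\big[D_{\mathrm{H}}(\widehat{R}_m^h,\Rstar)+D_{\mathrm{H}}(\widehat{P}_m^h,\Pstar)\big].
\]
I would then replace $\widehat{d}$ by the trusted occupancy $\widetilde{d}$ via \cref{lem:dropped-state-bound}; as each Hellinger factor is $O(1)$ and the double sum carries an $HSA$ factor, this correction produces an $O\big(e\sqrt{H^6S^4A^3\cE_m}\big)$ term together with a $\tfrac{1}{45}\reghatm[m-1](\pi_c,c)$-type remainder.

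The heart of the argument is the surviving sum $\sum_{h,s,a}\widetilde{d}_m^h(s,a;\pi_c,c)\,D_{\mathrm{H}}(\cdot,\cdot)$. Applying \cref{lem:coverage-bound} termwise trades the \emph{linear}, arbitrary-policy Hellinger factor for the IGW-weighted \emph{squared} factor $\tfrac{\gammam}{e^2H}\,\metapol(c,\pi_{m,c}^{h,s,a})\,\widetilde{d}_m^h(s,a;\pi_{m,c}^{h,s,a},c)\,D_{\mathrm{H}}^2(\cdot,\cdot)$ plus the error $\cE_m'$. Because $\pi_{m,c}^{h,s,a}\in\Pi_m^h(c)$, this single IGW summand is dominated, for each fixed $(s,a)$, by the full expectation $\En_{\pi'\sim\metapol(c)}[\widetilde{d}_m^h(s,a;\pi',c)\,D_{\mathrm{H}}^2]$, which I reinterpret as an on-policy quantity. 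To link it to the true dynamics I split contexts: on the good set where $\bE_{\pi'\sim\metapol(c)}\En^{\Mstar,\pi',c}\brk{D_{\mathrm{H}}^2(\widehat{P}_m^h,\Pstar)}\le H/\gammam$ for all $h$, \cref{lem:bound_P_and_d} gives $\widetilde{d}_m^h\le(1+1/H)^{2(h-1)}d_m^h\le e^2 d_m^h$, and since the observable occupancy $d_m^h$ is dominated by the true occupancy, $\sum_{s,a}\widetilde{d}_m^h D_{\mathrm{H}}^2\le e^2\,\En^{\Mstar,\pi',c}\brk{D_{\mathrm{H}}^2}$; averaging over $c\sim\cD,\pi'\sim\metapol(c)$ and summing over the $H$ layers, \cref{lem:offDE-bound} caps the total at $e^2H\cE_m$, so the prefactor $\gammam/(e^2H)$ collapses it to $\gammam\cE_m=\sqrt{H^6S^4A^3\cE_m}$ by the choice $\gammam=\sqrt{H^6S^4A^3/\cE_m}$.

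For the bad contexts I would use $\abs{\widehat{V}_m^1-\Vstar}\le 1$ and control their mass by Markov's inequality applied to \cref{lem:offDE-bound}, giving $\bP(c\text{ bad})\le\gammam\cE_m=\sqrt{H^6S^4A^3\cE_m}$, the same order. The residual $\cE_m'$ terms sum (using $\sum_{s,a}\widetilde{d}_m^h\le 1$ per layer) to a negligible $\sqrt{\cE_m}$ piece and a $\tfrac{1}{720H^3S^3A^2}\reghatm[m-1](\pi_c,c)$ piece; gathering the $\reghatm[m-1]$ terms under the coefficient $\tfrac{1}{20}$ and the several $\sqrt{H^6S^4A^3\cE_m}$ contributions under $77e$ yields the claim, with the $1-\delta/M$ confidence coming from a union bound over the $H$ applications of \cref{lem:offDE-bound} at level $\delta/(2N^2)$. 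The main obstacle is exactly this coverage step combined with the context split: \cref{lem:coverage-bound} is the layerwise analogue of the IGW self-bounding trick, trading a linear off-policy error for a squared on-policy one at the cost of $\gammam$, and it is crucial that the squared error can then be married to the oracle guarantee through the occupancy domination $\widetilde{d}\le e^2 d$ of \cref{lem:bound_P_and_d}; discharging that lemma's per-context precondition cheaply while keeping the $\gammam$ scaling so that $\gammam\cE_m$ lands precisely at the target rate is the delicate part.
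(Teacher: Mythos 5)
Your proposal is correct and follows essentially the same route as the paper's proof: the local simulation decomposition (\cref{local simulation}), swapping $\widehat{d}^h_m$ for $\widetilde{d}^h_m$ via \cref{lem:dropped-state-bound}, the linear-to-squared coverage trade of \cref{lem:coverage-bound}, and the good/bad context split discharged by \cref{lem:bound_P_and_d} together with \cref{lem:offDE-bound}, landing at $\gammam\cE_m=\sqrt{H^6S^4A^3\cdot\cE_m}$. The only cosmetic deviation is your Markov-inequality bound on the mass of bad contexts, where the paper instead bounds $\abs*{\widehat{V}^1_m(\pi_c,c)-\Vstar(\pi_c,c)}\le 1\le \frac{\gammam}{H}\cdot\bE_{\pi\sim\metapol(c)}\Enmpic[\Mstar]\brk*{\Dhels{\widehat{P}^h_m(s_1^h,a_1^h;c)}{\Pstar(s_1^h,a_1^h;c)}}$ pointwise and folds it into the same expectation; both give the identical contribution.
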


\paragraph{Proof sketch of \cref{lem:per-epoch-guarantee}}
For simplicity, in this proof sketch, we assume the true reward distribution is known.
For this, we first apply the celebrated local simulation lemma (\cref{local simulation}) in reinforcement learning to relate the divergence of the value functions to the stepwise divergences as the following. Under any context $c$,
\begin{align*}
    \abs*{\widehat{V}_{m}^{1}(\pi_c,c)-\Vstar(\pi_c,c)} \leq \sum\nolimits_{h,s,a}\widehat{d}^{h}_{m}(s,a;\pi_c,c) \Dhel{\widehat{P}^{h}_{m}(s,a;c)}{\Pstar(s,a;c)}.
\end{align*}
Then we can exchange the estimated occupancy measure $\widehat{d}^{h}_{m}(s,a;\pi_c,c)$ by the trusted occupancy measure through \cref{lem:dropped-state-bound}, that is,
\begin{align*}
    \widehat{d}^{h}_{m}(s,a;\pi_c,c) \Dhel{\widehat{P}^{h}_{m}(s,a;c)}{\Pstar(s,a;c)} &\leq \widetilde{d}^{h}_{m}(s,a;\pi_c,c) \Dhel{\widehat{P}^{h}_{m}(s,a;c)}{\Pstar(s,a;c)}\\
    &\quad +  32e \sqrt{H^4S^2A\cdot \cE_m}  +  \widehat{\mathrm{reg}}_{m-1}(\pi;c)/(90HSA).
\end{align*}

Then by coverage guarantee \cref{lem:coverage-bound}, for any $h,s,a$, we can bound
\begin{align*}
    \revindent\widetilde{d}^{h}_{m}(s,a;\pi_c,c) \Dhel{\widehat{P}^{h}_{m}(s,a;c)}{\Pstar(s,a;c)} \\
    &\leq  \frac{\gammam}{e^2H}\cdot\metapol(c,\pol) \ocmtil(s,a;\pol,c)\cdot D_{\mathrm{H}}^2\prn{\Pstar(s,a;c),\Phatm(s,a;c)} + \cE_m'.
\end{align*}
If the assumption in \cref{lem:bound_P_and_d} is satisfied, then
by \cref{lem:bound_P_and_d} and the definition of $\metapol$, we have
\begin{align*}
    \revindent[.4]\sum\limits_{h,s,a}\frac{\gammam}{e^2H}\cdot\metapol(c,\pol) \ocmtil(s,a;\pol,c)\cdot D_{\mathrm{H}}^2\prn{\Pstar(s,a;c),\Phatm(s,a;c)} \\
    &\leq \sum\limits_{h,s,a}\frac{\gammam}{H}\cdot\metapol(c,\pol) \ocmm(s,a;\pol,c)\cdot D_{\mathrm{H}}^2\prn{\Pstar(s,a;c),\Phatm(s,a;c)} \\
    &\leq \frac{\gammam}{H}\cdot  \sum\limits_{h} \bE_{\pi\sim p_{m}^{h}(c)}\left[\Enmpic[\Mstar][\pi] \left[ \Dhels{\widehat{P}^{h}_{m}(s_1^h,a_1^h;c)}{\Pstar(s_1^h,a_1^h;c)} \right]\right].
\end{align*}
If the assumptions in \cref{lem:bound_P_and_d} are not satisfied, we have similar control as well (see the full proof \cref{app:regret-analysis} for details). 
Altogether with taking expectation on $c$, by the offline density estimation bound from \cref{lem:offDE-bound}, we have
\begin{equation*}
\pushQED{\qed} 
    \En_{c\sim \cD}\brk*{\abs*{\widehat{V}_{m}^{1}(\pi_c,c)-\Vstar(\pi_c,c)}} \leq  \En \brk*{\reghatm[m-1](\pi_c,c)}/40 + 39e\sqrt{ H^6S^4A^3 \cdot \cE_m}. \qedhere
    \popQED
\end{equation*}
A revised version of the regret analysis (\cref{lem:yunzong}) in \citet{simchi2020bypassing}, which relates the epoch-wise guarantee to the regret estimation error, can be found in \cref{app:regret-analysis}.
Combining \cref{lem:per-epoch-guarantee,lem:yunzong}, we arrive at the following general regret guarantee.
\begin{theorem}\label{tabular main thm}
    The outputs $\set{\pi_t}_{t\in [T]}$ of \cref{alg:mainalg}
    satisfies with probability at least $1-\delta$ that
    \begin{align*}
        \Reg(T)\lesssim \sum\nolimits_{m=1}^{N}(\tau_m - \tau_{m-1})\cdot  \sqrt{ H^8S^4A^3 \cdot \cE_m}
    \end{align*}
    for $\delta\in (0,1/2)$.
\end{theorem}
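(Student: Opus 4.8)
The plan is to turn the component lemmas into a bound on the total pseudo-regret by (i) decomposing $\Reg(T)$ over epochs and segments, (ii) splitting the per-round regret of the IGW kernel $p_m^h$ into an \emph{estimated} part (controlled by \cref{lem:igw-regret-bound}) and an \emph{estimation-error} part (controlled by \cref{lem:per-epoch-guarantee}), and (iii) invoking \cref{lem:yunzong} to stop the estimation errors from compounding across epochs. First I would condition on the intersection of the high-probability events of \cref{lem:per-epoch-guarantee} over all $N$ epochs; a union bound makes this hold with probability at least $1-\delta$. Since contexts are i.i.d. and, within segment $h$ of epoch $m$, the kernel $p_m^h$ is measurable with respect to the history, the conditional expected regret of each round $t$ in that segment equals $\En_{c\sim\cD}\En_{\pi\sim p_m^h(c)}\brk*{\reg(\pi,c)}$. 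As epoch $m$ consists of $H$ segments of $\tau_m-\tau_{m-1}$ rounds each, this yields the exact identity
\[
\Reg(T)=\sum_{m=1}^N(\tau_m-\tau_{m-1})\sum_{h=1}^H\En_{c\sim\cD}\En_{\pi\sim p_m^h(c)}\brk*{\reg(\pi,c)}.
\]

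Next I would bound a single segment term. Writing $\reg(\pi,c)=\reghatm[m-1](\pi,c)+\prn*{\reg(\pi,c)-\reghatm[m-1](\pi,c)}$, the first piece is handled uniformly in $c$ by \cref{lem:igw-regret-bound}, giving $\En_{\pi\sim p_m^h(c)}\brk*{\reghatm[m-1](\pi,c)}\lesssim\sqrt{H^4S^4A^3\cE_m}$. For the estimation-error piece I would use
\[
\abs*{\reg(\pi,c)-\reghatm[m-1](\pi,c)}\le\abs*{\Vstar(c)-\Vhatm[m-1](c)}+\abs*{\Vstar(\pi,c)-\Vhatm[m-1](\pi,c)},
\]
and then compare the two optima: since $\Vhatm[m-1](c)\ge\Vhatm[m-1](\pistar,c)$ and $\Vstar(c)\ge\Vstar(\widehat{\pi}_{m-1,c},c)$, the term $\abs*{\Vstar(c)-\Vhatm[m-1](c)}$ is at most the sum of the value-prediction errors evaluated at $\pistar$ and at $\widehat{\pi}_{m-1,c}$. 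Applying \cref{lem:per-epoch-guarantee} at epoch $m-1$ with the three choices $\pi_c\in\crl*{\pi,\pistar,\widehat{\pi}_{m-1,c}}$ bounds each of these prediction errors by $\tfrac1{20}\En_{c\sim\cD}\brk*{\reghatm[m-2](\pi_c,c)}+77e\sqrt{H^6S^4A^3\cE_{m-1}}$.

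The obstacle is that this last step has replaced a true-regret quantity by \emph{estimated}-regret quantities from the previous epoch, producing a recursion in the estimates $\reghat$. Resolving it is exactly the role of \cref{lem:yunzong} (the refined Simchi--Levi--Xu self-bounding argument): fed the one-step relation supplied by \cref{lem:per-epoch-guarantee}, it shows by induction on $m$ that both the expected IGW estimated regret and the regret-estimation error at every epoch are controlled by $\sqrt{H^6S^4A^3\cE_m}$, with the $\tfrac1{20}$ contraction factor guaranteeing that the per-epoch errors form a convergent rather than accumulating sum. I expect this inductive bookkeeping—keeping the recursion's constants below the contraction threshold so the errors telescope—to be the only genuinely delicate point; everything else is decomposition plus direct citation of the established component lemmas.

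Finally I would assemble the pieces and count powers of $H$. The dominant per-segment contribution is the estimation error $\sqrt{H^6S^4A^3\cE_m}$ (larger than the IGW term $\sqrt{H^4S^4A^3\cE_m}$). Summing over the $H$ segments multiplies this by $H$, i.e.\ turns $H^6$ into $H^8$ under the root; multiplying by the epoch length $\tau_m-\tau_{m-1}$ and summing over $m$ then gives
\[
\Reg(T)\lesssim\sum_{m=1}^N(\tau_m-\tau_{m-1})\sqrt{H^8S^4A^3\cE_m},
\]
which is the claimed bound, valid on the conditioned event of probability at least $1-\delta$.
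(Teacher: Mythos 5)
Your proposal is correct and follows essentially the same route as the paper's proof: the same epoch/segment decomposition of $\Reg(T)$ using the i.i.d.\ contexts and the kernel structure, the same split into estimated regret (handled by \cref{lem:igw-regret-bound}) plus estimation error (handled by feeding \cref{lem:per-epoch-guarantee} into \cref{lem:yunzong}), the same union bound over epochs, and the same power count in which summing the $H$ segments turns $H^6$ into $H^8$ under the root; your triangle-inequality step with the three policies $\pi$, $\pistar$, $\widehat{\pi}_{m-1,c}$ in fact re-derives the internals of \cref{lem:yunzong} before invoking it, which is redundant but harmless. The one compression worth flagging is that \cref{lem:yunzong} gives $\delta_m \lesssim \sum_{i\le m} 9^{-(m-i)}\veps_i$ rather than the pointwise bound $\delta_m \lesssim \sqrt{H^6S^4A^3\cE_m}$ you assert; the paper keeps this geometric sum explicit until the final display and absorbs it there, so you are performing the same absorption one step earlier (it is immediate for the doubling schedule, and no less rigorous than the paper's own final step in general).
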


\section{Extension: Reward-free Reinforcement Learning for CMDPs}
\label{sec:rf-cmdp}
\newcommand{\pistarR}{\pistar[c,R]}

In this section, we introduce the application of \cref{alg:mainalg} in the task of reward-free reinforcement learning in (stochastic) CMDPs. All proofs in this section will be deferred to \cref{app:rf-cmdp}.

\paragraph{Reward-free reinforcement learning \citep{jin2020reward}} 
Reward-free reinforcement learning aims to efficiently explore the environment without relying on observed rewards. By doing so, it aims to enable the computation of a nearly optimal policy for any given reward function, utilizing only the trajectory data collected during exploration and without needing further interaction with the environment. This approach holds particular significance in scenarios where reward functions are refined over multiple iterations to encourage specific behaviors through trial and error, such as in constrained RL formulations. In such cases, repeatedly applying the same RL algorithm with varying reward functions can prove to be highly inefficient regarding sample usage, underscoring the efficiency of reward-free reinforcement learning.

\paragraph{Problem formulation} The major differences between the regret minimization setting (\cref{sec:problem-setup}) and the reward-free reinforcement learning are that in the latter, no reward signals are observed during the interaction, and the goal of the latter is to output a CMDP prediction $\Mhat$ whose value functions are close to the underlying true CMDP $\Mstar$ for any reward distributions. To accommodate such a change, for any model $M = \set{\Pm^h,\Rm^h}_{h\in [H]}$ and reward distribution $R = \set{R^h}_{h\in [H]}$, we define $M(\cdot;R) = \set{\Pm^h, R^h}_{h\in [H]}= \set{\Pm^h(c), R^h(c)}_{h\in [H],c\in \cC}$ to be model $M$ with the reward distribution part replaced by $R$. Thus in the reward-free reinforcement learning setting, the underlying true model satisfies $\Mstar = \Mstar(\cdot; 0)$ where $0$ is used to denote the reward distribution that is constantly $0$.  
\looseness=-1

For any model $M$ reward distribution $R$, context $c$ and policy $\pi$, we use $M(\pi,c;R)$ to denote the distribution of the trajectory $c_1, \pi_1, s_1\ind{1}, a_1\ind{1}, r_1\ind{1}, \dots, s_1\ind{H}, a_1\ind{H}, r_1\ind{H}$ given $\Mstar = M(\cdot;R)$, $c_1=c$, and $\pi_1=\pi$. Also denote the probability and the expectation under $M(\pi,c;R)$ to be $\PmpicR\prn{\cdot}$ and $\EnmpicR\brk{\cdot}$ respectively.
Given reward distribution $R$, any policy $\pi$, state $s$ and action $a$, we define the action value function $\Qstar(s,a;\pi,c,R)$ at layer $h$ and the value function $\Vstar(s;\pi,c,R)$ at layer $h$ under context $c$ and policy $\pi$ as
\begin{equation*}
    \Qstar(s,a;\pi,c,R)=\sum\limits_{j=h}^{H}\EnmpicR[\Mstar] \brk{r_1^{j} \mid{} s_1\ind{h},a_1\ind{h} = s,a } 
    \text{~~and~~}\Vstar[h](s;\pi,c,R)= \max_{a\in \cA} \Qstar(s,a;\pi,c,R).
\end{equation*}

We denote the optimal policy with reward distribution $R$ under context $c$ as $\pistarR$ and abbreviate its value function as $\Vstar[h](\cdot;c,R)$. 
For $h=1$, we denote $\Vstar(c,R) = \Vstar(s^1;c,R)$ and $\Vstar(\pi,c,R) = \Vstar(s^1;\pi,c,R)$.
We also denote $\Vm[M][h]$ as the value functions when $\Mstar = M$.
\begin{assumption}[Realizability for reward-free RL]
    Suppose the learner is given a model class $\cM$ that contains the underlying true model $\Mstar$. Assume all models $M\in \cM$ have $0$ reward.
\end{assumption}
For a given $\veps,\delta>0$ and a model class $\cM$, the goal of the learner is to output a model $\Mhat$ at the end of the interaction such that for any reward distribution $R$ and set of policies $\set{\pi_c}_{c\in \cC}$, the model satisfies
\begin{align}
    \label{ineq:rfrl-objective}
\En_{c\sim \cD}\brk*{\abs*{\Vstar(\pi_c,c,R) - \Vm[\Mhat](\pi_c,c,R)} } \leq \veps
\end{align}
with probability at least $1-\delta$. An algorithm that achieves this objective is called $(\veps, \delta)$-learns the model class $\cM$. 
Then we have the following guarantee from \cref{alg:mainalg}.
\begin{theorem}
    \label{thm:rfrl-main}
If we choose $\tau_1 = T/(2H)$ and $\tau_2 = T/H$,  
the outputs $\Mhat_2$ of \cref{alg:mainalg}
    satisfies the reward-free objective \cref{ineq:rfrl-objective} with probability at least $1-\delta$, with $T$ at most bounded by 
    \begin{align*}
    T \leq O\prn*{  H^7S^4A^3\log (|\cM|/\delta)/\veps^2  }
    \end{align*}
    for $\delta\in (0,1/2)$.
Moreover, the algorithm requires $O(H)$ number of oracle calls to the $\MLE$ oracle.\looseness=-1
\end{theorem}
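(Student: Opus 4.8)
The plan is to specialize the regret machinery of \cref{sec:regret-analysis} to the reward-free setting, exploiting two simplifications. Since the reward-free realizability assumption forces every model in $\cM$ to carry zero reward, the value functions of the algorithm's own estimates vanish, so the estimated regret satisfies $\reghatm[1]\equiv 0$; the inverse-gap-weighting kernel $\metapol(c)$ of \pref{line:igw} therefore collapses to the uniform distribution over the policy cover and the algorithm performs pure exploration. Moreover, the quantity to be controlled, $\En_{c\sim\cD}\brk*{\abs*{\Vstar(\pi_c,c,R)-\Vm[\Mhat_2](\pi_c,c,R)}}$, compares the true and estimated transition kernels under a \emph{common} external reward $R$.

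First I would establish a reward-free analogue of the per-epoch guarantee (\cref{lem:per-epoch-guarantee}). Applying the local simulation lemma (\cref{local simulation}) to the pair $\Mstar(\cdot;R)$ and $\Mhat_2(\cdot;R)$, which share the reward $R$ and differ only in their transitions, the reward contribution cancels and only the transition term survives:
\begin{align*}
    \abs*{\Vstar(\pi_c,c,R)-\Vm[\Mhat_2](\pi_c,c,R)}\le \sum\nolimits_{h,s,a}\widehat{d}_2^{h}(s,a;\pi_c,c)\,\Dhel{\Phatm[2](s,a;c)}{\Pstar(s,a;c)}.
\end{align*}
This right-hand side is precisely the transition-only expression already bounded in the proof of \cref{lem:per-epoch-guarantee}, and it is independent of $R$. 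I would then rerun that argument verbatim---passing to trusted occupancy measures via \cref{lem:dropped-state-bound}, invoking the coverage guarantee \cref{lem:coverage-bound}, and using the oracle error \cref{lem:offDE-bound}---and use $\reghatm[1]\equiv 0$ to annihilate every estimated-regret term, obtaining, with probability at least $1-\delta$,
\begin{align*}
    \En_{c\sim\cD}\brk*{\abs*{\Vstar(\pi_c,c,R)-\Vm[\Mhat_2](\pi_c,c,R)}}\le 77e\sqrt{H^6S^4A^3\cdot\cE_2}
\end{align*}
simultaneously for every reward $R$ and every family $\set{\pi_c}_{c\in\cC}$, since the controlling quantity is $R$-free.

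It then remains to plug in the schedule and the $\MLE$ rate. With $\tau_0=0$, $\tau_1=T/(2H)$, and $\tau_2=T/H$, both epochs use segments of length $T/(2H)$, so with $N=2$ we get $\cE_2=\cE_{\cM,\delta/8}(T/(2H))\lesssim H\log(|\cM|/\delta)/T$ for the $\MLE$ oracle. Substituting yields a bound of order $\sqrt{H^7S^4A^3\log(|\cM|/\delta)/T}$; requiring this to be at most $\veps$ and solving for $T$ gives the claimed sample complexity $T=O(H^7S^4A^3\log(|\cM|/\delta)/\veps^2)$. Finally, the run consists of $N=2$ epochs of $H$ segments, one oracle call each, for $2H=O(H)$ calls in total.

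The hard part will be the first step: verifying that the per-epoch analysis, originally phrased for each model's \emph{own} reward, transfers uniformly to an arbitrary external reward $R$. The key structural observation is that because $\Mstar(\cdot;R)$ and $\Mhat_2(\cdot;R)$ share $R$, the simulation lemma leaves a purely transition-based error whose next-step values are bounded by $1$ (as the total reward lies in $[0,1]$); this is exactly the object \cref{lem:per-epoch-guarantee} already controls, and being $R$-free the resulting bound holds for all $R$ at once, so no union bound over reward functions is needed.
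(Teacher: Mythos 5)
Your proposal is correct and follows essentially the same route as the paper's proof: specialize the per-epoch machinery (\cref{local simulation}, \cref{lem:dropped-state-bound}, \cref{lem:coverage-bound}, \cref{lem:bound_P_and_d}, \cref{lem:offDE-bound}) with $\reghatm[1]\equiv 0$, note that the shared reward $R$ cancels in the simulation lemma so the resulting bound is $R$-free, then plug in $\cE_2 \lesssim H\log(|\cM|/\delta)/T$ and solve for $T$. The paper makes the Case I/II split on the hypothesis of \cref{lem:bound_P_and_d} explicit, which your ``rerun verbatim'' step subsumes, so there is no substantive difference.
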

The proof follows a similar argument of \cref{lem:per-epoch-guarantee}. In addition, we have a matching lower bound up to a $\poly(H, S, A)$ factor adapted from the non-contextual lower bound from \citet{jin2020reward}. 
\begin{theorem}\label{rf-lower}
Fix $\veps\leq 1$, $\delta\leq 1/2$, $H,A\geq 2$. Suppose $S\geq L\log A$ for a large enough universal constant $L$ and $K\geq 0$ large enough. Then, there exists a CMDP class $\cM$ with $|\cS|=S$, $|\cA|=A$, $|\cM|=K$, and horizon $H$ and a distribution $\mu$ on $\cM$ such that any algorithm $\alg$ that $(\veps/24,\delta)$-learns the class $\cM$ satisfies $\En_{M\sim \mu}\En_{M,\alg} [T] \gtrsim \log |\cM|/\veps^2$,
    where $T$ is the number of trajectories required by the algorithm $\alg$ to achieve $(\veps/24,\delta)$ accuracy and $\En_{M,\alg}[\cdot]$ is the expectation under the interaction between the algorithm $\alg$ and model $M$. 
\end{theorem}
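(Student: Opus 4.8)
The plan is to establish a Bayesian, information-theoretic lower bound by embedding $d \ldef \floor{\log_2 |\cM|}$ independent copies of the non-contextual reward-free hard instance of \citet{jin2020reward}, one behind each of $d$ distinct contexts, and letting the prior $\mu$ be uniform over the resulting hypercube of models. Concretely, I would take $\cC = \set{c_1,\dots,c_d}$ with $\cD$ uniform, and attach to each context $c_i$ a single-bit gadget with a hidden parameter $b_i \in \{0,1\}$; the model class is then the set of all $2^d = |\cM|$ assignments $b=(b_1,\dots,b_d)$, and $\mu$ draws the bits i.i.d.\ uniform. Writing $\cO_T$ for the transcript observed by $\alg$ after its (possibly data-dependent) number of trajectories $T$, the whole argument reduces to sandwiching the mutual information $I(b;\cO_T)$: bounding it below by $\Omega(\log |\cM|)$ using that reward-free learning forces recovery of the bits, and above by $O(\veps^2)\cdot \En[T]$ using that each trajectory is only $O(\veps^2)$-informative.

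For the lower bound on $I(b;\cO_T)$ I would use two properties of the gadget, both imported from \citet{jin2020reward} (this is where the hypothesis $S \gtrsim \log A$ is consumed): its two sub-models are $\Omega(\veps)$-separated in value for some reward distribution, and realizing this separation requires $\Theta(\log A)$ states. Because the objective in \cref{ineq:rfrl-objective} is an average over $c\sim\cD$, I would pick a single context-dependent reward distribution $R$ together with policies $\set{\pi_c}$ that simultaneously probe every gadget; then a $(\veps/24,\delta)$-learner with output $\Mhat$ satisfies $\tfrac1d\sum_i \mathrm{err}_i \le \veps/24$ on the success event, where $\mathrm{err}_i$ is the per-context value error on $c_i$ under this coordinated choice. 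By Markov's inequality at least $d/2$ of the contexts have $\mathrm{err}_i \le \veps/12$, and for those the $\Omega(\veps)$-separation of the gadget forces $\Mhat$ to reveal $b_i$. Thus $\cO_T$ determines at least $d/2$ of the bits with probability $\ge 1-\delta$, and Fano's inequality (legitimate because the product prior makes the bits a priori independent and uniform) gives $I(b;\cO_T)\gtrsim (1-\delta)\,d \gtrsim \log|\cM|$.

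For the upper bound, the key point is that the two sub-models of each gadget induce trajectory laws that are $O(\veps)$-close under \emph{every} policy, so their per-trajectory KL divergence is $O(\veps^2)$; moreover a single trajectory lands in exactly one context $c_i$ and, since the bits are independent under $\mu$ and live in disjoint contexts, carries information about that coordinate only. Hence each trajectory contributes at most $O(\veps^2)$ to $I(b;\cO_T)$, and a chain-rule argument together with a Wald-type identity for the data-dependent stopping time $T$ yields $I(b;\cO_T)\le O(\veps^2)\,\En_{M\sim\mu}\En_{M,\alg}[T]$. Combining the two bounds gives $\En_{M\sim\mu}\En_{M,\alg}[T]\gtrsim \log|\cM|/\veps^2$, as claimed. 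Equivalently, the information aggregation can be replaced by an Assouad/counting argument: identifying each of the $d/2$ well-learned bits needs $\Omega(1/\veps^2)$ trajectories in its own context by Le~Cam, and since a trajectory serves only one context, at most $T\veps^2/c$ contexts can be sampled $\ge c/\veps^2$ times, again forcing $T\gtrsim d/\veps^2$.

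I expect the main obstacle to be the sequential, adaptive information accounting in the upper bound: the $O(\veps^2)$ per-trajectory bound must hold against an algorithm that adaptively chooses history-dependent policies and observes the \emph{entire} trajectory (all visited states, since the setting is reward-free), and it must be aggregated across a stopping time $T$ that the algorithm itself controls. Making the chain-rule/Wald step rigorous --- and, upstream of it, verifying that the \citet{jin2020reward} gadget keeps its two sub-models $O(\veps)$-close under every policy while remaining $\Omega(\veps)$-value-separated --- is the technically delicate core; the contextual product wrapper and the Fano/averaging step are comparatively routine.
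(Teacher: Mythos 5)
Your proposal is correct in outline and shares the paper's overall architecture --- a product construction placing independent hard instances behind uniformly drawn contexts, plus the same Markov/averaging step showing that an $(\veps/24,\delta)$-learner of $\cM$ must learn the per-context class to accuracy $\veps/12$ on at least half the contexts --- but the core is genuinely different. The paper treats \cref{lem:jin-rf} (Lemma D.2 of \citet{jin2020reward}) as a black box: each of $n=\log K/(SA)$ contexts hosts a per-context class of size $e^{SA}$ that comes with a ready-made $\Omega(SA/\veps^2)$ sample-complexity lower bound, and the final bound is simply the sum $\tfrac{n}{2}\cdot \tfrac{SA}{\veps^2}\gtrsim \log K/\veps^2$; no fresh information-theoretic work is needed, because the adaptive/stopping-time accounting and the use of $S\geq L\log A$ are all encapsulated in the cited lemma. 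You instead use $d=\log_2 K$ one-bit gadgets and re-derive the aggregation from scratch (Fano plus a per-trajectory $O(\veps^2)$ KL bound plus a Wald-type identity, or Le~Cam plus counting). This buys self-containedness and makes the information flow explicit, but it re-opens exactly the delicate issues the citation closes: you must extract from (or substitute for) Jin et al.'s construction a two-point gadget whose trajectory laws are $O(\veps)$-close under \emph{every} policy, and you must carry out the adaptive stopping-time chain rule yourself. Note also that a binary gadget does not actually consume $S\gtrsim \log A$; in the paper that hypothesis is needed so the per-context class can have size $e^{SA}$, which is what lets the paper get away with only $\log K/(SA)$ contexts.

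One step in your write-up is too lossy as literally stated: from ``$\cO_T$ determines at least $d/2$ of the bits'' Fano gives essentially nothing, since the Hamming ball of radius $d/2$ contains roughly half the hypercube, so this only yields $I(b;\cO_T)\gtrsim \log 2$ rather than $\Omega(d)$. The repair is the standard Assouad bookkeeping already implicit in your setup: each wrong bit contributes at least the separation ($\asymp \veps$) to the averaged value error, so the guarantee $\tfrac{1}{d}\sum_i \mathrm{err}_i\leq \veps/24$ forces the wrong fraction below a small constant (e.g.\ $1/12$ for separation $\geq \veps/2$), and a Hamming ball of that radius has size $2^{cd}$ with $c<1$, so Fano does recover $I(b;\cO_T)\gtrsim d$. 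Your alternative Le~Cam/counting route sidesteps this entirely and is the cleaner path here, with one simplification you should exploit: in this protocol the learner does \emph{not} choose contexts --- they arrive i.i.d.\ uniform --- so if $T=o(d/\veps^2)$ then each context receives $o(1/\veps^2)$ trajectories, each bit is misidentified with probability bounded below by a constant exceeding the tolerated wrong fraction, and the average-error guarantee is contradicted at confidence $\delta\leq 1/2$. With that repair, and the verification you yourself flag of the $O(\veps^2)$-per-trajectory property of the gadget, your argument goes through and yields the same $\log\abs{\cM}/\veps^2$ bound as the paper's considerably shorter black-box reduction.
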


\section{Discussion}
\label{sec:discussion}

In this paper, we establish a reduction from stochastic CMDPs to offline density estimation. In this section, we discuss some potential future research directions.

\paragraph{Extension to low rank CMDPs} Low rank MDPs represent a significant extension to tabular MDPs, as explored in various studies \citep{bradtke1996linear, melo2007q, jiang2017contextual, agarwal2020flambe}. Linear MDPs are typically the first step beyond tabular MDPs. Extending our approach to linear CMDPs would be a substantial achievement. The primary challenge lies in identifying the trusted transitions within linear CMDPs. The current construction for tabular CMDPs does not readily apply here because it does not utilize the low-rank structure. \looseness=-1

\paragraph{Extension to model-free learning} Our approach is model-based. However, model-free methods are often more practical for real-world applications. The main challenge lies in effectively balancing exploration and exploitation using only the value functions, as opposed to our method which depends on the occupancy measure.

\paragraph{More efficient oracles} In this paper, we focus on offline density estimation oracles due to the necessity of a small Hellinger distance between the estimated model and the true model for our approach. An offline regression oracle would only provide a 2-norm distance guarantee, which is inadequate for our proof to go through. Hence, it is interesting to explore whether a reduction from CMDPs to offline regression is feasible.

\section*{Acknowledgements}
Jian Qian acknowledges support from ARO through award W911NF-21-1-0328 and from the Simons Foundation and NSF through award DMS-2031883.

\bibliography{refs}

\begin{thebibliography}{39}
\providecommand{\natexlab}[1]{#1}
\providecommand{\url}[1]{\texttt{#1}}
\expandafter\ifx\csname urlstyle\endcsname\relax
  \providecommand{\doi}[1]{doi: #1}\else
  \providecommand{\doi}{doi: \begingroup \urlstyle{rm}\Url}\fi

\bibitem[Agarwal et~al.(2012)Agarwal, Dud{\'\i}k, Kale, Langford, and Schapire]{agarwal2012contextual}
Alekh Agarwal, Miroslav Dud{\'\i}k, Satyen Kale, John Langford, and Robert~E Schapire.
\newblock Contextual bandit learning with predictable rewards.
\newblock In \emph{Artificial Intelligence and Statistics}, 2012.

\bibitem[Agarwal et~al.(2014)Agarwal, Hsu, Kale, Langford, Li, and Schapire]{agarwal2014taming}
Alekh Agarwal, Daniel Hsu, Satyen Kale, John Langford, Lihong Li, and Robert Schapire.
\newblock Taming the monster: A fast and simple algorithm for contextual bandits.
\newblock In \emph{International Conference on Machine Learning}, pages 1638--1646, 2014.

\bibitem[Agarwal et~al.(2020)Agarwal, Kakade, Krishnamurthy, and Sun]{agarwal2020flambe}
Alekh Agarwal, Sham Kakade, Akshay Krishnamurthy, and Wen Sun.
\newblock {FLAMBE}: Structural complexity and representation learning of low rank {MDP}s.
\newblock \emph{Neural Information Processing Systems (NeurIPS)}, 2020.

\bibitem[Amortila et~al.(2024)Amortila, Foster, and Krishnamurthy]{amortila2024scalable}
Philip Amortila, Dylan~J. Foster, and Akshay Krishnamurthy.
\newblock Scalable online exploration via coverability, 2024.

\bibitem[Bradtke and Barto(1996)]{bradtke1996linear}
Steven~J Bradtke and Andrew~G Barto.
\newblock Linear least-squares algorithms for temporal difference learning.
\newblock \emph{Machine learning}, 22\penalty0 (1):\penalty0 33--57, 1996.

\bibitem[Charnes and Cooper(1962)]{charnes1962programming}
Abraham Charnes and William~W Cooper.
\newblock Programming with linear fractional functionals.
\newblock \emph{Naval Research logistics quarterly}, 9\penalty0 (3-4):\penalty0 181--186, 1962.

\bibitem[Chen et~al.(2022)Chen, Modi, Krishnamurthy, Jiang, and Agarwal]{chen2022statistical}
Jinglin Chen, Aditya Modi, Akshay Krishnamurthy, Nan Jiang, and Alekh Agarwal.
\newblock On the statistical efficiency of reward-free exploration in non-linear rl.
\newblock \emph{Advances in Neural Information Processing Systems}, 35:\penalty0 20960--20973, 2022.

\bibitem[Cheng et~al.(2023)Cheng, Huang, Yang, and Liang]{cheng2023improved}
Yuan Cheng, Ruiquan Huang, Jing Yang, and Yingbin Liang.
\newblock Improved sample complexity for reward-free reinforcement learning under low-rank mdps.
\newblock \emph{arXiv preprint arXiv:2303.10859}, 2023.

\bibitem[Deng et~al.(2024)Deng, Cheng, Zou, and Liang]{deng2024sample}
Junze Deng, Yuan Cheng, Shaofeng Zou, and Yingbin Liang.
\newblock Sample complexity characterization for linear contextual mdps.
\newblock \emph{arXiv preprint arXiv:2402.02700}, 2024.

\bibitem[Du et~al.(2019)Du, Luo, Wang, and Zhang]{du2019provably}
Simon~S Du, Yuping Luo, Ruosong Wang, and Hanrui Zhang.
\newblock Provably efficient {Q}-learning with function approximation via distribution shift error checking oracle.
\newblock In \emph{Advances in Neural Information Processing Systems}, pages 8060--8070, 2019.

\bibitem[Dudik et~al.(2011)Dudik, Hsu, Kale, Karampatziakis, Langford, Reyzin, and Zhang]{dudik2011efficient}
Miroslav Dudik, Daniel Hsu, Satyen Kale, Nikos Karampatziakis, John Langford, Lev Reyzin, and Tong Zhang.
\newblock Efficient optimal learning for contextual bandits.
\newblock In \emph{Proceedings of the Twenty-Seventh Conference on Uncertainty in Artificial Intelligence}, pages 169--178. AUAI Press, 2011.

\bibitem[Foster and Rakhlin(2020)]{foster2020beyond}
Dylan~J Foster and Alexander Rakhlin.
\newblock Beyond {UCB}: Optimal and efficient contextual bandits with regression oracles.
\newblock \emph{International Conference on Machine Learning (ICML)}, 2020.

\bibitem[Foster et~al.(2018)Foster, Agarwal, Dud{\'\i}k, Luo, and Schapire]{foster2018practical}
Dylan~J Foster, Alekh Agarwal, Miroslav Dud{\'\i}k, Haipeng Luo, and Robert~E. Schapire.
\newblock Practical contextual bandits with regression oracles.
\newblock \emph{International Conference on Machine Learning}, 2018.

\bibitem[Foster et~al.(2021)Foster, Kakade, Qian, and Rakhlin]{foster2021statistical}
Dylan~J Foster, Sham~M Kakade, Jian Qian, and Alexander Rakhlin.
\newblock The statistical complexity of interactive decision making.
\newblock \emph{arXiv preprint arXiv:2112.13487}, 2021.

\bibitem[Foster et~al.(2022)Foster, Rakhlin, Sekhari, and Sridharan]{foster2022complexity}
Dylan~J Foster, Alexander Rakhlin, Ayush Sekhari, and Karthik Sridharan.
\newblock On the complexity of adversarial decision making.
\newblock \emph{Advances in Neural Information Processing Systems}, 35:\penalty0 35404--35417, 2022.

\bibitem[Foster et~al.(2024)Foster, Han, Qian, and Rakhlin]{foster2024online}
Dylan~J Foster, Yanjun Han, Jian Qian, and Alexander Rakhlin.
\newblock Online estimation via offline estimation: An information-theoretic framework.
\newblock \emph{arXiv preprint arXiv:2404.10122}, 2024.

\bibitem[Hallak et~al.(2015)Hallak, Castro, and Mannor]{hallak2015contextual}
Assaf Hallak, Dotan~Di Castro, and Shie Mannor.
\newblock Contextual markov decision processes, 2015.

\bibitem[Hu et~al.(2022)Hu, Chen, and Huang]{hu2022towards}
Pihe Hu, Yu~Chen, and Longbo Huang.
\newblock Towards minimax optimal reward-free reinforcement learning in linear mdps.
\newblock In \emph{The Eleventh International Conference on Learning Representations}, 2022.

\bibitem[Jiang et~al.(2017)Jiang, Krishnamurthy, Agarwal, Langford, and Schapire]{jiang2017contextual}
Nan Jiang, Akshay Krishnamurthy, Alekh Agarwal, John Langford, and Robert~E Schapire.
\newblock Contextual decision processes with low {Bellman} rank are {PAC}-learnable.
\newblock In \emph{International Conference on Machine Learning}, pages 1704--1713, 2017.

\bibitem[Jin et~al.(2020)Jin, Krishnamurthy, Simchowitz, and Yu]{jin2020reward}
Chi Jin, Akshay Krishnamurthy, Max Simchowitz, and Tiancheng Yu.
\newblock Reward-free exploration for reinforcement learning.
\newblock In \emph{International Conference on Machine Learning}, pages 4870--4879. PMLR, 2020.

\bibitem[Lee and Sidford(2019)]{lee2019solving}
Yin~Tat Lee and Aaron Sidford.
\newblock Solving linear programs with sqrt(rank) linear system solves.
\newblock \emph{arXiv preprint arXiv:1910.08033}, 2019.

\bibitem[Levy and Mansour(2023)]{levy2023optimism}
Orin Levy and Yishay Mansour.
\newblock Optimism in face of a context: Regret guarantees for stochastic contextual mdp, 2023.

\bibitem[Li et~al.(2023)Li, Yan, Chen, and Fan]{li2023optimal}
Gen Li, Yuling Yan, Yuxin Chen, and Jianqing Fan.
\newblock Optimal reward-agnostic exploration in reinforcement learning.
\newblock 2023.

\bibitem[Melo and Ribeiro(2007)]{melo2007q}
Francisco~S Melo and M~Isabel Ribeiro.
\newblock Q-learning with linear function approximation.
\newblock In \emph{International Conference on Computational Learning Theory}, pages 308--322. Springer, 2007.

\bibitem[Mhammedi et~al.(2023)Mhammedi, Foster, and Rakhlin]{mhammedi2023representation}
Zakaria Mhammedi, Dylan~J Foster, and Alexander Rakhlin.
\newblock Representation learning with multi-step inverse kinematics: An efficient and optimal approach to rich-observation rl.
\newblock In \emph{International Conference on Machine Learning}, pages 24659--24700. PMLR, 2023.

\bibitem[Mhammedi et~al.(2024)Mhammedi, Block, Foster, and Rakhlin]{mhammedi2024efficient}
Zakaria Mhammedi, Adam Block, Dylan~J. Foster, and Alexander Rakhlin.
\newblock Efficient model-free exploration in low-rank mdps, 2024.

\bibitem[Miryoosefi and Jin(2022)]{miryoosefi2022simple}
Sobhan Miryoosefi and Chi Jin.
\newblock A simple reward-free approach to constrained reinforcement learning.
\newblock In \emph{International Conference on Machine Learning}, pages 15666--15698. PMLR, 2022.

\bibitem[Modi et~al.(2018)Modi, Jiang, Singh, and Tewari]{modi2018markov}
Aditya Modi, Nan Jiang, Satinder Singh, and Ambuj Tewari.
\newblock Markov decision processes with continuous side information.
\newblock In \emph{Algorithmic Learning Theory}, pages 597--618. PMLR, 2018.

\bibitem[Polyanskiy and Wu(2014)]{polyanskiy2014lecture}
Yury Polyanskiy and Yihong Wu.
\newblock Lecture notes on information theory.
\newblock \emph{Lecture Notes for ECE563 (UIUC) and}, 6\penalty0 (2012-2016):\penalty0 7, 2014.

\bibitem[Puterman(2014)]{puterman2014markov}
Martin~L Puterman.
\newblock \emph{Markov decision processes: discrete stochastic dynamic programming}.
\newblock John Wiley \& Sons, 2014.

\bibitem[Simchi-Levi and Xu(2021)]{simchi2020bypassing}
David Simchi-Levi and Yunzong Xu.
\newblock Bypassing the monster: A faster and simpler optimal algorithm for contextual bandits under realizability.
\newblock \emph{Mathematics of Operations Research}, 2021.

\bibitem[Sutton and Barto(2018)]{sutton2018reinforcement}
Richard~S Sutton and Andrew~G Barto.
\newblock \emph{Reinforcement learning: An introduction}.
\newblock MIT press, 2018.

\bibitem[Wagenmaker et~al.(2022)Wagenmaker, Simchowitz, and Jamieson]{wagenmaker2022regret}
Andrew Wagenmaker, Max Simchowitz, and Kevin Jamieson.
\newblock Beyond no regret: Instance-dependent pac reinforcement learning, 2022.

\bibitem[Wang et~al.(2020)Wang, Du, Yang, and Salakhutdinov]{wang2020reward}
Ruosong Wang, Simon~S Du, Lin Yang, and Russ~R Salakhutdinov.
\newblock On reward-free reinforcement learning with linear function approximation.
\newblock \emph{Advances in neural information processing systems}, 33:\penalty0 17816--17826, 2020.

\bibitem[Xu and Zeevi(2020)]{xu2020upper}
Yunbei Xu and Assaf Zeevi.
\newblock Upper counterfactual confidence bounds: a new optimism principle for contextual bandits.
\newblock \emph{arXiv preprint arXiv:2007.07876}, 2020.

\bibitem[Xu and Zeevi(2024)]{xu2024upper}
Yunbei Xu and Assaf Zeevi.
\newblock Upper counterfactual confidence bounds: a new optimism principle for contextual bandits, 2024.

\bibitem[Zhang et~al.(2022{\natexlab{a}})Zhang, Song, Uehara, Wang, Agarwal, and Sun]{zhang2022efficient}
Xuezhou Zhang, Yuda Song, Masatoshi Uehara, Mengdi Wang, Alekh Agarwal, and Wen Sun.
\newblock Efficient reinforcement learning in block mdps: A model-free representation learning approach.
\newblock In \emph{International Conference on Machine Learning}, pages 26517--26547. PMLR, 2022{\natexlab{a}}.

\bibitem[Zhang et~al.(2020)Zhang, Du, and Ji]{zhang2020nearly}
Zihan Zhang, Simon~S Du, and Xiangyang Ji.
\newblock Nearly minimax optimal reward-free reinforcement learning.
\newblock \emph{arXiv preprint arXiv:2010.05901}, 2020.

\bibitem[Zhang et~al.(2022{\natexlab{b}})Zhang, Jiang, Zhou, and Ji]{zhang2022near}
Zihan Zhang, Yuhang Jiang, Yuan Zhou, and Xiangyang Ji.
\newblock Near-optimal regret bounds for multi-batch reinforcement learning.
\newblock \emph{Advances in Neural Information Processing Systems}, 35:\penalty0 24586--24596, 2022{\natexlab{b}}.

\end{thebibliography}

\appendix

\section{Technical Tools}
\label{app:technical}

\subsection{Maximum Likelihood Estimation for Density Estimation}

\begin{example}[MLE for finite model class]
\label{lem:mle-random-design}
Let $\cM$ be a finite model class and the MLE estimator $\Mhat$ be defined by
\begin{align*}
\Mhat = \argmax_{M\in \cM} \prod_{i=1}^n \Pmpic[M][\pi_i][c_i] \prn*{\set{s_i\ind{h}, a_i\ind{h}, r_i\ind{h}}_{h\in [H]}}.
\end{align*}
For any $\delta\in (0,1/2)$, we have with probability at least $1-\delta$, 
\begin{align*}
    \En_{c\sim \cD,\pi\sim p(c)}\brk*{\Dhels{\Mhat(\pi,c)}{\Mstar(\pi,c)} } \lesssim   \frac{\log (|\cM|/\delta)}{n} .
\end{align*}
\end{example}

\begin{proof}[\pfref{lem:mle-random-design}]
For any $\delta\in (0,1/2)$, by Lemma C.2 of \citet{foster2024online}, we have with probability at least $1-\delta/2$,
\begin{align*}
    \sum_{i=1}^n \Dhels{\Mhat(\pi_i,c_i)}{\Mstar(\pi_i,c_i)} \lesssim \log (|\cM|/\delta).
\end{align*}
Then by Lemma A.3 of \citet{foster2021statistical}, we have with probability at least $1-\delta/2$,
\begin{align*}
        \En_{c\sim \cD,\pi\sim p(c)}\brk*{\Dhels{\Mhat(\pi,c)}{\Mstar(\pi,c)} } \lesssim \sum_{i=1}^n \Dhels{\Mhat(\pi_i,c_i)}{\Mstar(\pi_i,c_i)} + \log (|\cM|/\delta).
\end{align*}
Then by union bound, we obtain the desired result.
\end{proof}

\subsection{Information Theory}

\begin{lemma}[Lemma B.4 of \citet{foster2022complexity}]
    \label{lem:hellinger-with-variance}
    Let $\mathbb{P}$ and $\mathbb{Q}$ be two distributions on space $\chi$. Let $h:\chi\rightarrow R$ be a function. Then we have:
    $$|\bE_{\mathbb{P}}[h]-\bE_{\mathbb{Q}}[h]|\le \sqrt{2^{-1}(\bE_{\mathbb{P}}[h^2]+\bE_{\mathbb{Q}}[h^2])D_{H}^2(\mathbb{P},\mathbb{Q})}.$$
\end{lemma}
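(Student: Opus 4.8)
The plan is to prove this by a single application of the Cauchy--Schwarz inequality after passing to a common dominating measure. Fix any measure $\nu$ dominating both $\mathbb{P}$ and $\mathbb{Q}$ (for instance $\nu = \mathbb{P} + \mathbb{Q}$) and write $p = \frac{d\mathbb{P}}{d\nu}$ and $q = \frac{d\mathbb{Q}}{d\nu}$. The whole argument is then an elementary manipulation of the integral $\int h\,(p-q)\,d\nu$, and one verifies at the end that the bound is invariant under the choice of $\nu$, since both sides are expressed through $\mathbb{P}$, $\mathbb{Q}$, and $D_H^2(\mathbb{P},\mathbb{Q})$, each of which is itself $\nu$-independent.

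First I would rewrite the difference of expectations and introduce the Hellinger integrand by factoring the density difference:
\[
\bE_{\mathbb{P}}[h] - \bE_{\mathbb{Q}}[h] = \int h\,(p - q)\, d\nu = \int h\,\bigl(\sqrt{p} - \sqrt{q}\bigr)\bigl(\sqrt{p} + \sqrt{q}\bigr)\, d\nu,
\]
using $p - q = (\sqrt p - \sqrt q)(\sqrt p + \sqrt q)$. Grouping $\sqrt p - \sqrt q$ as one factor and $h\,(\sqrt p + \sqrt q)$ as the other, Cauchy--Schwarz yields
\[
\bigl|\bE_{\mathbb{P}}[h] - \bE_{\mathbb{Q}}[h]\bigr| \le \sqrt{\int \bigl(\sqrt p - \sqrt q\bigr)^2\, d\nu}\;\cdot\;\sqrt{\int h^2 \bigl(\sqrt p + \sqrt q\bigr)^2\, d\nu},
\]
where the first factor is, by definition, a fixed multiple of the squared Hellinger distance $D_H^2(\mathbb{P},\mathbb{Q})$.

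It remains to control the second factor, which is the only place requiring care. Expanding $(\sqrt p + \sqrt q)^2 = p + q + 2\sqrt{pq}$ and discarding the cross term via the pointwise inequality $2\sqrt{pq} \le p + q$ gives $(\sqrt p + \sqrt q)^2 \le 2(p+q)$, hence
\[
\int h^2 \bigl(\sqrt p + \sqrt q\bigr)^2\, d\nu \le 2\int h^2 (p + q)\, d\nu = 2\bigl(\bE_{\mathbb{P}}[h^2] + \bE_{\mathbb{Q}}[h^2]\bigr).
\]
Substituting both factors back produces an inequality of exactly the claimed form, $\bigl|\bE_{\mathbb{P}}[h]-\bE_{\mathbb{Q}}[h]\bigr| \lesssim \sqrt{(\bE_{\mathbb{P}}[h^2]+\bE_{\mathbb{Q}}[h^2])\,D_H^2(\mathbb{P},\mathbb{Q})}$. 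There is no probabilistic machinery involved beyond Cauchy--Schwarz and a scalar AM--GM bound; the main (and essentially only) obstacle is bookkeeping the numerical constant, which is pinned down jointly by the normalization chosen for $D_H^2$ and by the factor-$2$ slack incurred when bounding the cross term $2\int h^2\sqrt{pq}\,d\nu$, so one must keep the Hellinger normalization consistent throughout.
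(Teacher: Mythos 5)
Your route --- pass to a dominating measure, factor $p-q=(\sqrt p-\sqrt q)(\sqrt p+\sqrt q)$, apply Cauchy--Schwarz, and bound $(\sqrt p+\sqrt q)^2\le 2(p+q)$ --- is the canonical proof of this lemma; the paper itself gives no proof at all (it imports the statement by citation from \citet{foster2022complexity}), and every displayed step in your write-up is valid, including the observation that the final bound is independent of the choice of $\nu$.

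The genuine gap is your closing claim that the numerical constant ``is pinned down'' to the stated $2^{-1}$: it is not, and no sharpening can make it so. Tracking constants under this paper's own normalization $D_{\mathrm{H}}^2(\bP,\bQ)=\frac{1}{2}\int(\sqrt p-\sqrt q)^2\,d\nu$, your first Cauchy--Schwarz factor is $\sqrt{2D_{\mathrm{H}}^2(\bP,\bQ)}$ and your second is $\sqrt{2\prn{\bE_{\bP}[h^2]+\bE_{\bQ}[h^2]}}$, so your argument proves
\[
\abs{\bE_{\bP}[h]-\bE_{\bQ}[h]}\;\le\; 2\sqrt{\prn{\bE_{\bP}[h^2]+\bE_{\bQ}[h^2]}\,D_{\mathrm{H}}^2(\bP,\bQ)},
\]
which is $2\sqrt{2}$ times the right-hand side stated in the lemma. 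The stated version is in fact false under the paper's normalization: take $\bP=\delta_x$, $\bQ=\delta_y$ with $x\neq y$ and $h$ the indicator of $x$; then the left side equals $1$, while $\bE_{\bP}[h^2]+\bE_{\bQ}[h^2]=1$ and $D_{\mathrm{H}}^2(\bP,\bQ)=1$, so the stated right side is $1/\sqrt{2}<1$. Moreover the slack in your proof is essentially unavoidable: on two atoms with $p=(4\epsilon,1-4\epsilon)$, $q=(\epsilon,1-\epsilon)$ and $h$ the indicator of the first atom, the ratio of the two sides forces the constant inside the square root to be at least $18/5$ (under the paper's normalization) as $\epsilon\to 0$, so your constant $4$ is close to sharp and the printed $2^{-1}$ is off by nearly an order of magnitude at the squared level. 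The discrepancy is a transcription/normalization error in the restatement: the source uses the unnormalized convention $D_{\mathrm{H}}^2(\bP,\bQ)=\int(\sqrt{d\bP}-\sqrt{d\bQ})^2$, for which the correct inequality --- exactly what your computation delivers --- reads $\abs{\bE_{\bP}[h]-\bE_{\bQ}[h]}\le\sqrt{2\prn{\bE_{\bP}[h^2]+\bE_{\bQ}[h^2]}}\cdot D_{\mathrm{H}}(\bP,\bQ)$. So you have correctly proved the true statement, but you should have flagged the stated constant as unattainable rather than waving it through as bookkeeping; the error is harmless downstream, since the uses of this lemma (e.g., in the proof of \cref{lem:bound_P_and_d}) only require the bound up to an absolute constant, at the cost of adjusting the paper's explicit numerical choices.
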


\subsection{Reinforcement Learning}

\begin{lemma}[Lemma F.3 of \cite{foster2021statistical}]
\label{local simulation}
    Let $M=\set{\Pm^h,\Rm^h}_{h\in [H]}$ and $\Mbar= \set{\Pm[\Mbar]^{h},\Rm[\Mbar]^{h}}_{h\in [H]}$ be two CMDPs. For any policy $\pi$ and context $c$, we have
    \begin{align*}
        &\Vm(\pi,c) - \Vm[\Mbar](\pi,c) \\
        &= \sum\limits_{h=1}^{H} \Enmpic[\Mbar] \brk*{ \prn*{ \Pm^h(s_1^{h+1}|s_1^h,a_1^h;c) - \Pm[\Mbar]^h(s_1^{h+1}|s_1^h,a_1^h;c) } \Vm[M][h+1](s_1^{h+1};\pi,c)      }\\
        &\quad + \sum\limits_{h=1}^{H} \Enmpic[\Mbar] \brk*{ \En_{r^h\sim \Rm(s_1^h,a_1^h;c)} [r^h] - \En_{r^h\sim \Rm[\Mbar](s_1^h,a_1^h;c)} [r^h]  }\\
        &\leq \sum\limits_{h=1}^{H} \sum\limits_{s,a}\Enmpic[\Mbar] \brk*{\mathbbm{1}( s_1^h,a_1^h= s,a)  } \prn*{  \Dhel{\Pm^h(s,a;c)}{\Pm[\Mbar]^h(s,a;c)} + \Dhel{\Rm^h(s,a;c)}{\Rm[\Mbar]^h(s,a;c)} }.
    \end{align*}

\end{lemma}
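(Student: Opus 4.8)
The plan is to prove the identity by a standard telescoping (hybrid) decomposition across the $H$ layers, and then to convert each per-layer term into a Hellinger distance using the variance-form estimate \cref{lem:hellinger-with-variance}. Throughout, fix the context $c$ and policy $\pi$ and recall that the value functions appearing are the policy value functions of the respective models, so each obeys a policy Bellman equation. First I would introduce, for each $h\in[H+1]$, the hybrid quantity
\begin{align*}
    W_h := \Enmpic[\Mbar]\brk*{ \sum\nolimits_{j=1}^{h-1} r_1^j + \Vm[M][h](s_1^h;\pi,c) },
\end{align*}
where the trajectory $s_1^1,a_1^1,r_1^1,\dots$ is generated by running $\Mbar$ under $\pi$ from the fixed start state $s^1$. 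Thus $W_h$ accumulates the realized rewards of $\Mbar$ for the first $h-1$ steps and then ``plugs in'' the value function of the fixed model $M$ at layer $h$. The boundary cases are $W_1 = \Vm[M][1](s^1;\pi,c) = \Vm(\pi,c)$ (empty sum, $s_1^1=s^1$ deterministic) and $W_{H+1} = \Vm[\Mbar](\pi,c)$ (since $\Vm[M][H+1]\equiv 0$). Hence $\Vm(\pi,c) - \Vm[\Mbar](\pi,c) = \sum_{h=1}^H (W_h - W_{h+1})$, reducing the claim to analyzing consecutive differences.

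Next I would compute $W_h - W_{h+1}$. The accumulated rewards cancel except for the step-$h$ reward, leaving $W_h - W_{h+1} = \Enmpic[\Mbar]\brk{\Vm[M][h](s_1^h;\pi,c) - r_1^h - \Vm[M][h+1](s_1^{h+1};\pi,c)}$. Expanding $\Vm[M][h](s_1^h;\pi,c)$ through its policy Bellman equation \emph{under} $M$, while all outer expectations (over $a_1^h\sim\pi^h$, over $r_1^h$, and over $s_1^{h+1}$) are taken \emph{under} $\Mbar$, the two value terms combine into a transition-difference term weighted by $M$'s next-layer value, and the reward term becomes a mean-reward difference:
\begin{align*}
    W_h - W_{h+1} &= \Enmpic[\Mbar]\Big[ \En_{r\sim\Rm^h(s_1^h,a_1^h;c)}[r] - \En_{r\sim\Rm[\Mbar]^h(s_1^h,a_1^h;c)}[r] \\
    &\qquad + \sum\nolimits_{s'}\prn*{\Pm^h(s'\mid s_1^h,a_1^h;c) - \Pm[\Mbar]^h(s'\mid s_1^h,a_1^h;c)}\Vm[M][h+1](s';\pi,c) \Big].
\end{align*}
Summing over $h$ yields precisely the exact equality in the statement, where the inner summation variable $s'$ is what is there denoted $s_1^{h+1}$.

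Finally, to obtain the inequality I would bound each per-layer term pointwise in $(s,a)$ and re-express the expectation as an occupancy-weighted sum via $\Enmpic[\Mbar]\brk{g(s_1^h,a_1^h)} = \sum_{s,a}\Enmpic[\Mbar]\brk{\mathbbm{1}(s_1^h,a_1^h=s,a)}\,g(s,a)$. For the transition term, applying \cref{lem:hellinger-with-variance} with $\bP=\Pm^h(s,a;c)$, $\bQ=\Pm[\Mbar]^h(s,a;c)$ and test function $f=\Vm[M][h+1](\cdot;\pi,c)$ gives
\begin{align*}
    \abs*{\sum\nolimits_{s'}\prn*{\Pm^h(s'\mid s,a;c) - \Pm[\Mbar]^h(s'\mid s,a;c)}\Vm[M][h+1](s';\pi,c)} \le \Dhel{\Pm^h(s,a;c)}{\Pm[\Mbar]^h(s,a;c)},
\end{align*}
where the constant is $1$ because $\Vm[M][h+1]\in[0,1]$ (a partial return, by the normalization $0\le\sum_h r^h\le 1$), so $\tfrac12(\En_\bP[f^2]+\En_\bQ[f^2])\le 1$. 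For the reward term, the same lemma with $f(r)=r\in[0,1]$ gives $\abs{\En_{r\sim\Rm^h(s,a;c)}[r]-\En_{r\sim\Rm[\Mbar]^h(s,a;c)}[r]}\le \Dhel{\Rm^h(s,a;c)}{\Rm[\Mbar]^h(s,a;c)}$. Taking absolute values and the triangle inequality across the two terms then produces the claimed occupancy-weighted bound.

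The proof is mostly routine bookkeeping, so the only points I would watch carefully are (i) ensuring the value function carried through the telescoping is that of the \emph{fixed} model $M$ — this is exactly what makes the estimate ``local'' and what lets \cref{lem:hellinger-with-variance} apply with $M$'s value as the test function; and (ii) tracking the constant in \cref{lem:hellinger-with-variance}, since obtaining a single (unsquared, undoubled) Hellinger distance relies on the $[0,1]$ boundedness of both the partial returns and the per-step rewards to force $\tfrac12(\En_\bP[f^2]+\En_\bQ[f^2])\le 1$.
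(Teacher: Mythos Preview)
Your proof is correct and follows the standard telescoping (performance-difference) argument for the simulation lemma, together with the variance-form Hellinger bound (\cref{lem:hellinger-with-variance}) to pass to Hellinger distances. The paper does not supply its own proof of this statement: it is quoted verbatim as Lemma~F.3 of \citet{foster2021statistical} in the technical-tools appendix, so there is nothing further to compare against.
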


\section{Proofs from \cref{sec:algorithm}}
\label{app:algorithm}
In this section, we present the proofs for \cref{lem:igw-regret-bound,lem:coverage-bound,lem:offDE-bound,lem:dropped-state-bound,lem:bound_P_and_d}.

\begin{proof}[\pfref{lem:igw-regret-bound}]
    We fix an arbitrary context $c$ throughout the proof. 
Let $u(\lambda)\ldef \sum_{\pi\in \polCov} 1/\prn{\lambda +  \etam\cdot\reghatm[m-1](\pi)   }$.
Since for any $h\in [H]$, $\pimhat[m-1]\in \polCov$, then $u(\lambda) \geq 1/\prn{\lambda +  \etam\cdot\reghatm[m-1](\pimhat[m-1])   } = 1/\lambda$. On the other hand, $u(\lambda) \leq (SA+1)/\lambda$. Moreover, $u(\lambda)$ is clearly monotonically decreasing with $u(0) = \infty$ and $u(SA+1) \leq 1$. Thus there exists $\lambdam\in (0,SA+1]$ such that $u(\lambdam)=1$ as we desired.

Now we have the regret is bounded by
\begin{align*}
    \En_{\pi\sim p_{m}^{h}(c)}\brk*{\widehat{\mathrm{reg}}_{m-1}(\pi;c)}=\sum_{\pi\in\Pi_{m,c}^{h}}\frac{\widehat{\mathrm{reg}}_{m-1}(\pi;c)}{\lambda_{m,c}^h+\eta_m\cdot\widehat{\mathrm{reg}}_{m-1}(\pi;c)}\le\sum_{\pi\in\Pi_{m}^{h}(c)}\frac{1}{\eta_m}\leq \frac{2SA}{\eta_m}.
\end{align*}
Finally, we recall $\eta_m=\gamma_m/(720e^3H^5S^3A^2)$ and $\gammam = \sqrt{\frac{H^6S^4A^3}{\cE_m}}$ plug this bound in, then we have
\begin{align*}
    \En_{\pi\sim p_{m}^{h}(c)}\brk*{\widehat{\mathrm{reg}}_{m-1}(\pi;c)} \leq 1440e^3\cdot \sqrt{H^4S^4A^3\cdot \cE_m}.
\end{align*}
\end{proof}

\begin{proof}[\pfref{lem:offDE-bound}]
By definition of  \cref{def:offline-DE-oracle}.
\end{proof}

\begin{proof}[\pfref{lem:dropped-state-bound}]
For any $m,\pi,c,h,s,a$, by the definition of $\widetilde{d}_{m}^{h}(s,a;\pi,c)$, we the difference between $\widehat{d}_{m}^{h}(s,a;\pi,c)$ and $\widetilde{d}_{m}^{h}(s,a;\pi,c)$ are the parts of occupancy measures that do not go through the trusted transitions, i.e.,
\begin{align*}
    \revindent[.3] \widehat{d}_{m}^{h}(s,a;\pi,c) - \widetilde{d}_{m}^{h}(s,a;\pi,c)   \\
    & = \sum_{j=1}^{h-1}\sum_{(s^j,a^j,s^{j+1})\notin \widetilde{\mathcal{T}}_{m}^{j}(c)}\widetilde{d}^{j}_{m}(s^j,a^j;\pi,c)\widehat{P}^{j}_{m}(s^{j+1}|s^j,a^j;c)\widehat{P}^{j+1:h}_{m+1}(s|s^{j+1};\pi,c)\pi^{h}(s,a),
\end{align*}
where $\widehat{P}^{j+1:h}_{m+1}(s|s^{j+1};\pi,c)$ is the estimated transition probability from $s^{j+1}$ at step $j+1$ to $s$ at step $h$ under policy $\pi$ and context $c$. Then since $(s^j,a^j,s^{j+1})\notin \widetilde{\mathcal{T}}_{m}^{j}(c)$, we have 
\begin{align*}
    &\sum_{j=1}^{h-1}\sum_{(s^j,a^j,s^{j+1})\notin \widetilde{\mathcal{T}}_{m}^{j}(c)}\widetilde{d}^{j}_{m}(s^j,a^j;\pi,c)\widehat{P}^{j}_{m}(s^{j+1}|s^j,a^j;c)\widehat{P}^{j+1:h}_{m+1}(s|s^{j+1};\pi,c)\pi^{h}(s,a) \\
    &\leq \sum_{j=1}^{h-1}\sum_{(s^j,a^j,s^{j+1})\notin \widetilde{\mathcal{T}}_{m}^{j}(c)}\widetilde{d}^{j}_{m}(s^j,a^j;\pi,c)\widehat{P}^{j}_{m}(s^{j+1}|s^j,a^j;c) \\
    &\leq \sum_{j=1}^{h-1}\sum_{(s^j,a^j,s^{j+1})\notin \widetilde{\mathcal{T}}_{m}^{j}(c)} \frac{SA+\eta_m\widehat{\mathrm{reg}}_{m-1}(\pi;c)}{\zeta_m} \\
    &\leq \frac{hS^2A (SA+\eta_m\widehat{\mathrm{reg}}_{m-1}(\pi;c))}{\zetam}.
\end{align*}
Recall the choice of $\zetam=\frac{\gammam}{8eH(H+1)^2}$, $\etam=\frac{\gammam}{720e^3H^5S^3A^2}$ and $\gammam=\sqrt{\frac{H^6S^4A^3}{\cE_m}}$, we have
\begin{align*}
    \widehat{d}_{m}^{h}(s,a;\pi,c) - \widetilde{d}_{m}^{h}(s,a;\pi,c)\leq 32e \sqrt{H^4S^2A\cdot \cE_m}  + \frac{1}{90HSA} \widehat{\mathrm{reg}}_{m-1}(\pi;c).
\end{align*}
\end{proof}

\begin{proof}[\pfref{lem:bound_P_and_d}]
We prove iteratively between the objective 
\begin{align*}
    \widetilde{d}^{h}_{m}(s,a;\pi,c)\le \prn*{1+\frac{1}{H}}^{2(h-1)}d^h_{m}(s,a;\pi,c)
\end{align*}
and the following claim:
For any $h$ and any $(s,a,s')\in \Ttil(c)$, we have 
\begin{align*}
  \Phatm(s'|s,a;c)\leq \prn*{1+\frac{1}{H}}^{2} \Pstar(s'|s,a;c).
\end{align*}
First, we show that for any $h\in [H]$,
\begin{align}
  \revindent\forall \pi,s,a,~\ocmtil[m](s,a;\pi,c) \leq \prn*{1+\frac{1}{H}}^{2(h-1)} \ocmm(s,a;\pi,c)\notag\\
  &~~\tto~~\forall (s,a,s')\in\Ttil(c),~\Phatm(s'|s,a;c) \leq \prn*{1+ \frac{1}{H}}^2 \Pstar(s'|s,a;c).\label{ineq:d-to-P}
\end{align}
For this, we note by \cref{lem:hellinger-with-variance} that for any $h,s,a,s'$,
\begin{align}
  &\Phatm(s'|s,a;c) \\
  &\leq \Pstar(s'|s,a;c) + \sqrt{2^{-1}(\Phatm(s'|s,a;c) +  \Pstar(s'|s,a;c) ) \Dhels{\Ber(\Phatm(s'|s,a;c))}{\Ber(\Pstar(s'|s,a;c))} }\notag\\
  &\leq \Pstar(s'|s,a;c) + \sqrt{2^{-1}(\Phatm(s'|s,a;c) +  \Pstar(s'|s,a;c) ) \Dhels{\Phatm(s,a;c)}{\Pstar(s,a;c)} }, \label{ineq:hellinger-variance}
\end{align}
where the second inequality is by data-processing inequality \citep{polyanskiy2014lecture}.
Then by AM-GM, we have
\begin{align*}
  \revindent\sqrt{2^{-1}(\Phatm(s'|s,a;c) +  \Pstar(s'|s,a;c) ) \Dhels{\Phatm(s,a;c)}{\Pstar(s,a;c)} } \\
  &\leq \frac{1}{4H}(\Phatm(s'|s,a;c) +  \Pstar(s'|s,a;c) ) + H \cdot \Dhels{\Phatm(s,a;c)}{\Pstar(s,a;c)}.
\end{align*}
Plug the above back into \cref{ineq:hellinger-variance} and reorganize, we obtain
\begin{align*}
  \Phatm(s'|s,a;c) &\leq \prn*{1+ \frac{1}{H}} \cdot \Pstar(s'|s,a;c) + (H+1) \Dhels{\Phatm(s,a;c)}{\Pstar(s,a;c)}.
\end{align*}
Then multiply both sides by $\ocmtil(s,a;\pi,c)$, we have
\begin{align*}
  \ocmtil(s,a;\pi,c)\prn*{\Phatm(s'|s,a;c) - \prn*{1+ \frac{1}{H}}\Pstar(s'|s,a;c)} &\leq  (H+1)\ocmtil(s,a;\pi,c) \Dhels{\Phatm(s,a;c)}{\Pstar(s,a;c)}.
\end{align*} 
Meanwhile, by the definition of $\pol$, we have
\begin{align*}
  \ocmtil(s,a;\pi,c) \leq \frac{\ocmtil(s,a;\pol,c) }{SA + \etam\cdot\reghatm[m-1](\pol,c)} \cdot (SA + \etam\cdot\reghatm[m-1](\pi,c)).
\end{align*}
Then, by the induction hypothesis, we have
\begin{align*}
  \revindent\frac{\ocmtil(s,a;\pol,c) }{SA + \etam\cdot\reghatm[m-1](\pol,c)} \cdot (SA + \etam\cdot\reghatm[m-1](\pi,c)) \\
  &\leq \frac{e^2\ocmm(s,a;\pol,c) }{SA + \etam\cdot\reghatm[m-1](\pol,c)} \cdot (SA + \etam\cdot\reghatm[m-1](\pi,c)).
\end{align*}
Thus we have further by the definition of $\metapol(c)$ and the assumption that $$\bE_{\pi\sim  \metapol(c)}\Enmpic[\Mstar]\left[
    \Dhels{\widehat{P}^h_{m}(s_1^h,a_1^h;c)}{\Pstar(s_1^h,a_1^h;c)}
    \right] \leq H/\gammam,$$
\begin{align*}
  \revindent[.3](H+1)\ocmtil(s,a;\pi,c) \Dhels{\Phatm(s,a;c)}{\Pstar(s,a;c)} \\
  &\leq e^2(H+1)(SA + \etam\cdot\reghatm[m-1](\pi,c)) \frac{\ocmm(s,a;\pol,c) }{SA + \etam\cdot\reghatm[m-1](\pol,c)} \cdot  \Dhels{\Phatm(s,a;c)}{\Pstar(s,a;c)}\\
  &\leq e^2(H+1)(SA + \etam\cdot\reghatm[m-1](\pi,c)) \metapol(c,\pol)\ocmm(s,a;\pol,c) \Dhels{\Pstar(s,a;c)}{\Phatm(s,a;c)}\\
  &\leq e^2(H+1)(SA + \etam\cdot\reghatm[m-1](\pi,c))\bE_{\pi\sim  \metapol(c)}\Enmpic[\Mstar]\left[
    \Dhels{\widehat{P}^h_{m}(s_1^h,a_1^h;c)}{\Pstar(s_1^h,a_1^h;c)}
    \right]\\
  &\leq  \frac{e^2H(H+1)}{\gammam}(SA + \etam\cdot\reghatm[m-1](\pi,c))\\
  &=  \frac{1}{(H+1)\zetam}(SA + \etam\cdot\reghatm[m-1](\pi,c)),
\end{align*}
where the last equality is by the definition of $\zetam$.
In all, we have shown that 
\begin{align*}
  \ocmtil(s,a;\pi,c)\prn*{\Phatm(s'|s,a;c) - \prn*{1+ \frac{1}{H}}\Pstar(s'|s,a;c)} &\leq  \frac{1}{(H+1)\zetam}(SA + \etam\cdot\reghatm[m-1](\pi,c)).
\end{align*}
Now we prove by contradiction, if for any $(s,a,s')\in\Ttil(c)$, the reverse inequality is true, i.e., $\Phatm(s'|s,a;c) > \prn*{1+ \frac{1}{H}}^2\Pstar(s'|s,a;c)$. Then for any $\pi$,
\begin{align*}
\frac{1}{H+1}   \ocmtil(s,a;\pi,c)\Phatm(s'|s,a;c)&<  \ocmtil(s,a;\pi,c)\prn*{\Phatm(s'|s,a;c) - \prn*{1+ \frac{1}{H}}\Pstar(s'|s,a;c)} \\
&\leq \frac{1}{(H+1)\zetam}(SA + \etam\cdot\reghatm[m-1](\pi,c)).
\end{align*}
This contradicts the definition of $\Ttil(c)$.

For the other direction of \cref{ineq:d-to-P}, we prove for all $h\in [H]$,
\begin{align}
\hspace{1in}&\hspace{-1in}\begin{cases}
  \Phatm(s'|s,a;c) \leq \prn*{1+ \frac{1}{H}}^2 \Pstar(s'|s,a;c) & \forall (s,a,s')\in\Ttil,\\
  \ocmtil(s,a;\pi,c) \leq \prn*{1+\frac{1}{H}}^{2(h-1)} \ocmm(s,a;\pi,c) & \forall \pi,s,a.
\end{cases}\notag\\
&\tto \quad \ocmtil[m][h+1](s,a;\pi,c) \leq \prn*{1+\frac{1}{H}}^{2h} \ocmm[m][h+1](s,a;\pi,c), ~~\forall \pi,s,a.\label{ineq:P-to-d}
\end{align}
This direction is straightforward since
\begin{align*}
  \ocmtil[m][h+1](s,a;\pi,c) &= \sum\limits_{(s',a',s)\in \Ttil} \ocmtil(s,a;\pi,c) \Phatm(s'|s,a;c) \pi^{h+1}(s,a;c) \\
  &\leq \prn*{1+\frac{1}{H}}^{2h} \sum\limits_{(s',a',s)\in \Ttil}  \ocmm(s,a;\pi,c) \Pstar(s'|s,a;c)\pi_{h+1}(s,a;c) \\
  &= \prn*{1+\frac{1}{H}}^{2h}  \ocmm[m][h+1](s,a;\pi,c).
\end{align*}
With the two derivations \cref{ineq:d-to-P} and \cref{ineq:P-to-d}, along with the fact that the initial argument of \cref{ineq:d-to-P} holds by definition for $h=1$. Thus, we conclude the proof.

\end{proof}

\begin{proof}[\pfref{lem:coverage-bound}]
    For any $m,\pi,c,h,s,a$, by AM-GM, we have
    \begin{align}
        \begin{split}
            \revindent[.7]\ocmtil (s,a;\pi,c) \Dhel{\Pstar(s,a;c)}{\Phatm(s,a;c)}\\
            &\leq \frac{2e^2H(SA+\eta_m\widehat{\mathrm{reg}}_{m-1}(\pol;c))}{\gammam\widetilde{d}^{h}_{m}(s,a;\pol,c)}\cdot (\widetilde{d}^h_{m}(s,a;\pi,c))^2 \\
            &\quad+ \frac{\gammam\widetilde{d}^{h}_{m}(s,a;\pol;c) }{2e^2H(SA+\eta_m\widehat{\mathrm{reg}}_{m-1}(\pol;c))}\cdot\Dhels{\Pstar (s,a;c)}{\widehat{P}^{h}_{m}(s,a;c)}.\label{ineq:coverage-1}
        \end{split}
    \end{align}
    By the definition of $\pi^{h,s,a}_{m,c}$, we have futher
    \begin{align}
        \begin{split}
            \revindent\frac{2e^2H(SA+\eta_m\widehat{\mathrm{reg}}_{m-1}(\pol;c))}{\gammam\widetilde{d}^{h}_{m}(s,a;\pol,c)}\cdot (\widetilde{d}^h_{m}(s,a;\pi,c))^2 \\
            &\leq \frac{2e^2H(SA+\eta_m\widehat{\mathrm{reg}}_{m-1}(\pi;c))}{\gammam\widetilde{d}^{h}_{m}(s,a;\pi,c)}\cdot (\widetilde{d}^h_{m}(s,a;\pi,c))^2 \\
            &= \frac{2e^2HSA+ 2e^2H \eta_m\widehat{\mathrm{reg}}_{m-1}(\pi;c)}{\gammam} \cdot \widetilde{d}^h_{m}(s,a;\pi,c).\label{ineq:coverage-2}
        \end{split}
    \end{align}
    Recall the choice of  $\eta_m=\gamma_m/(720e^3H^5S^3A^2)$ and $\gammam = \sqrt{\frac{H^6S^4A^3}{\cE_m}}$, we have
    \begin{align}
        \frac{2e^2HSA+ 2e^2H \eta_m\widehat{\mathrm{reg}}_{m-1}(\pi;c)}{\gammam} \leq  2e^2\sqrt{ \frac{\cE_m}{H^4S^2A}}  + \frac{1}{720 H^4S^3A^2}\reghatm(\pi,c). \label{ineq:coverage-3}
    \end{align}
    Also by the definition of $\metapol(c)$, we have
    \begin{align}
        \begin{split}
            \revindent\frac{\gammam\widetilde{d}^{h}_{m}(s,a;\pol;c) }{2e^2H(SA+\eta_m\widehat{\mathrm{reg}}_{m-1}(\pol;c))}\cdot\Dhels{\Pstar (s,a;c)}{\widehat{P}^{h}_{m}(s,a;c)} \\
            &\leq \frac{\gammam}{e^2H}\cdot\metapol(c,\pol) \ocmtil(s,a;\pol,c)\Dhels{\Pstar(s,a;c)}{\Phatm(s,a;c)}. \label{ineq:coverage-4}
        \end{split}
    \end{align}

Now we plug \cref{ineq:coverage-2,ineq:coverage-3,ineq:coverage-4} back into \cref{ineq:coverage-1} to obtain that 
\begin{align*}
    \revindent[.5]\ocmtil (s,a;\pi,c) \Dhel{\Pstar(s,a;c)}{\Phatm(s,a;c)} \\
    &\leq \prn*{2e^2\sqrt{ \frac{\cE_m}{H^4S^2A}}  + \frac{1}{720 H^4S^3A^2}\reghatm[m-1](\pi,c)}\widetilde{d}^h_{m}(s,a;\pi,c) \\
    &\quad + \frac{\gammam}{e^2H}\cdot\metapol(c,\pol) \ocmtil(s,a;\pol,c)\Dhels{\Pstar(s,a;c)}{\Phatm(s,a;c)}.
\end{align*}
Similar bounds can be obtained replacing $\Dhel{\Pstar(s,a;c)}{\Phatm(s,a;c)}$ with $\Dhel{\Rstar(s,a;c)}{\Rhatm(s,a;c)}$. 
\end{proof}

\section{Proofs from \cref{sec:regret-analysis}}
\label{app:regret-analysis}
\begin{proof}[\pfref{lem:per-epoch-guarantee}]
For this, we first apply the local simulation lemma (\cref{local simulation}) in reinforcement learning to relate the divergence of the value functions to the stepwise divergences as the following. Under any context $c$,
\begin{align*}
    \abs*{\widehat{V}_{m}^{1}(\pi_c,c)-\Vstar(\pi_c,c)} \leq \sum_{h,s,a}\widehat{d}^{h}_{m}(s,a;\pi_c,c) \prn*{ \Dhel{\widehat{P}^{h}_{m}(s,a;c)}{\Pstar(s,a;c)}  + \Dhel{\widehat{R}^{h}_{m}(s,a;c)}{\Rstar(s,a;c)}   }.
\end{align*}
Then we can exchange the estimated occupancy measure $\widehat{d}^{h}_{m}(s,a;\pi_c,c)$ by the trusted occupancy measure through \cref{lem:dropped-state-bound}, that is,
\begin{align}
    \revindent\sum\limits_{h,s,a}\widehat{d}^{h}_{m}(s,a;\pi_c,c) \Dhel{\widehat{P}^{h}_{m}(s,a;c)}{\Pstar(s,a;c)} \notag\\
    &\leq \sum\limits_{h,s,a} \widetilde{d}^{h}_{m}(s,a;\pi_c,c) \Dhel{\widehat{P}^{h}_{m}(s,a;c)}{\Pstar(s,a;c)}\notag\\
    &\quad + \sum\limits_{h,s,a} \prn*{ 32e \sqrt{H^4S^2A\cdot \cE_m}  + \frac{1}{90HSA} \widehat{\mathrm{reg}}_{m-1}(\pi;c)}\notag\\
    &\leq \sum\limits_{h,s,a} \widetilde{d}^{h}_{m}(s,a;\pi_c,c) \Dhel{\widehat{P}^{h}_{m}(s,a;c)}{\Pstar(s,a;c)}\notag\\
    &\quad +  32e \sqrt{H^6S^4A^3\cdot \cE_m}  + \frac{1}{90} \widehat{\mathrm{reg}}_{m-1}(\pi_c;c). \label{ineq:per-epoch-sub-1}
\end{align}

Then by coverage guarantee \cref{lem:coverage-bound}, for any $h,s,a$, we can bound
\begin{align}
    \revindent \sum\limits_{h,s,a}\widetilde{d}^{h}_{m}(s,a;\pi_c,c) \Dhel{\widehat{P}^{h}_{m}(s,a;c)}{\Pstar(s,a;c)} \notag\\
    &\leq \sum\limits_{h,s,a} \frac{\gammam}{e^2H}\cdot\metapol(c,\pol) \ocmtil(s,a;\pol,c)\cdot D_{\mathrm{H}}^2\prn{\Pstar(s,a;c),\Phatm(s,a;c)} \notag\\
    &\quad \quad + \sum\limits_{h,s,a} \prn*{2e^2\sqrt{ \frac{\cE_m}{H^4S^2A}}  + \frac{1}{720 H^4S^3A^2}\reghatm[m-1](\pi,c) }\ocmtil (s,a;\pi,c)\notag\\
    &\leq \sum\limits_{h,s,a} \frac{\gammam}{e^2H}\cdot\metapol(c,\pol) \ocmtil(s,a;\pol,c)\cdot D_{\mathrm{H}}^2\prn{\Pstar(s,a;c),\Phatm(s,a;c)}\notag \\
    &\quad \quad + 2e^2\sqrt{ \frac{\cE_m}{H^2A}}  + \frac{1}{720 H^3S^2A^2}\reghatm[m-1](\pi_c,c) . \label{ineq:per-epoch-sub-2}
\end{align}
Suppose the assumption in \cref{lem:bound_P_and_d} is satisfied, then
by \cref{lem:bound_P_and_d}, we have
\begin{align}
    \revindent[.4]\sum\limits_{h,s,a}\frac{\gammam}{e^2H}\cdot\metapol(c,\pol) \ocmtil(s,a;\pol,c)\cdot D_{\mathrm{H}}^2\prn{\Pstar(s,a;c),\Phatm(s,a;c)} \notag\\
    &\leq \sum\limits_{h,s,a}\frac{\gammam}{H}\cdot\metapol(c,\pol) \ocmm(s,a;\pol,c)\cdot D_{\mathrm{H}}^2\prn{\Pstar(s,a;c),\Phatm(s,a;c)} \notag\\
    &\leq \frac{\gammam}{H}\cdot  \sum\limits_{h} \bE_{\pi\sim p_{m}^{h}(c)}\left[\Enmpic[\Mstar][\pi] \left[ \Dhels{\widehat{P}^{h}_{m}(s_1^h,a_1^h;c)}{\Pstar(s_1^h,a_1^h;c)} \right]\right]. \label{ineq:per-epoch-sub-3}
\end{align}

The $\Dhel{\widehat{P}^{h}_{m}(s_1^h,a_1^h;c)}{\Pstar(s_1^h,a_1^h;c)}$ in \cref{ineq:per-epoch-sub-1,ineq:per-epoch-sub-2,ineq:per-epoch-sub-3} can be replaced with  $\Dhel{\widehat{R}^{h}_{m}(s_1^h,a_1^h;c)}{\Rstar(s_1^h,a_1^h;c)}$ as well.
If the assumption in \cref{lem:bound_P_and_d} is not satisfied, then we there exists $j$ such that
\begin{align*}
    \bE_{\pi\sim  \metapol(c)}\Enmpic[\Mstar]\left[
        \Dhels{\widehat{P}^h_{m}(s_1^j,a_1^j;c)}{\Pstar(s_1^j,a_1^j;c)}
        \right]> H/\gamma_m.
\end{align*}
This implies 
\begin{align*}
    \abs*{\widehat{V}_{m}^{1}(\pi_c,c)-\Vstar(\pi_c,c)} \leq 1 \leq \frac{\gamma_m}{H} \cdot \bE_{\pi\sim  \metapol(c)}\Enmpic[\Mstar]\left[
        \Dhels{\widehat{P}^h_{m}(s_1^j,a_1^j;c)}{\Pstar(s_1^j,a_1^j;c)}
        \right].
\end{align*}
Thus, altogether, no matter the assumption in \cref{lem:bound_P_and_d} is satisfied or not, we have shown
\begin{align*}
    \abs*{\widehat{V}_{m}^{1}(\pi_c,c)-\Vstar(\pi_c,c)}&\leq 76e \sqrt{H^6S^4A^3\cdot \cE_m}  + \frac{1}{20} \widehat{\mathrm{reg}}_{m-1}(\pi_c;c) \\
    &\quad\quad +  \frac{\gammam}{H}\cdot  \sum\limits_{h} \bE_{\pi\sim p_{m}^{h}(c)}\left[\Enmpic[\Mstar][\pi] \left[ \Dhels{\widehat{P}^{h}_{m}(s_1^h,a_1^h;c)}{\Pstar(s_1^h,a_1^h;c)} \right]\right] \\
    &\quad\quad + \frac{\gammam}{H}\cdot  \sum\limits_{h} \bE_{\pi\sim p_{m}^{h}(c)}\left[\Enmpic[\Mstar][\pi] \left[ \Dhels{\widehat{R}^{h}_{m}(s_1^h,a_1^h;c)}{\Rstar(s_1^h,a_1^h;c)} \right]\right]\\
    &\leq 76e \sqrt{H^6S^4A^3\cdot \cE_m}  + \frac{1}{20} \widehat{\mathrm{reg}}_{m-1}(\pi_c;c) + \gammam \cdot \cE_m \\
    &\leq 77e \sqrt{H^6S^4A^3\cdot \cE_m}  + \frac{1}{20} \widehat{\mathrm{reg}}_{m-1}(\pi_c;c),
\end{align*}
where the second inequality is by the offline density estimation bound from \cref{lem:offDE-bound}.
Taking expectation on $c$, we have
\begin{align*}
    \En_{c\sim \cD}\brk*{\abs*{\widehat{V}_{m}^{1}(\pi_c,c)-\Vstar(\pi_c,c)}} \leq \frac{1}{20} \En \brk*{\reghatm[m-1](\pi_c,c)} + 77e\sqrt{ H^6S^4A^3 \cdot \cE_m}. 
\end{align*}
\end{proof}

\begin{lemma}[\cite{simchi2020bypassing}]
    \label{lem:yunzong}
    Let $\veps_1,\dots,\veps_N$ be $N$ positive values. Suppose for any $m>0$ and an arbitrary policy set $\set{\pi_c}_{c\in \cC}$, we have:
    \begin{align}
        \bE_{c\sim \cD}[|\widehat{V}^1_{m}(\pi_c,c)-\Vstar(\pi_c,c)|]\le \frac{1}{20}\bE_{c\sim \cD}[\reghatm[m-1](\pi_c,c)]+\varepsilon_{m}. \label{ineq:inf-oec-regret-comparison-1}
    \end{align}
    Then for any $m>0$,
    \begin{align}
        \bE_{c\sim \cD}[\reg(\pi_c,c)]\le\frac{10}{9}\cdot \bE_{c\sim \cD}[\reghatm(\pi_c,c)]+\delta_m,  \label{ineq:inf-oec-regret-comparison-2} \\
        \bE_{c\sim \cD}[\reghatm(\pi_c,c)]\le\frac{9}{8}\cdot \bE_{c\sim \cD}[\reg(\pi_c,c)]+\delta_m, \label{ineq:inf-oec-regret-comparison-3} 
    \end{align}
where $\delta_1=2\varepsilon_1+\frac{1}{10}$ and $\delta_m=\frac{1}{9}\delta_{m-1}+\frac{20}{9}\varepsilon_m$ for any $m\geq 2$.    
\end{lemma}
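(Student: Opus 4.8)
The plan is to prove the two bounds \cref{ineq:inf-oec-regret-comparison-2} and \cref{ineq:inf-oec-regret-comparison-3} simultaneously by induction on $m$. The two structural facts that drive everything are that the true optimal policy $\pistar$ has $\reg(\pistar,c)=0$ and that the estimated optimal policy $\widehat{\pi}_{m,c}=\argmax_\pi\widehat{V}^1_m(\pi,c)$ has $\reghatm(\widehat{\pi}_{m,c},c)=0$. I will also use that the initialization $\widehat{V}^1_0\equiv 0$ forces $\reghatm[0]\equiv 0$. For the base case $m=1$, the hypothesis \cref{ineq:inf-oec-regret-comparison-1} collapses to $\bE_{c\sim\cD}[|\widehat{V}^1_1(\pi_c,c)-\Vstar(\pi_c,c)|]\le\veps_1$; decomposing either regret through the crossed quantity $\widehat{V}^1_1$ and bounding each value gap by $\veps_1$ yields both bounds with $\delta_1=2\veps_1+\tfrac1{10}$ (the $\tfrac1{10}$ is pure slack).

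For the inductive step I would start from the telescoping identity, valid for any policy set $\{\pi_c\}$,
$$\reg(\pi_c,c)=\big[\Vstar(c)-\widehat{V}^1_m(c)\big]+\reghatm(\pi_c,c)+\big[\widehat{V}^1_m(\pi_c,c)-\Vstar(\pi_c,c)\big],$$
together with the mirror-image identity expressing $\reghatm(\pi_c,c)$ through $\reg(\pi_c,c)$. The last bracket is controlled directly by \cref{ineq:inf-oec-regret-comparison-1}. For the first bracket I use optimality: $\widehat{V}^1_m(c)\ge\widehat{V}^1_m(\pistar,c)$ and $\Vstar(c)=\Vstar(\pistar,c)$ give $\Vstar(c)-\widehat{V}^1_m(c)\le|\widehat{V}^1_m(\pistar,c)-\Vstar(\pistar,c)|$, so applying \cref{ineq:inf-oec-regret-comparison-1} to the policy set $\{\pistar\}$ bounds it. Taking expectations and replacing the two resulting $\reghatm[m-1]$ terms by true regret through the inductive hypothesis \cref{ineq:inf-oec-regret-comparison-3} at index $m-1$---and using $\reg(\pistar,c)=0$ to annihilate the optimal-policy term---produces a self-referential inequality $\tfrac{151}{160}\bE_c[\reg(\pi_c,c)]\le\bE_c[\reghatm(\pi_c,c)]+\tfrac1{10}\delta_{m-1}+2\veps_m$. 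Solving it gives \cref{ineq:inf-oec-regret-comparison-2}, and the numeric checks $\tfrac{160}{151}<\tfrac{10}{9}$, $\tfrac{16}{151}\le\tfrac19$, $\tfrac{320}{151}\le\tfrac{20}{9}$ show that the stated $\delta_m=\tfrac19\delta_{m-1}+\tfrac{20}{9}\veps_m$ absorbs the remainder with slack.

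The hard part will be the second bound \cref{ineq:inf-oec-regret-comparison-3}, where the mirror decomposition produces the term $\bE_c[\reghatm[m-1](\widehat{\pi}_{m,c},c)]$; applying \cref{ineq:inf-oec-regret-comparison-3} at $m-1$ to it leaves the \emph{true} regret $\bE_c[\reg(\widehat{\pi}_{m,c},c)]$ of the estimated optimal policy, which is nonzero and cannot be bounded by the induction alone. The resolution is an ordering trick within the step: prove \cref{ineq:inf-oec-regret-comparison-2} at $m$ first, then apply it to the policy set $\{\widehat{\pi}_{m,c}\}$; because $\reghatm(\widehat{\pi}_{m,c},c)=0$, this yields $\bE_c[\reg(\widehat{\pi}_{m,c},c)]\le\delta_m$. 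Substituting back introduces a $\tfrac9{160}\delta_m$ term on the right-hand side, which I move to the left (leaving the factor $\tfrac{151}{160}$) before solving for $\delta_m$; the checks $\tfrac{169}{160}<\tfrac98$ on the $\bE_c[\reg]$ coefficient and the same constant bound as above confirm \cref{ineq:inf-oec-regret-comparison-3} with the stated recursion. This within-step sequencing---\cref{ineq:inf-oec-regret-comparison-2} before \cref{ineq:inf-oec-regret-comparison-3}, each fed by \cref{ineq:inf-oec-regret-comparison-3} at $m-1$---is precisely what closes the two-sided induction.
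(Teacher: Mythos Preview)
Your proposal is correct and follows essentially the same approach as the paper: the same telescoping decomposition of $\reg-\reghatm$ through the crossed value function, the same use of optimality to control the $\Vstar(c)-\Vhatm(c)$ bracket via $\pistar$, the same inductive feed of \cref{ineq:inf-oec-regret-comparison-3} at $m-1$ into both bounds, and the same within-step ordering (prove \cref{ineq:inf-oec-regret-comparison-2} first, then instantiate it at $\{\widehat{\pi}_{m,c}\}$ to close \cref{ineq:inf-oec-regret-comparison-3}). One small difference: for the base case $m=1$ the paper does not use $\reghatm[0]\equiv 0$---it uses only the weaker fact $\reghatm[0]\le 1$ (from the reward normalization $\sum_h r^h\in[0,1]$), which is what actually generates the $\tfrac1{10}$ in $\delta_1$; your argument imports the algorithm's specific initialization into what is otherwise a self-contained lemma, so the $\tfrac1{10}$ appears as slack for you but is tight under the paper's weaker hypothesis.
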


\begin{proof}[\pfref{lem:yunzong}]
We present the proof here for completeness.
By \cref{ineq:inf-oec-regret-comparison-1}, we have that for all $m\geq 0$ and $\pi_c$,
\begin{align}
    \revindent[.5]\bE_{c\sim \cD}[\reg(\pi_c,c)] - \bE_{c\sim \cD}[\reghatm(\pi_c,c)] \notag \\
    &= \bE_{c\sim \cD}[\Vstar(\pistar ,c) -  \Vhatm(\pistar, c)] + \bE_{c\sim \cD}[\Vhatm(\pistar, c) -\Vhatm(\pimhat, c) ] \notag\\
    &\quad + \bE_{c\sim \cD}[\Vhatm(\pi_c, c) -\Vstar(\pi_c, c) ] \notag\\
    &\leq \frac{1}{20}\bE_{c\sim \cD}[\reghatm[m-1](\pistar,c)]+ \frac{1}{20}\bE_{c\sim \cD}[\reghatm[m-1](\pi_c,c)]  +2\varepsilon_{m}. \label{ineq:inf-oec-regret-comparison-4}
\end{align}
Symmetrically, we have
\begin{align}
    \revindent[.5]\bE_{c\sim \cD}[\reghatm(\pi_c,c)] - \bE_{c\sim \cD}[\reg(\pi_c,c)]  \notag\\
    &\leq \frac{1}{20}\bE_{c\sim \cD}[\reghatm[m-1](\pimhat,c)]+ \frac{1}{20}\bE_{c\sim \cD}[\reghatm[m-1](\pi_c,c)]  +2\varepsilon_{m}.\label{ineq:inf-oec-regret-comparison-5}
\end{align}
Now we inductively show for $m=1,2,\dots$ that \cref{ineq:inf-oec-regret-comparison-2} and \cref{ineq:inf-oec-regret-comparison-3} hold.
For $m=1$, since $\bE_{c\sim \cD}[\reg(\pi_c,c)],\bE_{c\sim \cD}[\reghatm[1](\pi_c,c)]\leq 1$ for all  $\pi$, then we have from \cref{ineq:inf-oec-regret-comparison-4} and \cref{ineq:inf-oec-regret-comparison-5}
\begin{align*}
    &\bE_{c\sim \cD}[\reg(\pi_c,c)] \leq  \bE_{c\sim \cD}[\reghatm[1](\pi_c,c)] + 2\veps_1 +  \frac{1}{10} \mathrm{~~and~~} \\
    &\bE_{c\sim \cD}[\reghatm[1](\pi_c,c)]\leq  \bE_{c\sim \cD}[\reg(\pi_c,c)] + 2\veps_1 +  \frac{1}{10}.
\end{align*}
Hence we have shown \cref{ineq:inf-oec-regret-comparison-2} and \cref{ineq:inf-oec-regret-comparison-3} for $m=1$ and  $\delta_1 = 2\veps_1 + \frac{1}{10}$.
Now, suppose \cref{ineq:inf-oec-regret-comparison-2} and \cref{ineq:inf-oec-regret-comparison-3} holds for all $1,2,\dots,m-1$. Plugging \cref{ineq:inf-oec-regret-comparison-3} for $m-1$ into the right hand side of \cref{ineq:inf-oec-regret-comparison-4}, we have
\begin{align*}
    \revindent[.5]\bE_{c\sim \cD}[\reg(\pi_c,c)] -  \bE_{c\sim \cD}[\reghatm(\pi_c,c)] \\
    &\leq  \frac{1}{20} \bE_{c\sim \cD}[\reghatm[m-1](\pistar,c)] + \frac{1}{20} \bE_{c\sim \cD}[\reghatm[m-1](\pi_c,c)]  + 2\veps_{m}\\
    &\leq \frac{1}{20} \prn*{\frac{9}{8} \bE_{c\sim \cD}[\reg(\pistar,c)]+ \delta_{m-1}} + \frac{1}{20} \prn*{\frac{9}{8} \bE_{c\sim \cD}[\reg(\pi_c,c)] + \delta_{m-1}} + 2\veps_{m}\\
    &= \frac{1}{20} \delta_{m-1} + \frac{1}{20}\prn*{ \frac{9}{8} \bE_{c\sim \cD}[\reg(\pi_c,c)] + \delta_{m-1} } +  2\veps_m,
\end{align*}
where the last equality is by $\bE_{c\sim \cD}[\reg(\pistar,c)]=0$.
Thus, reorganizing the terms, we have
\begin{align*}
    \bE_{c\sim \cD}[\reg(\pi_c,c)] \leq \frac{10}{9} \bE_{c\sim \cD}[\reghatm(\pi_c,c)] + \delta_m,
\end{align*}
where $\delta_m = \frac{1}{9}\delta_{m-1} + \frac{20}{9} \veps_m$. Thus we have shown \cref{ineq:inf-oec-regret-comparison-2} for $m$. Then plugging \cref{ineq:inf-oec-regret-comparison-3} for $m-1$ into the right hand side of \cref{ineq:inf-oec-regret-comparison-5}, we have
\begin{align*}
    \revindent[.5]\bE_{c\sim \cD}[\reghatm(\pi_c,c)] - \bE_{c\sim \cD}[\reg(\pi_c,c)] \\
    &\leq \frac{1}{20} \bE_{c\sim \cD}[\reghatm[m-1](\pimhat,c)] + \frac{1}{20}  \bE_{c\sim \cD}[\reghatm[m-1](\pi_c,c)]  + 2\veps_{m}\\
    &\leq \frac{1}{20} \prn*{\frac{9}{8} \bE_{c\sim \cD}[\reg(\pimhat,c)] + \delta_{m-1}} + \frac{1}{20} \prn*{\frac{9}{8} \bE_{c\sim \cD}[\reg(\pi_c,c)] + \delta_{m-1}}   + 2\veps_m.
\end{align*}
Furthermore, by \cref{ineq:inf-oec-regret-comparison-2} for $m$ we have $\bE_{c\sim \cD}[\reg(\pimhat,c)] \leq \frac{10}{9} \bE_{c\sim \cD}[\reghat(\pimhat,c)] + \delta_m = \delta_m$. Plug this in the aforementioned inequality, we have
\begin{align*}
    \bE_{c\sim \cD}[\reghatm(\pi_c,c)] - \bE_{c\sim \cD}[\reg(\pi_c,c)] &\leq\frac{1}{20} \prn*{\frac{9}{8} \delta_m + \delta_{m-1}} + \frac{1}{20} \prn*{\frac{9}{8} \bE_{c\sim \cD}[\reg(\pi_c,c)] + \delta_{m-1}}   + 2\veps_m.
\end{align*}
Reorganizing the terms, in turn gives
\begin{align*}
    \bE_{c\sim \cD}[\reghatm(\pi_c,c)] \leq \frac{9}{8} \bE_{c\sim \cD}[\reg(\pi_c,c)] + \delta_m,
\end{align*}
where recall $\delta_m = \frac{1}{9} \delta_{m-1} + \frac{20}{9}\veps_m$. This proves  \cref{ineq:inf-oec-regret-comparison-3} for $m$, which completes our induction.

\end{proof}

\begin{proof}[\pfref{tabular main thm}]
    Let $\cE_0=1$. Then we have by \cref{lem:per-epoch-guarantee}, \cref{ineq:inf-oec-regret-comparison-1} holds with $\veps_m = L\sqrt{H^6S^4A^3\cE_m}$ for $L>0$ large enough for all $m>0$. 
    Combining \cref{lem:yunzong,lem:igw-regret-bound}, we have
    \begin{align*}
     \En_{c\sim \cD, \pi\sim \metapol(c)} [\reg(\pi,c)] &\lesssim  \bE_{c\sim \cD, \pi\sim \metapol(c)}[\reghatm[m-1](\pi_c,c)]  + \sum\limits_{i=0}^{m-1} \frac{1}{9^{m-i}}\sqrt{ H^6S^4A^3 \cdot \cE_i}
    \end{align*}
    Then by \cref{lem:igw-regret-bound}, we have further
    \begin{align*}
        \bE_{c\sim \cD, \pi\sim \metapol(c)}[\reghatm[m-1](\pi_c,c)]&\lesssim \sqrt{ H^6S^4A^3 \cdot \cE_m}.
    \end{align*}
    Hence, we have
    \begin{align*}
        \En_{c\sim \cD, \pi\sim \metapol(c)} [\reg(\pi,c)] &\lesssim  \sum\limits_{i=0}^{m} \frac{1}{9^{m-i}}\sqrt{ H^6S^4A^3 \cdot \cE_i}.
    \end{align*}
    In all, we can obtain the following regret bound with probability at least $1-\delta$,
    \begin{align*}
        \sum\nolimits_{t=1}^{T}
        \Reg(T) &= \sum\limits_{t=1}^{T} \bE_{c_t\sim \cD, \pi_t\sim \metapol[m(t)][h(t)](c_t)}[\mathrm{reg}(\pi_t,c_t)]\\
        &= \sum\limits_{h,m}\En_{c\sim \cD, \pi\sim \metapol(c)} [\reg(\pi,c)] \cdot(\tau_m- \tau_{m-1}) \\
        &\lesssim \sum\limits_{m=1}^{N}(\tau_m - \tau_{m-1})\cdot  \sqrt{ H^8S^4A^3 \cdot \cE_m} ,
    \end{align*}
    where the last step takes a union bound on the offline oracle guarantees.
\end{proof}

\begin{proof}[\pfref{thm:loglogT}]
Without loss of generality, assume that $T/H > 1000$. Since we are choosing $\tau_m = 2(T/H)^{1-2^{-m}}$, we first note that since $\tau_m\leq T/H$, we have $(T/H)^{2^{-m}}\geq 2$. Then we have
\begin{align*}
\tau_{m} - \tau_{m-1}  &= 2(T/H)^{1-2^{-m}} - 2(T/H)^{1-2^{1-m}}\\
&= 2(T/H)^{1-2^{1-m}} (T^{2^{-m}}-1)\\
&\geq 2(T/H)^{1-2^{1-m}} = \tau_{m-1}.
\end{align*}
This implies $\tau_m - \tau_{m-1} \geq \frac{1}{2}\tau_m$ for $m\in [N]$.
Then by \cref{tabular main thm}, we have with probability at least $1-\delta$,
\begin{align*}
    \sum\nolimits_{t=1}^{T}
        \Reg(T) &\leq \sum\limits_{m=1}^{N}(\tau_m - \tau_{m-1})\cdot  \sqrt{ H^8S^4A^3 \cdot \cE_m}\\
        &\lesssim   \sqrt{H^8S^4A^3 \log (|\cM|N/\delta)} \cdot \sum\limits_{m=2}^{N}  \frac{\tau_m - \tau_{m-1}}{\sqrt{\tau_{m-1} - \tau_{m-2}}}   + \tau_1 \sqrt{H^8S^4A^3}\\
        &\lesssim \sqrt{H^8S^4A^3 \log (|\cM|N/\delta)} \cdot \sum\limits_{m=2}^{N}  \frac{\tau_m }{\sqrt{\tau_{m-1}}} + \sqrt{H^7S^4A^3 T}\\
        &\lesssim \sqrt{H^8S^4A^3 \log (|\cM|N/\delta)} N \sqrt{T/H} = \sqrt{ H^7S^4A^3 T\cdot \log(|\cM|\log \log T/\delta)\log\log T},
\end{align*}
where the last inequality follows from 
\begin{align*}
\frac{\tau_m}{\sqrt{\tau_{m-1}}} \leq \frac{\tau_m}{\sqrt{(\tau_{m-1}+1)/2}}  \leq \frac{2(T/H)^{1-2^{-m}}}{(T/H)^{\frac{1}{2}(1-2^{1-m})}} = 2\sqrt{T/H}
\end{align*}
and $N = O(\log\log T)$. So the number of oracle calls is $O(H\log\log T)$.

\end{proof}

\begin{proof}[\pfref{thm:logT}]
Since we are choosing $\tau_m  = 2^m$, we have $N= O(\log T)$. So the number of oracle calls is $O(H\log T)$.
Meanwhile, by \cref{tabular main thm}, we have with probability at least $1-\delta$,
\begin{align*}
    \sum\nolimits_{t=1}^{T}
        \Reg(T) &\leq \sum\limits_{m=1}^{N}(\tau_m - \tau_{m-1})\cdot  \sqrt{ H^8S^4A^3 \cdot \cE_m}\\
        &\lesssim  \sqrt{H^8S^4A^3 \log (|\cM|N/\delta)} \prn*{1 + \sum\limits_{m=3}^{N} \frac{2^{m-1}}{\sqrt{2^{m-2}}} } \\
        &\lesssim  \sqrt{H^8S^4A^3 \log (|\cM|N/\delta)} \cdot 2^{N/2} = \sqrt{ H^7S^4A^3 T\cdot \log(|\cM| \log T/\delta)}.
\end{align*}

\end{proof}

\section{Proofs from \cref{sec:rf-cmdp}}
\label{app:rf-cmdp}
\begin{proof}[\pfref{thm:rfrl-main}]
The reward distributions $\widehat{R}_m$ are $0$ for $m=0,1$ since $\wh{R}_0$ are set to be $0$ and the model class $\cM$ consists of models with constantly $0$ reward. The regret estimations $\reghatm[m-1]$ are all $0$ for $m=1,2$. Thus we apply
the component-wise guarantees 
(\cref{lem:coverage-bound,lem:offDE-bound,lem:dropped-state-bound,lem:bound_P_and_d}) to $\wh{P}_2, \wh{R}_2$ where $\wh{R}_2=0$.  Concretely, from \cref{lem:offDE-bound} we have with probability at least $1-\delta$,
\begin{align}
    \label{ineq:offline-guarantee-rf}
    \bE_{c\sim \cD,\pi\sim \metapol[2](c)}
        \left[\Enmpic[\Mstar] \left[ 
        \Dhels{\widehat{P}^{h}_{2}(s_1^h,a_1^h;c)}{\Pstar(s_1^h,a_1^h;c)}
    \right]\right]
    \lesssim  \cE_2,
\end{align}
with the offline $\MLE$ oracle.
From \cref{lem:dropped-state-bound}, we have for any $\pi,c,h,s,a$,
\begin{align}
    \widehat{d}_{2}^{h}(s,a;\pi,c) - \widetilde{d}_{2}^{h}(s,a;\pi,c)\le 32e \sqrt{H^4S^2A\cdot \cE_2} .\label{ineq:dropped-state-rf}
\end{align}
From \cref{lem:bound_P_and_d}, we have if for a context $c$ and all $h\in[H]$,
$$\bE_{\pi\sim  \metapol[2](c)}\Enmpic[\Mstar]\left[
    \Dhels{\widehat{P}^h_{2}(s_1^h,a_1^h;c)}{\Pstar(s_1^h,a_1^h;c)}
    \right]\le H/\gamma_2.$$
Then for the same $c$ and all $\pi,h,s,a$, we have
\begin{align}
\widetilde{d}^{h}_{2}(s,a;\pi,c)\le \prn*{1+1/H}^{2(h-1)}d^h_{2}(s,a;\pi,c). \label{ineq:bound-d-rf}
\end{align}
Finally, from \cref{lem:coverage-bound}, we have for any $\pi,c,h,s,a$, 
\begin{align}
    \label{ineq:coverage-rf}
    \begin{split}
        \revindent[.7]\ocmtil[2] (s,a;\pi,c)\cdot D_{\mathrm{H}}\prn{\Pstar(s,a;c),\Phatm[2](s,a;c)}
        \\
        &\leq \frac{\gammam[2]}{e^2H}\cdot\metapol[2](c,\pol[2]) \ocmtil[2](s,a;\pol[2],c)\cdot D_{\mathrm{H}}^2\prn{\Pstar(s,a;c),\Phatm[2](s,a;c)} \\
        &\quad +     2e^2\sqrt{ \frac{\cE_2}{H^4S^2A}} \cdot  \ocmtil[2] (s,a;\pi,c).
    \end{split} 
\end{align}
Now, we are ready to prove our claim following similar derivations from the proof of \cref{lem:per-epoch-guarantee}. Concretely, we have for any set of policies $\set{\pi_c}_{c\in\cC}$ and any context $c$, by local simulation lemma (\cref{local simulation})
\begin{align*}
    \abs*{\Vstar(\pi_c,c,R) - \Vhatm[2](\pi_c,c,R)}  \leq  \sum_{h,s,a}\widehat{d}^{h}_{2}(s,a;\pi_c,c) \Dhel{\widehat{P}^{h}_{2}(s,a;c)}{\Pstar(s,a;c)}.
\end{align*} 
Then by \cref{ineq:dropped-state-rf}, we have
\begin{align}
    \revindent[.5]\sum\limits_{h,s,a}\widehat{d}^{h}_{2}(s,a;\pi_c,c) \Dhel{\widehat{P}^{h}_{2}(s,a;c)}{\Pstar(s,a;c)} \notag\\
    &\leq \sum\limits_{h,s,a} \widetilde{d}^{h}_{2}(s,a;\pi_c,c) \Dhel{\widehat{P}^{h}_{2}(s,a;c)}{\Pstar(s,a;c)} +  32e \sqrt{H^6S^4A^3\cdot \cE_2} \label{ineq:per-epoch-sub-1}
\end{align}

\textbf{Case I:} If for a context $c$ and all $h\in[H]$,
$$\bE_{\pi\sim  \metapol[2](c)}\Enmpic[\Mstar]\left[
    \Dhels{\widehat{P}^h_{2}(s_1^h,a_1^h;c)}{\Pstar(s_1^h,a_1^h;c)}
    \right]\le H/\gamma_2.$$
Then by \cref{ineq:coverage-rf,ineq:bound-d-rf},
\begin{align*}
    \revindent\sum\limits_{h,s,a} \widetilde{d}^{h}_{2}(s,a;\pi_c,c) \Dhel{\widehat{P}^{h}_{2}(s,a;c)}{\Pstar(s,a;c)}\\
    &\leq \frac{\gammam[2]}{e^2H}\cdot\metapol[2](c,\pol[2]) \ocmtil[2](s,a;\pol[2],c)\cdot D_{\mathrm{H}}^2\prn{\Pstar(s,a;c),\Phatm[2](s,a;c)} \\
    &\quad +     2e^2\sqrt{ \frac{\cE_2}{H^4S^2A}}  \cdot \ocmtil[2] (s,a;\pi_c,c)\\
    &\leq \frac{\gammam[2]}{H}\cdot\metapol[2](c,\pol[2]) \ocmm[2](s,a;\pol[2],c)\cdot D_{\mathrm{H}}^2\prn{\Pstar(s,a;c),\Phatm[2](s,a;c)} \\
    &\quad +     2e^2\sqrt{ \frac{\cE_2}{H^4S^2A}}  \cdot \ocmtil[2] (s,a;\pi_c,c)
\end{align*}
Thus we have
\begin{align*}
    \revindent[.5]\abs*{\Vstar(\pi_c,c,R) - \Vhatm[2](\pi_c,c,R)} \\
    &\lesssim \sqrt{H^6S^4A^3\cdot \cE_2} + \frac{\gammam[2]}{H}\cdot \sum\limits_{h=1}^{H}\bE_{\pi\sim  \metapol[2](c)}\Enmpic[\Mstar]\left[\Dhels{\widehat{P}^h_{2}(s_1^h,a_1^h;c)}{\Pstar(s_1^h,a_1^h;c)}
    \right].
\end{align*}

\textbf{Case II}: If for a context $c$ there exists $j\in[H]$ such that 
$$\bE_{\pi\sim  \metapol[2][h](c)}\Enmpic[\Mstar]\left[
    \Dhels{\widehat{P}^j_{2}(s_1^j,a_1^j;c)}{\Pstar(s_1^j,a_1^j;c)}
    \right]> H/\gamma_2.$$
    Then 
    \begin{align*}
        \revindent\abs*{\Vstar(\pi_c,c,R) - \Vhatm[2](\pi_c,c,R)} \\
        &\leq 1 \leq  \frac{\gammam[2]}{H}\cdot \sum\limits_{h=1}^{H}\bE_{\pi\sim  \metapol[2](c)}\Enmpic[\Mstar]\left[\Dhels{\widehat{P}^h_{2}(s_1^h,a_1^h;c)}{\Pstar(s_1^h,a_1^h;c)}
        \right].
    \end{align*}
    Combine Case I and II, we have
    \begin{align*}
        \revindent[.5]\abs*{\Vstar(\pi_c,c,R) - \Vhatm[2](\pi_c,c,R)} \\
        &\lesssim \sqrt{H^6S^4A^3\cdot \cE_2} + \frac{\gammam[2]}{H}\cdot \sum\limits_{h=1}^{H}\bE_{\pi\sim  \metapol[2](c)}\Enmpic[\Mstar]\left[\Dhels{\widehat{P}^h_{2}(s_1^h,a_1^h;c)}{\Pstar(s_1^h,a_1^h;c)}
        \right].
    \end{align*}
    Then take expectation with respect to $c$ together with \cref{ineq:offline-guarantee-rf}  we obtain
\begin{align*}
    \En_{c\sim \cD}\brk*{\abs*{\Vstar(\pi_c,c,R) - \Vhatm[2](\pi_c,c,R)} } &\lesssim   \sqrt{H^6S^4A^3\cdot \cE_2} + \gammam[2] \cE_2 \\
    &\lesssim \sqrt{\frac{H^7S^4A^3 \log (|\cM|/\delta)}{T}}\\
    &\lesssim \veps,
\end{align*}
where the second inequality is by $\cE_2 =H \log (|\cM|/\delta)/T$ and the last inequality holds when
\begin{align*}
    T \geq \Omega\prn*{  \frac{H^7S^4A^3\log (|\cM|/\delta)}{\veps^2}  }.
\end{align*}
Thus, the reward-free objective  \cref{ineq:rfrl-objective} is satisfied with probability at least $1-\delta$ with 
\begin{align*}
T \leq O\prn*{  \frac{H^7S^4A^3\log (|\cM|/\delta)}{\veps^2}  }.
\end{align*}
\end{proof}

\begin{lemma}[Lemma D.2 of \citet{jin2020reward}]
\label{lem:jin-rf}
    Fix $\veps\leq 1$, $\delta\leq 1/2$, $H,A\geq 2$, and suppose that $S\geq L\log A$ for a universal constant $L$. Then consider the trivial context space $\cC = \set{c_0}$, there exists a model class $\cM = \set{M_j}_{j\in \cJ}$, with $|\cS|=S$, $|\cA|=A$, $|\cJ|\leq e^{SA}$, and horizon $H$ and a distribution $\mu$ on $\cM$ such that any algorithm $\alg$ that $(\veps/12,\delta)$-learns the class $\cM$ satisfies
    \begin{align*}
        \En_{M\sim \mu}\En_{M,\alg} [T] \gtrsim \frac{SA}{\veps^2}, 
    \end{align*}
    where $T$ is the number of trajectories required by the algorithm $\alg$ to achieve $(\veps/12,\delta)$ accuracy and $\En_{M,\alg}[\cdot]$ is the expectation under the interaction between the algorithm $\alg$ and model $M$. 
\end{lemma}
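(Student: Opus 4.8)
The plan is to exhibit an explicit family of $|\cJ| = 2^{SA} \le e^{SA}$ hard MDPs (the trivial context $\cC = \{c_0\}$ reduces the setting to an ordinary non-stationary $H$-horizon MDP), indexed by a hidden sign vector $\theta \in \{\pm1\}^{\cS\times\cA}$, and to show that reward-free $(\veps/12,\delta)$-learning forces recovery of every coordinate of $\theta$, each of which is an independent $\Omega(1/\veps^2)$-sample testing problem whose samples cannot be shared across coordinates. Concretely, I would fix a $\theta$-\emph{independent} transition structure that routes each episode to a single ``leaf'' state-action pair $(s,a)$: a fixed (deterministic-address or fixed-stochastic) navigation phase that is identical across the whole family, hence carries no information about $\theta$, followed by a single stochastic test at the leaf. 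At the leaf pair $(s,a)$ the dynamics send the learner to a ``good'' absorbing state $g$ with probability $1/2 + \veps\,\theta_{s,a}$ and to a ``bad'' absorbing state $b$ with probability $1/2 - \veps\,\theta_{s,a}$, with $g,b$ absorbing thereafter. Thus a single episode observes exactly one draw of $\mathrm{Ber}(1/2 + \veps\theta_{s,a})$ for the one pair it reaches and nothing else about $\theta$; a standard count (reusing the $S$ state labels across the non-stationary layers) shows the gadget fits the budget $|\cS|=S,|\cA|=A$ precisely when $S \ge L\log A$, and $2^{SA}\le e^{SA}$ gives $|\cJ|\le e^{SA}$.

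Next I would reduce learning to bit recovery. For each leaf pair $(s,a)$ place reward $1$ on $g$ and $0$ elsewhere, giving a reward distribution $R_{s,a}$; under $R_{s,a}$ the value of the policy $\pi_{s,a}$ that navigates to $s$ and plays $a$ equals $1/2 + \veps\,\theta_{s,a}$, so the two hypotheses $\theta_{s,a}=\pm1$ differ by $2\veps$. Since the single output $\Mhat$ must satisfy the learning criterion \cref{ineq:rfrl-objective} simultaneously for \emph{all} rewards, on the success event (probability $\ge 1-\delta\ge 1/2$) the plug-in estimator $\mathrm{sign}\prn{\Vm[\Mhat](\pi_{s,a},c_0,R_{s,a}) - 1/2}$ recovers $\theta_{s,a}$ correctly for every $(s,a)$ at once, because $\veps/12 < \veps$ separates the $\veps/12$-balls around $1/2\pm\veps$.

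Finally I would lower-bound the sample cost coordinatewise. Let $N_{s,a}$ be the (random) number of episodes whose leaf pair is $(s,a)$; every episode tests exactly one bit, so $\sum_{s,a} N_{s,a} = T$, and $N_{s,a}$ is precisely the number of $\mathrm{Ber}(1/2+\veps\theta_{s,a})$ samples available for bit $(s,a)$. Fixing all coordinates but $(s,a)$ and flipping $\theta_{s,a}$ perturbs only that one kernel, so by a divergence-decomposition over the episodic interaction the KL between the two induced trajectory laws is at most $\En[N_{s,a}]\cdot\kl(1/2+\veps,1/2-\veps) \lesssim \En[N_{s,a}]\,\veps^2$. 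If the learner identifies $\theta_{s,a}$ with probability $\ge 1/2$ under both hypotheses, Le Cam's two-point inequality forces this KL to be $\gtrsim 1$, hence $\En[N_{s,a}] \gtrsim 1/\veps^2$. Taking $\mu$ uniform on $\{\pm1\}^{\cS\times\cA}$ and summing over the $SA$ coordinates, $\En_{M\sim\mu}\En_{M,\alg}[T] = \sum_{s,a}\En[N_{s,a}] \gtrsim SA/\veps^2$, which is the claim.

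The main obstacle is the information-theoretic counting under adaptivity: since the policy in each episode is chosen adaptively from past data, $N_{s,a}$ is algorithm- and $\theta$-dependent, so the KL bound must be obtained through a careful divergence decomposition over the random episode count rather than a naive product-measure computation, and one must verify that the per-coordinate success probabilities transfer between the two neighboring hypotheses (delicate because flipping one coordinate slightly perturbs the whole trajectory law). The remaining bookkeeping — packing the $\theta$-independent navigation into exactly $S$ states and $A$ actions under $S\ge L\log A$ and confirming the parameter count $|\cJ|=2^{SA}\le e^{SA}$ — is routine, and matches the construction of \citet{jin2020reward}.
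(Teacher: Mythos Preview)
The paper does not prove this statement: it is quoted as Lemma~D.2 of \citet{jin2020reward} and used as a black box in the proof of \cref{rf-lower}, so there is no in-paper argument to compare against. Your sketch is a correct reconstruction of the standard hard-instance argument behind the cited result --- a $\theta$-independent routing phase delivering each episode to a single leaf pair $(s,a)$, one $\mathrm{Ber}(1/2+\veps\,\theta_{s,a})$ observation there, reward functions $R_{s,a}$ that convert $(\veps/12)$-accurate value estimation into sign recovery of $\theta_{s,a}$, and a coordinatewise two-point bound summed over the $SA$ coordinates via the episodic divergence decomposition.

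One small technical caveat: at the endpoint $\delta=1/2$ your Le Cam step is vacuous, since it only yields $\mathrm{TV}\ge 1-2\delta$ and hence $\mathrm{KL}\gtrsim(1-2\delta)^2$, not $\mathrm{KL}\gtrsim 1$. Either read the hypothesis as $\delta$ bounded strictly below $1/2$ by a universal constant, or allow the hidden constant in the conclusion to depend on $1-2\delta$. The two obstacles you single out --- the adaptive divergence decomposition controlling the random counts $N_{s,a}$, and the state-budget packing under $S\ge L\log A$ --- are precisely where the substantive work sits in \citet{jin2020reward}, and your outline handles them the same way.
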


\begin{proof}[Proof of \cref{rf-lower}].
Let $ n = \log K /(SA)$, then we consider the context space $\cC = \set{c_1,...,c_{n}}$ and the i.i.d. distribution $\cD$ on the context space being uniform. 
Denote the model class obtained from \cref{lem:jin-rf} by $\wb{\cM} = \set{\Mbar_j(c_0)}_{j\in \cJ} $.
Let $J = \set{j_1,...,j_n} \in \cJ^n$ be an index. 
Then we consider the model class $\cM = \set{M_J}_{J\in \cJ^n}$, where $M_J(c_i) = \Mbar_{j_i}(c_0)$, that is, the model class $\cM$ is on each context $c_i\in \cC$ an independent $\wb{\cM}$. 
We first have that the size of the model class is bounded by $|\cM| = |\cJ|^n \leq e^{nSA} \leq K$. Then we have for any algorithm $\alg$ that $(\veps/24,\delta)$-learns the class $\cM$, it must have $(\veps/12,\delta)$-learns the class $\cM(c_i) = \set{M_J(c_i)}_{J\in \cJ^n} = \set{\Mbar_j(c_0)}_{j\in \cJ}$ for at least half of the contexts $c_i\in \cC$ by Markov's inequality. This, in turn, combined with \cref{lem:jin-rf} gives that there exists a distribution $\mu$ on the model class $\cM$ such that 
\begin{align*}
    \En_{M\sim \mu}\En_{M,\alg} [T] \gtrsim \frac{n}{2}\frac{SA}{\veps^2} \gtrsim \frac{\log K}{\veps^2} , 
\end{align*}
where $T$ is the number of trajectories required by the algorithm $\alg$ to achieve $(\veps/12,\delta)$ accuracy and $\En_{M,\alg}[\cdot]$ is the expectation under the interaction between the algorithm $\alg$ and model $M$. Thus concludes our proof.

\end{proof}

\section{Computation}\label{app:compute}
\newcommand{\rbar}{\bar{r}}
\newcommand{\vrbar}{\bar{\mathbf{r}}}
\newcommand{\sbar}{\bar{s}}
\newcommand{\abar}{\bar{a}}
\renewcommand{\hbar}{\bar{h}}
\newcommand{\vdtil}{\wt{\mathbf{d}}}
\newcommand{\vdhat}{\wh{\mathbf{d}}}

In this section, we show that for any $m,c,h,s,a$, the policy  $\pi^{h,s,a}_{m,c}$ (\pref{line:policy-covering}) and the trusted transitions $\widetilde{\mathcal{T}}^h_m(c)$ (\cref{trusted-occu}) can be computed efficiently through linear programming. For simplicity, we fix a context $c$ throughout this section and omit its dependence. We first note that if  $\widetilde{\mathcal{T}}^{j}_m$ for $j\leq h-1$ and $\pi^{h,s,a}_{m}$ are computed, then  $\widetilde{\mathcal{T}}^h_m$ can be computed in $\poly(HSA)$ time. To see this, for any $(s,a,s')$, we have by the definition of $\pi_m^{h,s,a}$ that
\begin{align*}
    \max_{\pi}\frac{\widetilde{d}^h_{m}(s,a;\pi)}{SA+\etam\cdot \reghat_{m-1}(\pi)} =\frac{\widetilde{d}^h_{m}(s,a;\pi_m^{h,s,a})}{SA+\etam\cdot \reghat_{m-1}(\pi_m^{h,s,a})}.
\end{align*}
Then $(s,a,s')\in \widetilde{\mathcal{T}}^h_m$ iff 
\begin{align*}
    \frac{\widetilde{d}^h_{m}(s,a;\pi_m^{h,s,a})}{SA+\etam\cdot \reghat_{m-1}(\pi_m^{h,s,a})}\widehat{P}^{h}_{m}(s'|s,a) \geq  \frac{1}{\zetam},
\end{align*}
where the left-hand side can be computed given $\widetilde{\mathcal{T}}^{j}_m$ for $j\leq h-1$ in $\poly(HSA)$ time. Thus we only need to show how to compute $\pi_m^{h,s,a}$. Assume we want to compute $\pi_{m}^{\hbar,\sbar,\abar}$ for $\hbar\in [H],\sbar\in \cS,\abar\in\cA$, given  $\widetilde{\mathcal{T}}^{h}_m$ for $h\leq \hbar-1$. Let $\rbar_m^h(s,a) = \En_{r^h\sim \Rhatm[m-1]}[r^h]$ be the mean reward for model $\Mhatm[m-1]$ for $h,s,a$. Let $\vrbar_m = (\rbar_m^h(s,a))_{h,s,a}$ be the vector of mean rewards for the model  $\Mhatm[m-1]$. We consider the following linear fractional program with the following decision variables: 
\begin{align*}
    \vd_m^{\hbar} =
    \begin{pmatrix}
        \vdtil_m^{\hbar}\\
        \vdhat_{m-1}
    \end{pmatrix},
    ~~\text{where}~\vdtil^{\hbar}_{m}=(\widetilde{d}_{h,s,a,m})_{h,s,a\in[\Bar{h}]\times\mathcal{S}\times\mathcal{A}
},\ \vdhat_{m-1}=(\widehat{d}_{h,s,a,m-1})_{h,s,a\in[H]\times\mathcal{S}\times\mathcal{A}}.
\end{align*}
We will use the variable $\vdtil^{\hbar}_{m}$ to simulate the trusted occupancy measures and 
$\vdhat_{m-1}$ to simulate the estimated occupancy measures from $\widehat{M}_{m-1}$ by linear constraints. Concretely, consider the following linear fractional program:
\begin{align*}
\begin{split}
        \max_{\vd_m^{\hbar}}&~~\frac{\widetilde{d}_{\hbar,\sbar,\abar,m}}{SA+\eta_m(\widehat{V}^1_{m-1}-\langle \vdhat_{m-1},\vrbar_{m-1}\rangle )},\\
\mathrm{subject\ to:}&~~
\begin{cases}
         \widehat{d}_{h,s,a,m-1}\ge 0, &\forall\ h,s,a\in [H]\times\cS\times\cA, \\
     \sum_{a}\widehat{d}_{1,s,a,m-1}= \mathbbm{1}(s=s^1),&\forall s\in \cS,\\
     \sum_{s,a}\widehat{d}_{h,s,a,m-1}\widehat{P}^h_{m-1}(s'|s,a)=\sum_{a}\widehat{d}_{h+1,s',a,m-1}, &\forall\ h,s'\in [H]\times\cS,\\
     \widetilde{d}_{h,s,a,m}\ge 0,&\forall\ h,s,a\in [\Bar{h}]\times\mathcal{S}\times\mathcal{A}\\
\sum_{a}\widetilde{d}_{1,s,a,m}=\mathbbm{1}(s=s^1),&\forall s\in \cS,\\
\sum_{s,a,s'\in\widetilde{\mathcal{T}}^{h-1}_{m}}\widetilde{d}_{h-1,s,a,m}\widehat{P}_{m}^{h-1}(s'|s,a)=\sum_{a}\widetilde{d}_{h,s',a,m} &\forall\ h\le\Bar{h},s'\in\mathcal{S}.
\end{cases}
\end{split}
\end{align*}
This is a linear fractional program of $HSA+\Bar{h}SA$ decision variables with $HSA+HS+\Bar{h}SA+\Bar{h}S$ constraints.  It is clear from the linear constraints that this program simulates the MDPs $\Mhatm[m-1]$ and $\Mhatm$, and the program obtains the right objective.
Then, for this linear fractional program, we apply the Charnes-Cooper transformation \cite{charnes1962programming} to transform it into a linear program. After the transformation, we can apply existing tools for solving linear programs (e.g., \citet{lee2019solving}) to solve for $\vdtil_m^{\hbar}$ which encodes the policy $\pi_m^{\hbar,\sbar,\abar}$ in $\poly(HSA)$ time.

\end{document}